\documentclass{article} % For LaTeX2e
\usepackage{iclr2027_conference,times}
\usepackage{titletoc}
\usepackage{amsmath,amsfonts,bm}

\newcommand{\newterm}[1]{{\bf #1}}

\def\Secref#1{Section~\ref{#1}}
\def\eqref#1{equation~\ref{#1}}
\def\Eqref#1{Equation~\ref{#1}}
\def\1{\bm{1}}

\DeclareMathAlphabet{\mathsfit}{\encodingdefault}{\sfdefault}{m}{sl}
\SetMathAlphabet{\mathsfit}{bold}{\encodingdefault}{\sfdefault}{bx}{n}

\newcommand{\E}{\mathbb{E}}

\usepackage{hyperref}
\usepackage{url}
\usepackage{comment}
\usepackage{amssymb}
\usepackage{mathrsfs}
\usepackage{amsthm}
\usepackage{enumitem}   % [label=(\roman*)] lists in the theory appendix
\usepackage{graphicx}
\usepackage{float}
\usepackage{verbatim}
\usepackage{booktabs}
\usepackage{amsmath}
\usepackage{amsfonts}
\usepackage{nicefrac}
\usepackage{microtype}
\usepackage{xcolor}
\usepackage{colortbl}
\usepackage{xspace}
\usepackage{multirow}
\usepackage{cleveref}
\usepackage{bm}

\definecolor{bestC}{RGB}{255,220,220}
\definecolor{secondC}{RGB}{255,235,204}
\definecolor{thirdC}{RGB}{255,249,204}
\newcommand{\fst}[1]{\cellcolor{bestC}\textbf{#1}}
\newcommand{\snd}[1]{\cellcolor{secondC}#1}
\newcommand{\trd}[1]{\cellcolor{thirdC}#1}

\newtheorem{theorem}{Theorem}
\newtheorem{proposition}[theorem]{Proposition}

\newtheorem{corollary}[theorem]{Corollary}

\newtheorem{assumption}{Assumption}

\newcommand{\method}{\textsc{CorrFlow}\xspace}

\newcommand{\heading}[1]{\vspace{2pt}\noindent\textbf{#1.}}

\title{\method: Self-Correcting Flow Matching}

\author{Hangwei Zhang$^{1}$, Nathan Rouill\'{e}$^{1,2}$, Chuanxia Zheng$^{1}$ \\
$^{1}$PVG, Nanyang Technological University \\
$^{2}$T\'{e}l\'{e}com Paris, Institut Polytechnique de Paris
}

\iclrfinalcopy % Uncomment for camera-ready version, but NOT for submission.
\begin{document}

\maketitle
% \iclrfinalcopy would otherwise label this arXiv version "Published as a conference paper at ICLR 2027".
\lhead{Preprint}

\begin{abstract}

% problem
Flow-matching models generate samples through iterative velocity predictions,
but typically do not revisit these predictions using the intermediate
representations that produced them.
% solution
We introduce \method, a self-correcting flow-matching framework
that reuses this information through a lightweight refiner.
% key observation and mechanism
Under rectified-flow interpolation, the error of the one-step clean
prediction equals the realized velocity residual scaled by the noise level.
Motivated by this identity, the refiner learns to predict residual
corrections from the base model's clean prediction, conditioning,
noise level and hidden states from the same forward pass.
The correction is added to the base velocity before each sampling update,
so subsequent predictions are made from the corrected state.
% efficiency
This requires no additional base-model evaluations
and only one lightweight refiner evaluation per step.
% theoretical support
Our theory gives conditions under which feature reuse outperforms
raw-input affine predictors, refiner training outperforms continued
base-model training at matched compute, and deployed correction lowers
velocity error under its own sampler law.
% empirical evidence
Experiments on class-conditional image generation,
text-to-video generation and image-to-video generation
show consistent improvements over the corresponding base models,
including gains in visual quality and physical consistency.
Project Page: \url{https://fogradio.github.io/corrflow-project}.

% People learn from mistakes by revisiting a wrong answer together with the question and reasoning that produced it.
% Flow-matching generative models discard precisely this information.
% At every sampling step, the model produces a velocity estimate with a residual error, yet that error is immediately lost.
% We show that this error is directly recoverable.
% In rectified flow, the error of the one-step clean prediction is exactly proportional to the realized velocity residual, with the noise level determining the scale.
% This identity is model-agnostic and motivates \method. We attach a lightweight refiner, about one tenth the size of the base diffusion transformer, that predicts the residual left by the base model from its prediction, text condition, noise level and hidden states from the same forward pass.
% The predicted residual corrects the base velocity inside the sampling loop, so each correction also improves the state encountered at subsequent steps.
% Our theoretical analyses establish that reusing base-model features can outperform a strictly richer class of raw-input predictors and that, under a matched compute budget, training the refiner is more effective than continued training of the base model. 
% Consistent improvements across multiple image and video generation benchmarks validate the effectiveness and generality of \method.
\end{abstract}

\section{Introduction}
\label{sec:intro}

% \begin{quote}
% \itshape
% ``Learning without thought is labor lost;
% thought without learning is perilous.''
% \par
% \end{quote}
% \vspace{-1em}
% \begin{flushright}
%     --- Confucius, \textit{Analects} 2.15, {\small Translated by James Legge}
% \end{flushright}
Flow-matching models~\citep{lipman2023flow,liu2022flow} generate samples by learning a velocity field that transports noise toward the data distribution.
Their simple regression objective has shown strong results in
image~\citep{esser2024scaling},
video~\citep{wan2025wan},
3D~\citep{xiang2024structured}
and robotic action~\citep{black2024pi_0}.
Yet generated content still exhibits structural errors, semantic mismatches and physical inconsistencies~\citep{zheng2025vbench,sun2025t2v,motamed2026generative,zhang2025boosting}.
Recent approaches improve generation quality through 
large-scale training~\citep{esser2024scaling},
representation alignment~\citep{yu2025repa},
self-supervised feature learning~\citep{chefer2026selfflow}
and reward-based optimization~\citep{yuan2026inference}.
They focus on improving the model's ability by learning only from the final answer, which is limited.
This limitation can be attributed,
in part,
to the fact that the model must \emph{predict a single velocity to determine the next sample state,
without any opportunity to review and revise its own predictions.}

% A regular float remains intact when the author block changes the first-page layout.
\begin{figure}[!htbp]
    \centering
    \includegraphics[width=0.8\linewidth]{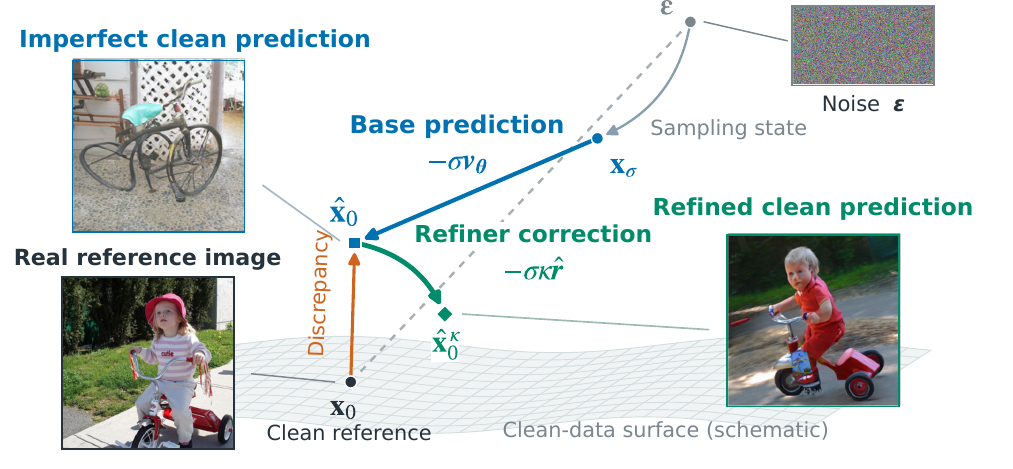}
    \caption{\textbf{\method learns from its own imperfect predictions.}
    The base model's clean prediction (blue) deviates from the clean reference
    (orange arrow).
    % Using this prediction, generation conditions and base-model
    % hidden states, the refiner estimates the velocity residual $\hat{\bm{r}}$.
    The resulting correction shifts the prediction closer to the clean reference by
    $-\sigma\kappa\hat{\bm{r}}$ (green arrow), where $\sigma$ is the noise level
    and $\kappa$ controls the correction strength.
    The initial noise is obtained by adding Gaussian noise to the clean reference.
    %Here, for simplicity, we show a single-step sampling, but the same principle applies to multi-step generation.
    }
    \label{fig:mistakeflow_intuition}
\end{figure}

We therefore exploit the property that learning should be ``\emph{refine-in-loop}'',
in the sense that if a student makes a mistake,
e.g., in an examination,
they should revisit the answer they wrote, the question they were asked, and the reasoning they followed.
By examining these pieces of context together,
the student can trace a mistake and determine how to revise it,
rather than struggling with a final answer~\citep{madaan2023self,zou2026transformer}.
Inspired by the success of \emph{self-refinement} in large language models (LLMs)~\citep{madaan2023self,shinn2023reflexion},
we apply this principle to flow matching.
A flow-based generator already computes analogous information at each sampling step:
an estimate of the clean sample$\rightarrow$answer,
the generation conditions$\rightarrow$question
and intermediate hidden states$\rightarrow$internal state.
The challenge is to define a correction target that can be supervised during training and predicted from these quantities without access to a clean reference during generation.

In this paper,
we introduce \emph{\method},
a self-correcting flow-matching framework that attaches a lightweight \emph{refiner} to the base generator.
Our \emph{core insight} is that the model's one-step clean prediction provides a concrete object for the review,
with a direct mathematical connection to the correction we seek~\citep{chen2022analog}.
Under rectified-flow interpolation,
the error between this prediction and the clean sample is exactly \emph{realized velocity residual} (\cref{eq:residual}) scaled by the noise level.
This identity connects a mistake in the predicted sample to a correction in the velocity field.
We first theoretically prove the relationship between the realized velocity residual and the clean prediction (\cref{sec:method_prelim}).
In practice, the refiner is a transformer that predicts the velocity residual from three sources of information:
  \emph{i) the current answer}, given by the one-step clean prediction,
  \emph{ii) the circumstances}, given by the generation condition and noise level,
  and \emph{iii) the internal state}, given by hidden states from the base model~\citep{epstein2023diffusion}.
  During joint training,
  clean samples and their sampled noise provide residual targets for the refiner,
  while the base model and refiner optimize separate objectives.
  During generation,
  the predicted residual adjusts the velocity used by the sampler before each state update.
  All refiner inputs reuse the same base-model forward pass,
  and each correction requires one additional refiner evaluation.

\begin{comment}
In practice,
a \emph{lightweight add-on transformer} is trained to predict the \emph{velocity residual}.
It takes as input with three parts of information available for review:
\emph{i) the current answer}, given by the one-step clean prediction,
\emph{ii) the circumstances}, given by the generation condition and noise level
and \emph{iii) the internal state}, given by hidden states from the base model~\citep{epstein2023diffusion},
and outputs a velocity residual that corrects the base model's prediction.
Specifically,
during joint training,
clean samples and their sampled noise provide residual targets for the refiner,
while the base model and refiner are each trained on their own objective under a shared training budget.
During generation,
the predicted residual adjusts the velocity used by the sampler.
The next prediction is made from the updated state,
so review and revision become part of the generation process itself.
All refiner inputs reuse the same base-model forward pass,
and each correction adds a lightweight refiner evaluation.
This design gives the additional network a focused task:
learn to revise the base model's current answer using the work already done to produce it.
\end{comment}

We empirically apply \method to a wide range of applications,
including image generation,
text-to-video generation
and image-to-video generation.
\Cref{fig:mistakeflow_intuition} illustrates how this review shifts the model's clean prediction through a correction of its velocity on image generation.
We find that training a refiner improves the base model's performance on aesthetic quality and physics consistency of generated content.
%Moreover, we also show that the refiner's effectiveness is robust to the base model's architecture and training data.
The strong performance across multiple tasks and datasets shows that velocity residual correction is generic.
%and we expect it to be applicable to other flow-based generation tasks as well. 
Our contributions are summarized as follows:
 \begin{itemize}
     \item We introduce \method, a self-correcting flow-matching framework
     that learns velocity residuals from the base model's current prediction,
  and applies the resulting corrections within the sampling loop.     \item We theoretically establish conditions under which reusing base-model
representations outperforms raw-input affine predictors, refiner training
achieves greater residual-risk reduction than continued base-model
training at matched compute, and deployed correction lowers velocity
error under its own sampler-induced state law.
     \item We evaluate \method on image and video generation benchmarks,
     demonstrating improvements in visual quality and physics consistency.
     %Controlled ablations assess the roles of the clean prediction, conditioning information and hidden-state fusion in the refiner.
 \end{itemize}

\section{Related Works}
\label{sec:related_works}

% \subsection{Improving Flow-Matching Generative Models}
% \label{sec:related_improving_flow}

\heading{Improving Flow-Matching Generative Models}
Many approaches have been proposed to improve flow-matching generative models.
For representation learning, REPA~\citep{yu2025repa} aligns hidden states with pretrained visual features,
while iREPA~\citep{singh2026irepa} emphasizes their spatial structure.
VA-VAE~\citep{yao2025vavae} aligns VAE latents with visual features,
REPA-E~\citep{leng2025repae} jointly tunes the VAE and diffusion transformer through alignment
and RAE~\citep{zheng2026rae} generates directly in pretrained representation spaces.
Other methods reuse features available within the generative model: SRA and LayerSync~\citep{jiang2026sra,haghighi2026layersync} use deeper features to guide shallower representations, and SRA$^2$~\citep{wang2026sra2} draws on existing VAE features.
Reward-based methods Flow-GRPO and DanceGRPO~\citep{liu2025flowgrpo,xue2025dancegrpo} use stochastic sampling for online reinforcement learning,
while AWM~\citep{xue2026awm} and DiffusionNFT~\citep{zheng2026diffusionnft} incorporate rewards into flow-matching objectives through advantage weighting and paired generations, respectively.
\method takes a complementary approach: it trains a \emph{lightweight added-on refiner} to predict \emph{velocity residuals} from the base model's current prediction and hidden states.
These residuals directly correct the velocity within the sampling loop,
without requiring external representation teachers or reward models for refiner training.

% \subsection{Physics-Aware Video Generation}
% \label{sec:related_physics_video}
\heading{Physics-Aware Video Generation}
Several recent works have explored physics-aware video generation with flow-matching models.
WISA~\citep{wang2025wisa},
PhysVid~\citep{pathak2026physvid}
and PhyParam~\citep{li2026phyparam} inject hierarchical, temporally local, or instance-level physical conditions.
PhysGen~\citep{liu2024physgen} and PSIVG~\citep{foo2026psivg} use simulated dynamics as guidance.
Representation-based methods transfer or jointly learn motion, geometry, semantics and latent dynamics,
including VideoREPA~\citep{zhang2025videorepa},
PhysAlign~\citep{xiong2026physalign},
VideoJAM~\citep{chefer2025videojam},
PHANTOM~\citep{shen2026phantom},
DreamWorld~\citep{tan2026dreamworld}
and MMPhysVideo~\citep{lin2026mmphysvideo}.
Other approaches use language or feedback, with PhyT2V~\citep{xue2025phyt2v} and DiffPhy~\citep{zhang2025diffphy} introducing inferred physical rules,
PhysMaster~\citep{ji2025physmaster} and PhyGDPO~\citep{cai2025phygdpo} applying pairwise or groupwise preference learning
and WMReward~\citep{yuan2026inference} steering denoising trajectories with a latent-world-model reward at inference.
In contrast, \method learns to correct the base model's own velocity residuals from its predictions and hidden states,
without explicit physical parameters or external physics evaluators.
\section{\method}
\label{sec:methodology}

\begin{figure}[!htbp]
    \centering
    \includegraphics[width=\linewidth]{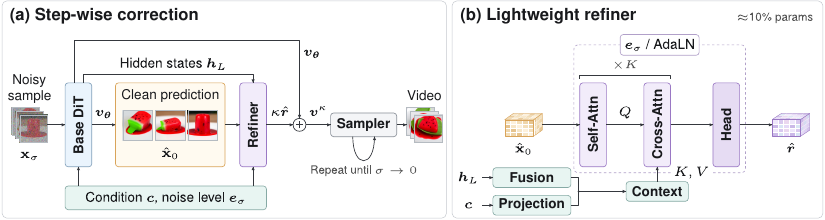}
    \caption{\textbf{\method overview for video generation.}
    (a) At noise level $\sigma$, the refiner predicts a residual $\hat{\bm{r}}$ from the base DiT's clean prediction
    $\hat{\mathbf{x}}_0=\mathbf{x}_\sigma-\sigma\bm{v}_{\bm{\theta}}$, selected hidden states $\bm{h}_L$,
    conditioning tokens $\bm{c}$ and noise-level embedding $\bm{e}_\sigma$.
    Sampling corrects the base velocity as $\bm{v}^\kappa=\bm{v}_{\bm{\theta}}+\kappa\hat{\bm{r}}$ with strength $\kappa$.
    (b) The refiner alternates self-attention over clean-prediction tokens with
    cross-attention to projected conditioning and fused hidden-state tokens.
    Noise conditioning modulates its blocks and sample-space output head,
    while 3D rotary embeddings encode video token coordinates.
    Frames show outputs and cubes denote sample tensors.}
    \label{fig:mistakeflow_pipeline}
\end{figure}

\method augments a base flow-matching model $\bm{f}_{\mathrm{base}}$ with a lightweight \emph{refiner}
that corrects its velocity $\bm{v}_{\bm{\theta}}$ at each sampling step.
We use bold upright symbols for samples, bold italic symbols for other
vectors and tensors and ordinary symbols for scalars.
The refiner takes the one-step clean prediction $\hat{\mathbf{x}}_0=\mathbf{x}_\sigma-\sigma\bm{v}_{\bm{\theta}}$,
hidden states $\{\bm{h}_\ell\}_{\ell \in L}$,
conditioning $\bm{c}$
and noise-level embedding $\bm{e}_\sigma$ to predict a velocity residual:
\begin{equation}
  \hat{\bm{r}}
  = \bm{f}_{\mathrm{refiner}}
    (\hat{\mathbf{x}}_0,\{\bm{h}_\ell\}_{\ell \in L},\bm{c},\bm{e}_\sigma),
  \qquad
  \bm{v}^{\kappa}
  = \bm{v}_{\bm{\theta}}+\kappa\hat{\bm{r}},
\end{equation}
where $\kappa$ controls the correction strength.

% This section makes the idea precise. \Secref{sec:method_prelim} fixes the flow
% matching setting and defines the quantity we call a mistake.
% \Secref{sec:method_inputs} explains what the refiner is allowed to observe and
% why. \Secref{sec:method_arch} describes the refiner network, and
% \Secref{sec:method_training} gives the objective it is trained on and the way it
% is run inside a sampler.
%This paragraph can be compressed or deleted I guess.

\subsection{Preliminaries: flow matching and the residual it leaves behind}
%This preliminaries subsection can also be compressed.
\label{sec:method_prelim}

Let $\mathbf{x}_0 \in \mathbb{R}^D$ be the continuous representation of a clean data sample,
with encoded conditioning $\bm{c}$, such as class or text embeddings.
The representation uses the sample's native coordinates or a learned feature space,
and $D$ denotes its dimension.
Let $\bm{\varepsilon} \sim \mathcal{N}(0, I_D)$ be independent noise in the same space.
For a noise level $\sigma \in [0,1]$ the rectified-flow interpolant and its derivative are
\begin{equation}
  \mathbf{x}_\sigma = (1-\sigma)\, \mathbf{x}_0 + \sigma\, \bm{\varepsilon},
  \qquad
  \frac{\mathrm{d} \mathbf{x}_\sigma}{\mathrm{d}\sigma} = \bm{\varepsilon} - \mathbf{x}_0 .
  \label{eq:interpolant}
\end{equation}
The target flow velocity is $\bm{v} = \bm{\varepsilon} - \mathbf{x}_0$,
and the Bayes velocity field is
$\bm{v}^\star(\mathbf{x},\sigma,\bm{c}) = \E[\,\bm{\varepsilon} - \mathbf{x}_0 \mid \mathbf{x}_\sigma = \mathbf{x}, \bm{c}\,]$.
The base diffusion model $\bm{f}_\text{base}$ is trained by
\begin{equation}
  \mathcal{L}_{\mathrm{base}}(\bm{\theta})
  = \E_{\mathbf{x}_0, \bm{c}, \bm{\varepsilon}, \sigma}\,
    w(\sigma)\, \big\lVert \bm{v}_{\bm{\theta}}(\mathbf{x}_\sigma,\sigma,\bm{c}) - \bm{v} \big\rVert^2 ,
  \label{eq:base_loss}
\end{equation}
with $w(\sigma)$ the noise-level weight of the base recipe,
where $\bm{v}_{\bm{\theta}}$ is the output of $\bm{f}_\text{base}$,
parameterized by $\bm{\theta}$.
Generation integrates the learned velocity field $\mathrm{d}\mathbf{x}/\mathrm{d}\sigma = \bm{v}_{\bm{\theta}}(\mathbf{x},\sigma,\bm{c})$ from
$\sigma = 1$ to $\sigma = 0$ with explicit Euler steps on a shifted grid.
The base model approximates the Bayes velocity field, while each sampled
data--noise pair provides a realized target velocity.
We call the difference between this target and the base prediction the
\newterm{realized velocity residual},
\begin{equation}
  \bm{r} \;=\; \bm{v} - \bm{v}_{\bm{\theta}}(\mathbf{x}_\sigma,\sigma,\bm{c})
       \;=\; (\bm{\varepsilon} - \mathbf{x}_0) - \bm{v}_{\bm{\theta}}(\mathbf{x}_\sigma,\sigma,\bm{c}).
  \label{eq:residual}
\end{equation}
This is the mistake we want the refiner
$\bm{f}_\text{refiner}$
to predict.
The base model's one-step clean prediction is available at generation time.
Its discrepancy from the clean target is proportional to the realized
velocity residual, as the following identity shows.

\paragraph{Clean-prediction identity.}
Taking a single Euler step from $\mathbf{x}_\sigma$ all the way to $\sigma = 0$ gives the
base model's \newterm{one-step clean prediction}, its best current guess at the
finished sample,
\begin{equation}
  \hat{\mathbf{x}}_0(\mathbf{x}_\sigma,\sigma,\bm{c}) \;=\; \mathbf{x}_\sigma - \sigma\, \bm{v}_{\bm{\theta}}(\mathbf{x}_\sigma,\sigma,\bm{c}) .
  \label{eq:zhat}
\end{equation}
Substituting \Eqref{eq:interpolant} into \Eqref{eq:zhat} and using
\Eqref{eq:residual},
\begin{equation}
  \hat{\mathbf{x}}_0 - \mathbf{x}_0
  = \sigma\big( (\bm{\varepsilon} - \mathbf{x}_0) - \bm{v}_{\bm{\theta}} \big)
  = \sigma\, \bm{r} ,
  \qquad\text{equivalently}\qquad
  \bm{r} = \frac{\hat{\mathbf{x}}_0 - \mathbf{x}_0}{\sigma}, \quad \sigma > 0 .
  \label{eq:identity}
\end{equation}
This identity holds for any predictor $\bm{v}_{\bm{\theta}}$ and shows that the clean-prediction error is the velocity residual
$\bm{r}$
scaled by the noise level
$\sigma$.
It therefore motivates using $\hat{\mathbf{x}}_0$ as the refiner's primary input, expressing the prediction in the clean data space.
Although $\hat{\mathbf{x}}_0$ is available at every sampling step, its error depends on the unknown clean target.
The refiner thus learns to estimate the residual from this prediction
$\hat{\mathbf{x}}_0$ and the accompanying features $\bm{h}_L$.

\subsection{Refiner Inputs}
\label{sec:method_inputs}

Focusing on the current answer in isolation is insufficient,
because the same prediction may be correct under one condition and a mistake under another,
or an artifact that must be repaired at high noise may be a legitimate detail at low noise.
Therefore, the refiner is given three views of the moment the mistake was made:
\begin{description}
  \item [The current answer.] The one-step clean prediction $\hat{\mathbf{x}}_0$ of \Eqref{eq:zhat}, tokenized according to the sample structure to form the refiner's trunk tokens.
  This is the base model's current estimate of the clean sample.
  \item [The circumstances.] The conditioning tokens $\bm{c}$ that encode the task requirements, and the noise level $\sigma$, which identifies the current position along the denoising trajectory.
  \item [The internal state.] The base model's hidden states $\{\bm{h}_\ell\}_{\ell \in L}$, where $L$ indexes transformer blocks selected uniformly across the model's depth. These readouts capture different stages of the computation that produced $\bm{v}_{\bm{\theta}}$, and they record how the prediction was assembled rather than only what it was.
\end{description}
Formally,
let $W = (\mathbf{x}_\sigma, \sigma, \bm{c})$
denote the base model's query state,
and let $\mathcal{Z}(W) = (\hat{\mathbf{x}}_0, \{\bm{h}_\ell\}_{\ell \in L}, \sigma, \bm{c})$
be the refiner's information.
The refiner is a network $\bm{f}_\text{refiner}$ with parameter vector $\bm{\phi}$,
written as $\bm{g}_{\bm{\phi}}$, and is trained to predict the residual,
$
  \hat{\bm{r}} \;=\; \bm{g}_{\bm{\phi}}(\mathcal{Z}(W)) \;\approx\; \bm{r} .
$
Every component of $\mathcal{Z}$ is from the base model's conditioning and its existing forward pass.
Constructing them for the refiner requires \emph{no additional base-model evaluation} and \emph{no access to the clean target at generation time}.

\subsection{Refiner architecture}
\label{sec:method_arch}

\emph{\method} is designed to be a \emph{lightweight add-on transformer} to a base flow-matching model,
which estimates the realized velocity residual $\bm{r}$.
It has its own parameters,
totaling approximately $10\%$ of the base model's parameter count.
It carries the aforementioned information as inputs.

\heading{Token embeddings} Given the clean prediction $\hat{\mathbf{x}}_0$ (\cref{eq:zhat}), the refiner $\bm{f}_\text{refiner}$ first tokenizes it according to the sample structure into $N$ tokens and projects them to the refiner width $d$,
obtaining the \underline{answer} tokens $\bm{t}_\text{ans} \in \mathbb{R}^{N \times d}$.
Our image and video implementations use patches for this tokenization.
To keep the refiner and base model on a common clock, the refiner also consumes the base model's own \underline{noise-level} embedding $\bm{e}_\sigma$ at the current noise level $\sigma$.

We also construct the context tokens $\bm{t}_\text{ctx} \in \mathbb{R}^{(N_c + N) \times d}$ from the conditioning $\bm{c}$ and the hidden states $\{\bm{h}_\ell\}_{\ell \in L}$, where $N_c$ is the number of conditioning tokens.
In particular,
a linear projection maps $\bm{c}$ to the \underline{conditioning} tokens
$\bm{t}_\text{c} \in \mathbb{R}^{N_c \times d}$.
At each token position, we concatenate the hidden states from the selected
layers along the feature dimension and project them to the refiner width $d$
using a multilayer perceptron with a GELU activation,
\begin{equation}
  \bm{t}_{\mathrm{fuse}}
  = \mathrm{MLP}\Bigl(
      \mathrm{concat}_{\ell \in L}\,\bm{h}_\ell
    \Bigr)
  \in \mathbb{R}^{N \times d}.
  \label{eq:hfuse}
\end{equation}
This fusion across layers at fixed token positions is a deliberate choice.
It keeps the \underline{hidden-state} tokens at $N$ tokens instead of $|L| \cdot N$, so attention cost does not grow with the number of readouts,
and it lets the refiner learn which depth to trust at each location rather than committing to a fixed average across depths.
To combine the two \underline{context} tokens for the refiner's cross-attention, we concatenate them along the token axis and apply layer normalization,
\begin{equation}
  \bm{t}_\text{ctx}
  = \mathrm{LayerNorm}\Bigl(
      \bigl[\, \bm{t}_\text{c} + \bm{s}_{\mathrm{cond}} \;,\; \bm{t}_\text{fuse} + \bm{s}_{\mathrm{state}} \,\bigr]
    \Bigr)
  \in \mathbb{R}^{(N_c + N) \times d},
  \label{eq:context}
\end{equation}
where $\bm{s}_{\mathrm{cond}}$ and $\bm{s}_{\mathrm{state}}$ are learned segment embeddings that mark the two sources.

\heading{Alternating-Attention blocks}
We build our \emph{lightweight add-on transformer} with a stack of $B$ Alternating Attention Blocks (AABs),
making the refiner focus on the current answer while consulting the context in an alternate manner.
Specifically,
even-indexed blocks apply \emph{self-attention} to the \underline{answer} tokens $\bm{t}_\text{ans}$,
using rotary position embeddings defined on the token coordinates of the sample representation.
Our image and video implementations use two-dimensional spatial coordinates and three-dimensional spatiotemporal coordinates, respectively.
Odd-indexed blocks apply \emph{cross-attention} to the \underline{context} tokens $\bm{t}_\text{ctx}$,
which lets each token incorporate the task requirements and the base model's intermediate representations.
%Note that, we apply positional encoding within answer-token self-attention and introduce no additional positional encoding in cross-attention.
Each attention layer is followed by a feed-forward network.
Following the base model's adaptive-normalization design, each block
uses shift, scale and gate parameters derived from the \underline{noise-level}
embedding $\bm{e}_\sigma$.
Finally, a modulated linear head projects the updated answer tokens
to residual tokens, which are reassembled into the sample's original shape.

%\cxz{please double check this part. Since, our method should suitable for arbitrary data types. Hence, I ignore the video form, and use only data samples.}

\subsection{Training and Inference}
\label{sec:method_training}

\paragraph{Objective.}
We train the base model
$\bm{f}_\text{base}$
and the refiner 
$\bm{f}_\text{refiner}$
on the same minibatch using separate objectives and a single backward pass.
In particular,
we define the refiner's loss by a combination of a magnitude term and a direction term,
\begin{equation}
  \mathcal{L}_{\mathrm{ref}}(\bm{\phi})
  = \E\Big[\;
      \big\lVert \hat{\bm{r}} - \mathrm{sg}[\bm{r}] \big\rVert^2
      \;+\; \lambda
            \big(1 - \cos\big(\hat{\bm{r}},\, \mathrm{sg}[\bm{r}]\big)\big)
    \Big],
  \label{eq:ref_loss}
\end{equation}
where $\mathrm{sg}[\cdot]$ denotes the \texttt{stop-gradient} operator,
$\hat{\bm{r}} = \bm{f}_\text{refiner}(\mathrm{sg}[\mathcal{Z}(W)])$ is the refiner output,
% \cxz{How to understand the stop gradient for the context and velocity residual}
with $\bm{r}$ the realized velocity residual defined in \Eqref{eq:residual}
and $\lambda$ controls the weight of the direction term.
We calculate the cosine similarity by flattening the overall residual field into a single vector per sample.
The two terms play complementary roles.
The squared-error term fits the residual vector, coupling its magnitude and direction.
When residual directions vary across samples with similar refiner inputs, averaging these targets can produce a correction with a smaller magnitude.
The cosine term explicitly rewards alignment with the target direction and complements this vector regression while the correction magnitude is learned.
The total training objective is
\begin{equation}
  \mathcal{L} = \mathcal{L}_{\mathrm{base}}(\bm{\theta})
              + \lambda_{\mathrm{ref}}\, \mathcal{L}_{\mathrm{ref}}(\bm{\phi}),
  \label{eq:total_loss}
\end{equation}
where $\bm{\theta}$ and $\bm{\phi}$ are the base and refiner parameters, respectively.
Here $\lambda_{\mathrm{ref}}$ controls the relative weight of the refiner objective.

%\paragraph{Why the refiner inputs are detached.}
%Every tensor the refiner reads is a stop-gradient copy, so
%$\mathcal{L}_{\mathrm{ref}}$ never reaches $\bm{\theta}$ and
%$\mathcal{L}_{\mathrm{base}}$ never reaches $\bm{\phi}$. This enforces a clean
%contract. The base remains a complete first-order velocity predictor trained by
%\Eqref{eq:base_loss} alone, and the refiner remains a correction that can be
%scaled down or removed at any time without leaving the base in a degraded state.
%Without the stop-gradient the base could quietly delegate part of its prediction
%to the refiner, and the pair would only ever be usable together.

\begin{comment}
\paragraph{Two timescales.}
A single optimizer holds two parameter groups, with learning rates
$\eta_{\bm{\theta}}$ for $\bm{\theta}$ and $\eta_{\bm{\phi}}$ for $\bm{\phi}$, where
$\eta_{\bm{\phi}} > \eta_{\bm{\theta}}$. Detaching the gradient
does not make the two problems independent. As $\bm{\theta}$ drifts, so do
$\hat{\mathbf{x}}_0$, $\bm{h}_L$ and the residual $\bm{r}$ itself, and the refiner regresses onto a
target field that moves under it. The higher refiner learning rate is intended
to help it track changes in this target as the base model evolves.
\end{comment}

\paragraph{Inference.}
The refiner is applied inside the ordinary sampling loop rather than as a
post-process on a finished sample.
At each denoising step the sampler evaluates the base model on the current sample,
captures $\bm{e}_\sigma$ and $\{\bm{h}_\ell\}_{\ell \in L}$ from that
same evaluation,
assembles $\hat{\mathbf{x}}_0$ from \Eqref{eq:zhat}
and runs the refiner once on $\mathcal{Z}(W)$.
The predicted residual is added to the base velocity with a gain $\kappa$,
\begin{equation}
  \bm{v}^{\kappa} = \bm{v}_{\bm{\theta}}(\mathbf{x}_\sigma,\sigma,\bm{c}) + \kappa\, \bm{f}_{\text{refiner}}(\mathcal{Z}(W)) ,
  \label{eq:corrected}
\end{equation}
and the corrected velocity is what the scheduler integrates.
Because $\hat{\mathbf{x}}_0$ and $\{\bm{h}_\ell\}_{\ell \in L}$ are obtained from the state the sampler has actually reached,
each correction influences subsequent refiner inputs.
This adds one refiner forward pass per step without additional
base-model evaluations.

%NEW
\section{Theoretical Analysis}
\label{sec:theory}

We analyze three complementary questions: whether frozen base-model features
can improve residual prediction beyond raw-input predictors, whether refiner
training can outperform continued base-model training at matched budget and
whether a trained correction improves velocity error under the state law it
induces at deployment.

\heading{Shared residual target}
\label{sec:theory_setting}
Let $\mathcal X$ be a finite-dimensional Hilbert sample space,
let $P$ be the law of the query state
$W=(\mathbf{x}_\sigma,\sigma,\bm c)$ and let $\widetilde P$ be the joint law of $(W,\bm r)$ at the fixed base
checkpoint used below. Assume $\bm r\in L^2(\widetilde P;\mathcal X)$ and write
$\bar{\bm r}(W)=\E_{\widetilde P}[\bm r\mid W]$. For any
$\bm f\in L^2(P;\mathcal X)$, define
\[
  \mathcal R_P(\bm f)
  :=\E_{\widetilde P}\|\bm r-\bm f(W)\|^2,
  \qquad
  \mathcal R_{\mathrm{irr}}
  :=\E_{\widetilde P}\|\bm r-\bar{\bm r}(W)\|^2.
\]

\begin{proposition}[Same-target risk decomposition]
\label{prop:pythagoras}
Every such predictor satisfies
\begin{equation}
  \mathcal R_P(\bm f)
  =
  \mathcal R_{\mathrm{irr}}
  +\E_P\|\bm f(W)-\bar{\bm r}(W)\|^2.
  \label{eq:risk_decomposition}
\end{equation}
\end{proposition}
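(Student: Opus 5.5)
The plan is the standard Pythagorean argument for conditional expectations in $L^2$. For a fixed predictor $\bm f\in L^2(P;\mathcal X)$, insert $\bar{\bm r}(W)$ inside the norm:
\[
\bm r-\bm f(W)=\bigl(\bm r-\bar{\bm r}(W)\bigr)+\bigl(\bar{\bm r}(W)-\bm f(W)\bigr).
\]
Expanding the squared Hilbert norm gives three terms. The first is $\|\bm r-\bar{\bm r}(W)\|^2$, whose expectation is $\mathcal R_{\mathrm{irr}}$. The second is $\|\bar{\bm r}(W)-\bm f(W)\|^2$; it depends only on $W$, so its expectation under $\widetilde P$ equals its expectation under the marginal $P$. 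The third is the cross term $2\langle \bm r-\bar{\bm r}(W),\,\bar{\bm r}(W)-\bm f(W)\rangle$, and the whole proof reduces to showing it has zero expectation.

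Before expanding, I will check integrability so that splitting the expectation is legitimate. The conditional-expectation map is an $L^2$ contraction, so $\bar{\bm r}\in L^2(P;\mathcal X)$. Hence both $\bm r-\bar{\bm r}(W)$ and $\bm g(W):=\bar{\bm r}(W)-\bm f(W)$ are square integrable. By Cauchy--Schwarz, first pointwise in $\mathcal X$ and then in $L^2$, the inner product is in $L^1$. This means the three expectations can be taken separately and no $\infty-\infty$ issue arises.

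For the cross term, I condition on $W$ using the tower property: $\E[\langle \bm r-\bar{\bm r}(W),\bm g(W)\rangle]=\E[\E[\langle \bm r-\bar{\bm r}(W),\bm g(W)\rangle\mid W]]$. Since $\bm g(W)$ is $W$-measurable, it can be pulled out of the conditional expectation. This "taking out what is known" step is the one point needing care, because $\mathcal X$ is vector valued. I would handle it by fixing an orthonormal basis $\{e_i\}$ of the finite-dimensional $\mathcal X$ and writing
\[
\langle \bm a,\bm b\rangle=\sum_i \langle \bm a,e_i\rangle\langle \bm b,e_i\rangle .
\]
This reduces the claim to scalar coordinates, where the standard property applies: each coordinate of $\bm r$ is in $L^2$, so each product is in $L^1$. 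The inner conditional expectation then becomes $\langle \E[\bm r\mid W]-\bar{\bm r}(W),\bm g(W)\rangle=0$ almost surely, using the definition of $\bar{\bm r}$.

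Summing the three expectations yields \eqref{eq:risk_decomposition}. I expect no real obstacle beyond this measurability and integrability bookkeeping. Conceptually, the result is the orthogonality of $\bm r-\bar{\bm r}(W)$ to the closed subspace $L^2(\sigma(W);\mathcal X)$.
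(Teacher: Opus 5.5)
Your proposal is correct and follows essentially the same route as the paper. The paper also uses conditional Jensen to place $\bar{\bm r}$ in $L^2$, Cauchy--Schwarz for integrability of the cross term, the orthogonality $\E\langle \bm h(W),\bm r-\bar{\bm r}(W)\rangle=0$ obtained by conditioning on $W$, and an orthonormal-basis reduction of $\mathcal X$; the only difference is that the paper proves the scalar identity and sums over coordinates, whereas you expand the vector norm and use coordinates only for the pull-out step, which is immaterial.
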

The cross term vanishes because
$\E_{\widetilde P}[\bm r-\bar{\bm r}(W)\mid W]=0$.
See equations \ref{eq:p2-app-orthogonality} and \ref{eq:p2-app-risk-geometry}.

\heading{Frozen-feature separation}
\label{sec:theory_p1}
In the scalar Gaussian witness of Appendix~\ref{p1:sec:setup}, let
$c_1,c_2$ be independent uniform signs, put $p=c_1c_2$ and take
$x_0=\delta p+\xi$, where $\delta>0$ and $\xi\sim\mathcal N(0,1)$.
The frozen representation exposes the interaction through $h_L=p$.
With $\tau_\sigma=\sigma/[(1-\sigma)^2+\sigma^2]$, the one-parameter
refiner is $g_b=-b\tau_\sigma p$, whereas $\mathcal A$ contains raw-affine
functions of $(x_\sigma,c_1,c_2)$ with arbitrary $\sigma$-dependent
coefficients.

\begin{theorem}[Frozen-feature advantage]
\label{thm:frozen_feature}
For an explicit $\Delta_{\mathrm{aff}}>0$ defined in
Appendix~\ref{p1:sec:setup},
\[
  \mathcal R_P(g_\delta)=\mathcal R_{\mathrm{irr}},
  \qquad
  \inf_{a\in\mathcal A}\mathcal R_P(a)
  =
  \mathcal R_{\mathrm{irr}}+\Delta_{\mathrm{aff}}.
\]
If $\widehat b$ is $\epsilon_{\mathrm{opt}}$-suboptimal for
$\mathcal J_\lambda(b)
=\mathcal R_P(g_b)
+\lambda\E_{\widetilde P}L_{\cos}(g_b,r)$, with $\lambda=0.1$, then
\begin{equation}
  2\lambda+\epsilon_{\mathrm{opt}}<\Delta_{\mathrm{aff}}
  \quad\Longrightarrow\quad
  \mathcal R_P(g_{\widehat b})
  <
  \inf_{a\in\mathcal A}\mathcal R_P(a).
  \label{eq:frozen_feature_separation}
\end{equation}
\end{theorem}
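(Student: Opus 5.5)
The plan is to compute everything in the Gaussian witness in closed form: first the base checkpoint's velocity, then the residual $r$, then the Bayes residual $\bar r(W)$. After that, Proposition~\ref{prop:pythagoras} reduces both risk claims to $L^2$ distances to $\bar r$. I will assume the fixed base checkpoint of Appendix~\ref{p1:sec:setup} is the velocity predictor that ignores the sign interaction. Concretely, it is the Bayes velocity for $x_0\sim\mathcal N(0,1)$, namely $v_\theta=\frac{2\sigma-1}{s_\sigma}x_\sigma$ with $s_\sigma=(1-\sigma)^2+\sigma^2$. It is linear in $x_\sigma$, hence itself raw-affine, and $r=(\varepsilon-x_0)-v_\theta$.

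\textbf{Step 1 (Bayes residual).} Conditional on $W=(x_\sigma,\sigma,c_1,c_2)$ the product $p=c_1c_2$ is known. Moreover $x_\sigma-(1-\sigma)\delta p=(1-\sigma)\xi+\sigma\varepsilon$ is a centred Gaussian, with variance $s_\sigma$ and covariance $2\sigma-1$ with $\varepsilon-\xi$. Gaussian conditioning then gives
\[
\E[\varepsilon-x_0\mid W]=\tfrac{2\sigma-1}{s_\sigma}\bigl(x_\sigma-(1-\sigma)\delta p\bigr)-\delta p .
\]
Subtracting $v_\theta$ leaves $-\delta p\,[(2\sigma-1)(1-\sigma)+s_\sigma]/s_\sigma$. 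The bracket simplifies to $\sigma$, so $\bar r(W)=-\delta\tau_\sigma p=g_\delta(W)$. Proposition~\ref{prop:pythagoras} then gives $\mathcal R_P(g_\delta)=\mathcal R_{\mathrm{irr}}$ immediately.

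\textbf{Step 2 (affine gap).} Since the coefficients in $\mathcal A$ may depend arbitrarily on $\sigma$, the infimum of $\E_P\|a-\bar r\|^2$ decouples into an $L^2$ projection at each fixed $\sigma$. At fixed $\sigma$ we project $-\delta\tau_\sigma p$ onto $\mathrm{span}\{1,x_\sigma,c_1,c_2\}$.

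The functions $1,c_1,c_2$ are orthogonal to $p$, because $\E[c_ip]=0$ for independent uniform signs. They are also orthogonal to $x_\sigma$, since $\E[c_1x_\sigma]=(1-\sigma)\delta\E[c_1p]=0$ and similarly for $c_2$. So only the projection onto $x_\sigma$ matters. Here $\E[p x_\sigma]=(1-\sigma)\delta$ and $\E x_\sigma^2=(1-\sigma)^2\delta^2+s_\sigma$. The residual of the projection therefore has squared norm
\[
\delta^2\tau_\sigma^2\,\frac{s_\sigma}{(1-\sigma)^2\delta^2+s_\sigma}.
\]
Averaging over the law of $\sigma$ defines $\Delta_{\mathrm{aff}}$. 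It is strictly positive whenever that law puts positive mass on $(0,1]$, which I expect the appendix to assume. The infimum is attained by the projection, so Proposition~\ref{prop:pythagoras} gives $\inf_{a\in\mathcal A}\mathcal R_P(a)=\mathcal R_{\mathrm{irr}}+\Delta_{\mathrm{aff}}$.

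\textbf{Step 3 (optimization transfer).} The cosine loss satisfies $0\le L_{\cos}=1-\cos\le 2$, so
\[
\mathcal R_P(g_b)\le\mathcal J_\lambda(b)\le\mathcal R_P(g_b)+2\lambda \quad\text{for every } b.
\]
Hence
\[
\mathcal R_P(g_{\widehat b})\le\mathcal J_\lambda(\widehat b)\le\inf_b\mathcal J_\lambda+\epsilon_{\mathrm{opt}}\le\mathcal J_\lambda(\delta)+\epsilon_{\mathrm{opt}}\le\mathcal R_{\mathrm{irr}}+2\lambda+\epsilon_{\mathrm{opt}}.
\]
Under the hypothesis of \eqref{eq:frozen_feature_separation}, this is strictly below $\mathcal R_{\mathrm{irr}}+\Delta_{\mathrm{aff}}$, which is the affine infimum by Step 2. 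The specific value $\lambda=0.1$ plays no role beyond this bound.

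\textbf{Main obstacle.} I expect the main difficulty to be Step 2. One must check the orthogonality structure carefully and make sure the $\sigma$-pointwise projection is legitimate, including measurability of the optimal coefficients and the endpoint $\sigma=0$, where $\tau_0=0$. One must also confirm that the appendix's base checkpoint is the one assumed above. If instead the base is some other raw-affine predictor, Step 1 must be redone with that $v_\theta$. The affine class absorbs any raw-affine base anyway, so $\Delta_{\mathrm{aff}}$ would be unchanged as long as $\bar r$ still carries the $-\delta\tau_\sigma p$ interaction term.
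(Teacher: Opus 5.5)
Your proposal is correct and matches the paper's proof essentially step for step: the same centring $x_\sigma-(1-\sigma)\delta p$ and Gaussian regression giving $\bar r=-\delta\tau_\sigma p=g_\delta$, the same fixed-$\sigma$ projection onto the mutually orthogonal design $\{1,c_1,c_2,x_\sigma\}$ whose residual energy integrates to $\Delta_{\mathrm{aff}}$, and the same $0\le L_{\cos}\le 2$ sandwich chain through $\mathcal J_\lambda(\delta)$. Your guessed checkpoint $v_{\bm\theta}=\frac{2\sigma-1}{(1-\sigma)^2+\sigma^2}x_\sigma$ is exactly the one the appendix fixes, and the integrability concern you flag is handled there by integrating the fixed-$\sigma$ square-completion identity against $q$, with a continuous, bounded optimal coefficient; positivity of $\Delta_{\mathrm{aff}}$ comes from $q$ having full support on $[0,1]$.
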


\heading{Matched-resource separation}
\label{sec:theory_p2}
Returning to the finite-dimensional space $\mathcal X$, we compare two runs
from the same checkpoint $\bm v_0$: the refiner correction
$F_{\mathrm r}(\bm\alpha)
=\bm g_{\bm\phi_0+\bm\alpha}(\mathcal Z(W))$
and continued base training
$F_{\mathrm b}(\bm\alpha)
=\bm v_{\bm\theta_0+\bm\alpha}(W)-\bm v_0(W)$, with $F_j(0)=0$.
Under a common ledger $\mathcal C$, arm $j$ takes
$T_j=\lfloor C_{\mathrm{train}}/c_j\rfloor$ gradient steps and returns
$\bm f_{j,\eta}$. Write
$\Delta_j=\sup_{\eta\in\mathcal E_j}
[\mathcal R_P(0)-\mathcal R_P(\bm f_{j,\eta})]$,
$\Phi_j=DF_j(0)$, $H_j=\Phi_j^*\Phi_j$ and
$\bm b_j=\Phi_j^*\bar{\bm r}$.

\begin{theorem}[Finite-ledger refiner separation]
\label{thm:p2-matched-resource}
Under the local Jacobian, angular-gradient, radius and spectral-stability
conditions of Appendix~\ref{p2:app:full-setting}, both armwise optima are
attained and the actual reduction $\Delta_j$ differs from its tangent
approximation $\Delta_j^{\mathrm{lin}}$ by at most an explicit finite error
$e_j$:
\[
  |\Delta_j-\Delta_j^{\mathrm{lin}}|\le e_j,
  \qquad
  \Delta_j^{\mathrm{lin}}
  =
  \sup_{\eta\in\mathcal E_j}
  \bm b_j^*H_j^\dagger
  [I-(I-\eta H_j)^{2T_j}]\bm b_j.
\]
Consequently,
\begin{equation}
  \Delta_{\mathrm r}^{\mathrm{lin}}-e_{\mathrm r}
  >
  \Delta_{\mathrm b}^{\mathrm{lin}}+e_{\mathrm b}
  \Longrightarrow
  \min_{\eta\in\mathcal E_{\mathrm r}}
  \mathcal R_P(\bm f_{\mathrm r,\eta})
  <
  \min_{\eta\in\mathcal E_{\mathrm b}}
  \mathcal R_P(\bm f_{\mathrm b,\eta}).
  \label{eq:p2-actual-order}
\end{equation}
\end{theorem}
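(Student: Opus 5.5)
The argument has three stages: reduce risk differences to excess risk via Proposition~\ref{prop:pythagoras}, solve the linearized gradient-descent dynamics exactly, and control the nonlinear remainder by the stated local conditions.

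\emph{Reduction to excess risk.} By Proposition~\ref{prop:pythagoras}, for either arm
\[
\mathcal R_P(0)-\mathcal R_P(\bm f)=\E_P\|\bar{\bm r}\|^2-\E_P\|\bm f-\bar{\bm r}\|^2
=2\langle \bm f,\bar{\bm r}\rangle_{L^2(P)}-\|\bm f\|_{L^2(P)}^2 .
\]
The irreducible term cancels, so everything is a quadratic functional of $\bm f$ in $L^2(P;\mathcal X)$. Along arm $j$ the risk as a function of $\bm\alpha$ is $\mathcal R_P(F_j(\bm\alpha))$, and its gradient at $\bm\alpha$ is $-2DF_j(\bm\alpha)^*(\bar{\bm r}-F_j(\bm\alpha))$. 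Only $\bar{\bm r}$ enters, again by the orthogonality in Proposition~\ref{prop:pythagoras}.

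\emph{Tangent model.} Replace $F_j$ by $\Phi_j\bm\alpha$. The gradient iteration, with the factor $2$ absorbed into $\eta$, becomes
\[
\bm\alpha_{t+1}=\bm\alpha_t+\eta(\bm b_j-H_j\bm\alpha_t),
\]
so $\bm\alpha_T=H_j^\dagger[I-(I-\eta H_j)^{T}]\bm b_j$ on the range of $H_j$. Since $\bm b_j\in\operatorname{ran}\Phi_j^*=\operatorname{ran}H_j$, the pseudoinverse is legitimate. Substituting into $2\langle\Phi_j\bm\alpha,\bar{\bm r}\rangle-\|\Phi_j\bm\alpha\|^2$ and diagonalizing $H_j$ gives, per eigenvalue $\mu$, the factor $\mu^{-1}\bigl[2(1-q)-(1-q)^2\bigr]=\mu^{-1}(1-q^2)$ with $q=(1-\eta\mu)^{T}$. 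This yields exactly $\bm b_j^*H_j^\dagger[I-(I-\eta H_j)^{2T_j}]\bm b_j$, which also fixes the step-size normalization used in the definition of $\Delta_j^{\mathrm{lin}}$. Because $\mathcal E_j$ is assumed compact (or finite) in the appendix setting, and the map from $\eta$ to the reduction is continuous, the supremum is attained and so are the armwise optima. The spectral-stability condition ($\eta\|H_j\|<2$) keeps $|q|\le1$.

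\emph{Linearization error $e_j$ (main obstacle).} Compare the true iterates $\bm\alpha_t$ with the tangent iterates $\bm\alpha_t^{\mathrm{lin}}$.
\begin{enumerate}[label=(\roman*)]
\item The radius condition keeps both trajectories in a ball of radius $\rho$. There the local Jacobian condition gives $\|F_j(\bm\alpha)-\Phi_j\bm\alpha\|\le \tfrac{L_j}{2}\|\bm\alpha\|^2$ and $\|DF_j(\bm\alpha)-\Phi_j\|\le L_j\|\bm\alpha\|$.
\item The angular-gradient condition bounds the gradient perturbation per step by a constant times $\rho$.
\item Spectral stability makes $I-\eta H_j$ nonexpansive, so a discrete Gr\"onwall/variation-of-constants sum gives $\|\bm\alpha_T-\bm\alpha_T^{\mathrm{lin}}\|\le T_j\eta\,C_j\rho^2$.
\item The risk reduction is Lipschitz in $\bm f$ on bounded sets, with constant $2(\|\bar{\bm r}\|+\|\bm f\|)$. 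Applying this uniformly over $\eta\in\mathcal E_j$, and using that a supremum of uniformly perturbed functions moves by at most the uniform bound, gives $|\Delta_j-\Delta_j^{\mathrm{lin}}|\le e_j$.
\end{enumerate}
The hardest part is making $e_j$ explicit and uniform in $\eta$ without assuming the iterates stay small. I would first try to prove, by induction on $t$, that the radius hypothesis is self-consistent: the tangent trajectory is bounded by $\|H_j^{\dagger/2}\bm b_j\|$-type quantities, and the perturbation stays below $\rho$.

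\emph{Conclusion.} Given the premise of \eqref{eq:p2-actual-order},
\[
\Delta_{\mathrm r}\ge\Delta^{\mathrm{lin}}_{\mathrm r}-e_{\mathrm r}>\Delta^{\mathrm{lin}}_{\mathrm b}+e_{\mathrm b}\ge\Delta_{\mathrm b}.
\]
Since $\min_\eta\mathcal R_P(\bm f_{j,\eta})=\mathcal R_P(0)-\Delta_j$, with the optimum attained, the strict ordering of the minimal risks follows.
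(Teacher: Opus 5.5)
Your proposal is correct and follows the paper's proof essentially step for step: the orthogonality identity $\mathcal R_P(0)-\mathcal R_P(\bm f)=2\langle\bar{\bm r},\bm f\rangle-\|\bm f\|^2$, the spectral solution of the tangent recursion giving the $2T_j$ exponent, ball invariance, the defect recursion through the nonexpansive propagator $I-\eta H_j$, the Taylor remainder, the polarization bound $2(s+K_jR_j)\varepsilon_j$, continuity on the compact $\mathcal E_j$ for attainment, and the final chain $\Delta_{\mathrm r}\ge\Delta_{\mathrm r}^{\mathrm{lin}}-e_{\mathrm r}>\Delta_{\mathrm b}^{\mathrm{lin}}+e_{\mathrm b}\ge\Delta_{\mathrm b}$. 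One correction to your bookkeeping: the per-step gradient defect is $\omega_j=\Lambda_jR_j(s+K_jR_j)+\tfrac12K_j\Lambda_jR_j^2+\lambda_{\angle}G_{\angle}$, which is linear in the radius because the target scale $s$ does not shrink and contains a radius-independent angular term, so the parameter defect is $\eta T_j\omega_j$ rather than $O(\rho^2)$; moreover, the ball invariance you flag as the hard part is immediate by induction from the gradient envelope $\Gamma_j=K_j(s+K_jR_j)+q_j$ and the radius condition $\eta T_j\Gamma_j\le R_j$, with no spectral bound on the tangent iterates needed.
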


\heading{Own-law deployment}
\label{sec:theory_own_law}
Write $\bm g_{\bm\phi}$ for the deployed correction
$W\mapsto\bm g_{\bm\phi}(\mathcal Z(W))$.
Let $\mathbf X^\kappa$ be the exact trajectory from the prescribed source
under the field $\bm v_0+\kappa\bm g_{\bm\phi}$. For $\Sigma$ drawn
independently of the source and condition from a law $q$ on $[0,1]$, set
$Q_\kappa=\operatorname{Law}
(\mathbf X^\kappa(\Sigma),\Sigma,\bm C)$ and define
$\mathcal V_Q(\kappa)
=\E_Q\|\bar{\bm r}-\kappa\bm g_{\bm\phi}\|^2$.
Here $\bar{\bm r}$ denotes the deployment version fixed in
Appendix~\ref{p2:app:deployment-setting}.

\begin{theorem}[Own-law velocity improvement]
\label{thm:own-law-improvement}
Under the deployment assumptions of Appendix~\ref{p2:app:deployment},
let $M$ be the first-order alignment margin after trajectory-shift cost and
$N$ the second-order correction and alignment-shift cost, as defined in
Appendix~\ref{p2:app:transfer}. Then, for every $\kappa\in[0,\bar\kappa]$,
\[
  \mathcal V_{Q_\kappa}(\kappa)-\mathcal V_{Q_0}(0)
  \le -\kappa M+\kappa^2N.
\]
If $M>0$ (hence $N>0$), then
\begin{equation}
0<\kappa\le\bar\kappa,\quad \kappa<\frac{M}{N}
\quad\Longrightarrow\quad
\mathcal V_{Q_\kappa}(\kappa)<\mathcal V_{Q_0}(0).
\label{eq:main-own-law-gain}
\end{equation}
\end{theorem}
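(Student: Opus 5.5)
The plan is to compare the two quantities $\mathcal V_{Q_\kappa}(\kappa)$ and $\mathcal V_{Q_0}(0)$ by splitting the difference into a \emph{same-law} part and a \emph{law-shift} part:
\[
  \mathcal V_{Q_\kappa}(\kappa)-\mathcal V_{Q_0}(0)
  =\bigl[\mathcal V_{Q_0}(\kappa)-\mathcal V_{Q_0}(0)\bigr]
  +\bigl[\mathcal V_{Q_\kappa}(\kappa)-\mathcal V_{Q_0}(\kappa)\bigr].
\]
The first bracket is exact algebra on a fixed law. Expanding the square gives
\[
  -2\kappa\,\E_{Q_0}\langle\bar{\bm r},\bm g_{\bm\phi}\rangle+\kappa^2\E_{Q_0}\|\bm g_{\bm\phi}\|^2,
\]
in the spirit of Proposition~\ref{prop:pythagoras}. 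So the linear coefficient is the alignment $2\E_{Q_0}\langle\bar{\bm r},\bm g_{\bm\phi}\rangle$ and the quadratic one is the correction energy.

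For the second bracket, write $\psi_\kappa(w)=\|\bar{\bm r}(w)-\kappa\bm g_{\bm\phi}(w)\|^2$ and control $\E_{Q_\kappa}\psi_\kappa-\E_{Q_0}\psi_\kappa$ through the trajectory perturbation $\mathbf X^\kappa-\mathbf X^0$.
\begin{itemize}
  \item Under the deployment assumptions (Lipschitz or bounded fields on a tube, $\bm g_{\bm\phi}$ bounded), I will use a Grönwall argument on $\frac{\mathrm d}{\mathrm d\sigma}(\mathbf X^\kappa-\mathbf X^0)=\bm v_0(\mathbf X^\kappa)-\bm v_0(\mathbf X^0)+\kappa\bm g_{\bm\phi}(\mathbf X^\kappa)$. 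This gives $\sup_\sigma\|\mathbf X^\kappa(\sigma)-\mathbf X^0(\sigma)\|\le\kappa C_{\mathrm{traj}}$ for $\kappa\le\bar\kappa$, where $\bar\kappa$ keeps the trajectory inside the tube where the constants are valid.
  \item Since $\Sigma$ and $\bm C$ are drawn independently with the same laws under both arms, the coupling $(\mathbf X^\kappa(\Sigma),\Sigma,\bm C)$ versus $(\mathbf X^0(\Sigma),\Sigma,\bm C)$ is a valid coupling of $Q_\kappa$ and $Q_0$.
  \item Decompose $\psi_\kappa=\|\bar{\bm r}\|^2-2\kappa\langle\bar{\bm r},\bm g_{\bm\phi}\rangle+\kappa^2\|\bm g_{\bm\phi}\|^2$ and treat each term separately under the coupling. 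The $\|\bar{\bm r}\|^2$ term changes at first order by $\kappa$ times a directional derivative along the trajectory shift; this is the trajectory-shift cost that is subtracted inside $M$. Its second-order remainder, using a Lipschitz gradient, is $O(\kappa^2)$. The cross term shifts by $\kappa\cdot O(\kappa)$, which is the alignment-shift cost and goes into $N$. The energy term gives $\kappa^2(\cdot)$ plus higher order, bounded by $\kappa^2$ times a constant using $\kappa\le\bar\kappa$.
\end{itemize}

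Collecting terms gives $-\kappa M+\kappa^2 N$. Here $M$ is twice the alignment minus the first-order shift derivative, and $N$ is the correction energy plus the remainder and alignment-shift constants. I would define these in Appendix~\ref{p2:app:transfer} exactly so that the inequality holds for all $\kappa\in[0,\bar\kappa]$. Higher powers of $\kappa$ are absorbed into $N$ via $\kappa^3\le\bar\kappa\kappa^2$.

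The conclusion then follows directly. The difference is at most $-\kappa M+\kappa^2 N$, and $\kappa(-M+\kappa N)<0$ whenever $0<\kappa<M/N$. I need $N>0$ so that $M/N$ makes sense. If $M>0$ but $N=0$, the bound $-\kappa M\ge\mathcal V_{Q_0}(0)-\mathcal V_{Q_\kappa}(\kappa)$ can be checked to be inconsistent: nonnegativity of $\mathcal V$ together with the linear bound holding for all $\kappa$ would force a contradiction. This is why the statement says ``hence $N>0$''. More simply, $N$ includes $\E\|\bm g_{\bm\phi}\|^2$, which is positive when $M>0$ since $\bm g_{\bm\phi}\neq 0$.

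The main obstacle is the first-order law-shift term: it is the derivative of $\E\|\bar{\bm r}(\mathbf X^\kappa(\Sigma),\Sigma,\bm C)\|^2$ in $\kappa$. This needs differentiability of $\bar{\bm r}$ along trajectories, together with a variational-equation representation of $\partial_\kappa\mathbf X^\kappa$. I would first try to avoid derivatives altogether, using only Lipschitz bounds on $\bar{\bm r}$ and $\|\bar{\bm r}\|^2$ and putting the whole $O(\kappa)$ shift into the cost subtracted in $M$.
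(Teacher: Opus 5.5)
Your derivative-free fallback is the paper's proof up to the order of telescoping, and it is the route you should take. Under the common-source coupling, your two brackets sum to
\[
  -2\kappa A_{Q_\kappa}+\kappa^2B_{Q_\kappa}
  +\bigl[\mathcal V_{Q_\kappa}(0)-\mathcal V_{Q_0}(0)\bigr].
\]
This is exactly the paper's starting expression: it telescopes through $\mathcal V_{Q_\kappa}(0)$ and applies the gain quadratic at the corrected law. The paper then uses three estimates:
\begin{itemize}
  \item $\mathcal V_{Q_\kappa}(0)-\mathcal V_{Q_0}(0)\le\kappa L_VK_Q$, from the Lipschitz bound $L_V=2R_rL_r$ on $\|\bar{\bm r}\|^2$;
  \item $A_{Q_\kappa}\ge A_0-\kappa L_AK_Q$, from the Lipschitz bound $L_A=L_rG+R_rL_g$ on $\langle\bar{\bm r},\bm g_{\bm\phi}\rangle$;
  \item $B_{Q_\kappa}\le G^2$.
\end{itemize}
These give precisely $M=2A_0-L_VK_Q$ and $N=G^2+2L_AK_Q$, the constants the statement refers to. The deployment assumptions provide only uniform bounds and Lipschitz constants on a common domain. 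So the directional-derivative expansion you list as your main line is not available, and it is also not needed. The entire $O(\kappa)$ displacement cost $\kappa L_VK_Q$ is charged to $M$, so the ``main obstacle'' you identify never arises.

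Two details matter if you want to land on exactly those constants.
\begin{itemize}
  \item Keep the $\sigma$-dependent Gr\"onwall bound $\|\mathbf X^\kappa(\sigma)-\mathbf X^0(\sigma)\|\le\kappa GH_0(\sigma)$ and average it over $\Sigma\sim q$. This gives $\kappa K_Q$ with $K_Q=G\,\E[H_0(\Sigma)]$; a supremum in $\sigma$ is coarser.
  \item Do not Lipschitz-bound the energy shift. Since $\kappa^2B_{Q_0}+\kappa^2(B_{Q_\kappa}-B_{Q_0})=\kappa^2B_{Q_\kappa}\le\kappa^2G^2$, no $\kappa^3$ term appears and $N$ does not depend on $\bar\kappa$.
\end{itemize}

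Finally, discard your first argument for $N>0$. Its inequality is reversed, and in any case the bound is only asserted for $\kappa\le\bar\kappa$. With $N=0$, nonnegativity of $\mathcal V_{Q_\kappa}(\kappa)$ then yields only $\bar\kappa M\le\mathcal V_{Q_0}(0)$, which is no contradiction. Your second argument is the right one and matches the paper's: since $L_VK_Q\ge0$, $M>0$ forces $A_0>0$. Hence $\bm g_{\bm\phi}\neq0$ on a $Q_0$-nonnull set, so $G>0$ and $N\ge G^2>0$.
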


\section{Experiments}
\label{sec:experiments}

We evaluate \method on a wide range of tasks,
including class-conditional image generation, text-to-video generation and image-to-video generation.
The image experiments test whether learned velocity corrections improve an established flow-matching model, while the video experiments examine visual quality, temporal consistency and physical plausibility.
Red, orange and yellow highlight the best, second-best and third-best scores in each column, respectively.

\subsection{Image Generation}
\label{sec:image_generation}

\heading{Settings}
We study class-conditional generation on ImageNet at $256\times256$ resolution with 1,000 classes.
Our base model is built upon RAEv2 with a frozen DINOv3-L-based representation autoencoder and a pretrained diffusion transformer.
Starting from its 80-epoch checkpoint, we jointly fine-tune the base transformer and train a randomly initialized refiner for another 80 epochs on the ImageNet training split, denoted by $80+80$ in Table~\ref{tab:imagenet_generation_comparison_compact}.
The refiner has 10 transformer blocks with width 768 and 12 attention heads, and reads hidden states from base transformer blocks 3, 11, 19 and 27, using one-based indexing.
We retain the base model's training objective and train the refiner to predict the realized velocity residual with squared error and a cosine-direction term weighted by $0.1$.
Both networks use GMuon optimizer with a global batch size of 1,024. The evaluation settings are identical to those in \citet{singh2026improved}.

\begin{table*}[!htbp]
    \centering
    \caption{
        Comparison on ImageNet-256~\citep{deng2009imagenet} generation.
        Representation Fr\'echet Distance ($\mathrm{FD}_r$) is reported in three feature spaces.
    }
    \label{tab:imagenet_generation_comparison_compact}
    \resizebox{\textwidth}{!}{
    \begin{tabular}{@{}lcccccc@{}}
        \toprule
        \multirow{2}{*}{\textbf{Method}}
        & \multirow{2}{*}{\textbf{Epochs}}
        & \multirow{2}{*}{\textbf{Guidance}}
        & \multirow{2}{*}{\textbf{IS $\uparrow$}}
        & \multicolumn{3}{c}{\textbf{Representation Fr\'echet Distance ($\mathrm{FD}_r$) $\downarrow$}} \\
        \cmidrule(lr){5-7}
        & & &
        & \textbf{ConvNeXt}~\citep{liu2022convnet} & \textbf{DINOv2}~\citep{oquabdinov2} & \textbf{CLIP}~\citep{radford2021learning} \\
        \midrule

        % Source row: noguid-ep160_ema.
        RAEv2~\citep{singh2026improved}
        & 160 & No & 233.9
        & 1.28 & 2.22 & 7.18 \\

        SelfFlow~\citep{chefer2026selfflow}
        & 160 & No & 146.4
        & 4.81 & 7.08 & 12.13 \\
        \midrule
        \textbf{Ours}
        & 80+80 & No & 234.0
        & \trd{1.24} & 2.21 & \trd{7.14} \\
        \midrule
        SiT-XL/2~\citep{ma2024sit}
        & 800 & Yes & 256.7
        & 2.02 & 7.89 & 17.69 \\

        RAE-XL~\citep{zheng2026rae}
        & 800 & Yes & \trd{261.0}
        & 1.79 & \trd{2.11} & 7.87 \\

        % Source row: in256-k7-ft-copilot-ep80_ema.
        RAEv2~\citep{singh2026improved}
        & 160 & Yes & \snd{265.0}
        & \snd{0.74} & \snd{1.08} & \snd{4.85} \\

        SelfFlow~\citep{chefer2026selfflow}
        & 160 & Yes & 146.8
        & 4.72 & 6.98 & 12.16 \\
        \midrule
        \textbf{Ours}
        & 80+80 & Yes & \fst{266.6}
        & \fst{0.73} & \fst{1.03} & \fst{4.84} \\

        \bottomrule
    \end{tabular}
    }
\end{table*}

\heading{Results}
\Cref{tab:imagenet_generation_comparison_compact} compares \method with RAEv2 at a matched budget of 160 total epochs, because our $80+80$ schedule spends 80 epochs on the base checkpoint and 80 epochs on joint training.
Under this budget, \method improves over RAEv2 on IS and on all three $\mathrm{FD}_r$ metrics, both with and without guidance, so the gain comes from the learned residual corrections rather than from a longer training schedule.
With guidance, \method leads on every metric in the table, surpassing SiT-XL/2 and RAE-XL trained for 800 epochs.
It also outperforms the evaluated SelfFlow variant in both guidance settings.
Additional results are provided in \Cref{sec:appendix_imagenet_results}.

\subsection{Text-to-Video Generation}
\label{sec:text_to_video_generation}

\heading{Settings}
We use Wan2.1-T2V-1.3B~\citep{wan2025wan} as the base model and jointly adapt its transformer with the refiner on video--text pairs from WISA-80K~\citep{wang2025wisa} and OpenVidHD~\citep{nan2025openvid}.
The refiner has 10 transformer blocks with width 1,024 and 16 attention heads, and reads hidden states from base blocks 4, 12, 20 and 30, using one-based indexing.
The final training stage uses 66,572 clips for five epochs with AdamW and a global batch size of 16, retaining the base flow-matching objective and a residual objective with cosine-direction weight $0.1$.
We generate 81-frame videos at $480\times832$ resolution with 50 Euler steps and evaluate visual quality and temporal consistency with VBench~\citep{huang2024vbench}, and physical plausibility and prompt adherence with VideoPhy2~\citep{bansal2026videophy} and FinePhyEval~\citep{pothiraj2026physics}.
\Cref{sec:appendix_t2v_implementation} provides the training and evaluation details.

\begin{table*}[t]
\centering
\caption{Physics-consistent text-to-video generation on VideoPhy2~\citep{bansal2026videophy} and FinePhyEval~\citep{pothiraj2026physics}. Higher is better for all metrics.}
\label{tab:physics_generation}
\label{tab:videophy2}
\label{tab:finephyeval}
\setlength{\tabcolsep}{4pt}
\resizebox{\textwidth}{!}{%
\begin{tabular}{@{}l*{10}{c}@{}}
\toprule
\multirow{3}{*}{\textbf{Method}}
& \multicolumn{6}{c}{\textbf{VideoPhy2}}
& \multicolumn{4}{c}{\textbf{FinePhyEval}} \\
\cmidrule(lr){2-7}\cmidrule(lr){8-11}
& \multicolumn{3}{c}{ALL} & \multicolumn{3}{c}{HARD}
& \multirow{2}{*}{Object} & \multirow{2}{*}{Action}
& \multirow{2}{*}{Physics} & \multirow{2}{*}{Overall} \\
\cmidrule(lr){2-4}\cmidrule(lr){5-7}
& PC & SA & Joint & PC & SA & Joint & & & & \\
\midrule
Wan 2.1-14B~\citep{wan2025wan}
& 60.5 & \trd{29.0} & \trd{24.7} & 45.6 & \trd{8.3} & 3.9
& \trd{0.86} & \snd{0.53} & \fst{0.46} & \snd{0.59} \\

\midrule

PhysRAG-5B~\citep{cheng2026physrag}
& \trd{65.6} & 24.7 & 21.0 & \snd{59.4} & 7.2 & 4.4
& \snd{0.88} & \trd{0.49} & 0.37 & 0.55 \\

VideoREPA-5B~\citep{zhang2025videorepa}
& \fst{73.6} & 25.8 & 23.6 & \fst{62.8} & 6.7 & \trd{5.0}
& 0.82 & 0.37 & 0.30 & 0.47 \\

Wan 2.1-1.3B~\citep{wan2025wan}
& 57.8 & \snd{30.0} & \snd{24.8} & 36.7 & \fst{11.7} & \snd{5.6}
& 0.84 & \trd{0.49} & 0.36 & 0.54 \\

DreamWorld-1.3B~\citep{tan2026dreamworld}
& 53.7 & 27.0 & 20.9 & 35.0 & 7.8 & 3.9
& \fst{0.90} & \snd{0.53} & \trd{0.38} & \trd{0.58} \\

VPT-1.3B~\citep{zheng2026enhancing}
& 55.1 & 22.5 & 22.2 & 39.4 & \trd{8.3} & \snd{5.6}
& 0.82 & 0.44 & 0.34 & 0.50 \\

\midrule
\textbf{\method (Ours)-1.3B}
& \snd{65.7} & \fst{31.3} & \fst{27.0} & \trd{51.9} & \snd{11.0} & \fst{6.9}
& \fst{0.90} & \fst{0.55} & \snd{0.44} & \fst{0.60} \\

\bottomrule
\end{tabular}%
}
\end{table*}

\begin{table*}[!htbp]
    \centering
    \caption{VBench~\citep{huang2024vbench} comparison on selected metrics (0--100, higher is better).}
    \label{tab:vbench_compact}
    \footnotesize
    \setlength{\tabcolsep}{2.25pt}
    \renewcommand{\arraystretch}{1.15}
    \begin{tabular*}{\textwidth}{@{\extracolsep{\fill}}l*{10}{c}}
        \toprule
        \textbf{Method} & \textbf{SC} & \textbf{TF} & \textbf{MS} & \textbf{AQ} & \textbf{IQ} & \textbf{OC} & \textbf{SR} & \textbf{TS} & \textbf{QS} & \textbf{Total} \\
        \midrule
        VideoREPA-5B~\citep{zhang2025videorepa} & \trd{95.47} & 98.23 & \snd{98.54} & 57.19 & 59.80 & 67.56 & 51.85 & 22.71 & 79.96 & 76.67 \\
        PhysRAG-5B~\citep{cheng2026physrag} & \snd{96.31} & 98.71 & 96.93 & 56.80 & \trd{65.56} & \trd{80.50} & \snd{77.33} & 23.18 & \trd{83.07} & \trd{80.08} \\
        Wan 2.1-1.3B~\citep{wan2025wan} & 91.83 & \snd{99.17} & 96.51 & 57.24 & 59.14 & 76.09 & 62.37 & 22.23 & 79.81 & 76.93 \\
        VideoJAM-1.3B~\citep{chefer2025videojam} & 91.51 & \trd{99.13} & 96.05 & \trd{58.41} & 62.68 & 79.22 & 66.17 & \snd{24.03} & 81.18 & 78.76 \\
        DreamWorld-1.3B~\citep{tan2026dreamworld} & 93.62 & 98.81 & \trd{98.07} & \snd{59.59} & \snd{66.76} & \snd{81.32} & 70.47 & \trd{23.82} & \snd{83.49} & \snd{80.97} \\
        VPT-1.3B~\citep{zheng2026enhancing} & 92.65 & 98.64 & 97.85 & 58.09 & 63.68 & 73.43 & \trd{73.35} & 22.52 & 82.26 & 78.79 \\
        \midrule
        \textbf{\method (Ours)-1.3B} & \fst{96.68} & \fst{99.81} & \fst{98.85} & \fst{61.65} & \fst{68.66} & \fst{82.81} & \fst{77.88} & \fst{24.05} & \fst{84.69} & \fst{81.86} \\
        \bottomrule
    \end{tabular*}
\end{table*}

% Source: qualititive_cases/{vbench,finephyeval,videophy2}/, generated by
% paper/tools/qualitative_comparison/make_t2v_qualitative_figure.py.
\begin{figure}[!htbp]
    \centering
    \includegraphics[width=\textwidth]{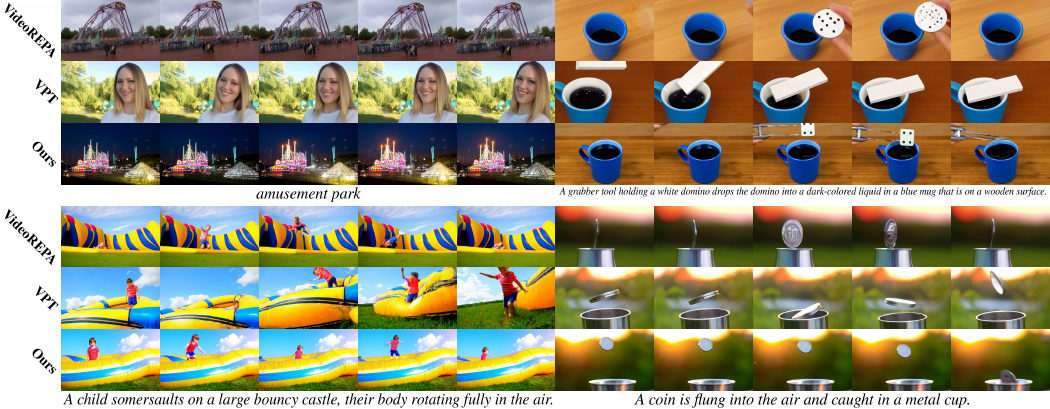}
    \caption{
        \textbf{Qualitative comparison on text-to-video generation.}
        Each scenario compares VideoREPA-5B, VPT-1.3B and \method (Ours) under the same text prompt,
        with frames sampled at $0\%$, $25\%$, $50\%$, $75\%$ and $100\%$ of each generated video.
        The prompts are drawn from VBench (amusement park), FinePhyEval (a domino dropped into a mug)
        and VideoPhy2 (a somersault on a bouncy castle, a coin caught in a metal cup).
    }
    \label{fig:t2v_qualitative_comparison}
\end{figure}

\heading{Results}
\Cref{tab:physics_generation} reports the VideoPhy2 and FinePhyEval results for physics-consistent text-to-video generation.
\method achieves the highest VideoPhy2 Joint scores on both ALL and HARD, increasing them from $24.8$ to $27.0$ and from $5.6$ to $6.9$, respectively, over Wan2.1-T2V-1.3B.
It also achieves the highest FinePhyEval Overall score of $0.60$, with improvements in object, action and physics scores over the same base model.
On VBench (\cref{tab:vbench_compact}), \method leads the compared methods on Quality Score and Overall Score, reaching $84.69$ and $81.86$.
\Cref{fig:t2v_qualitative_comparison} shows the same advantage in the generated frames, where \method carries each prompted interaction to its physical outcome, dropping the domino into the mug and landing the coin in the metal cup, while the baselines lose the object or drift away from the prompted action.
These results show that the adapted model improves both general video quality and the joint satisfaction of semantic and physical requirements.
\Cref{sec:appendix_t2v_results} analyzes the full metric breakdown.

\subsection{Image-to-Video Generation}
\label{sec:image_to_video_generation}

\heading{Settings}
We use Wan2.2-TI2V-5B~\citep{wan2025wan} as the base model, with the same training data, objectives, optimization and five-epoch schedule as in \Cref{sec:text_to_video_generation}.
We enable first-frame image conditioning for $40\%$ of training examples and use text-only conditioning for the remainder.
The refiner has 13 transformer blocks with width 2,048 and 32 attention heads, reading hidden states from the same base blocks as in text-to-video generation.
We evaluate on the 198 image--text pairs of Physics-IQ Verified~\citep{radsch2026physics, motamed2026generative}, generating 81-frame videos at $704\times1248$ resolution with the same sampling configuration as in text-to-video generation.
\Cref{sec:appendix_i2v_implementation} details the image conditioning, training and evaluation.

\begin{table}[!htbp]
    \centering
    \caption{Physics-IQ Verified~\citep{radsch2026physics} results on the image-to-video track. All evaluations are without LLM usage. All scores are percentages, and higher is better for every metric.}
    \label{tab:physics_iq_verified}
    \footnotesize
    \setlength{\tabcolsep}{3pt}
    \resizebox{\linewidth}{!}{%
    \begin{tabular}{@{}lccccc@{}}
        \toprule
        \textbf{Method} & \textbf{Spatial} & \textbf{Spatio-temporal} & \textbf{Weighted Spatial} & \textbf{MSE Score} & \textbf{Verified Score} \\
        \midrule
        P-Video-19B~\citep{pruna2026pvideo}
        & \trd{38.60} & 16.40 & 22.90 & 23.30 & 25.30 \\
        Wan2.2-5B~\citep{wan2025wan}
        & 35.89 & \fst{26.44} & \trd{22.96} & 25.54 & \snd{27.71} \\
        WMReward-5B~\citep{yuan2026inference}
        & 36.47 & \trd{22.38} & 19.79 & \snd{27.26} & 26.48 \\
        VLIPP-5B~\citep{yang2025vlipp}
        & \snd{40.51} & 19.09 & \snd{23.66} & \trd{26.48} & \trd{27.43} \\
        PhysRAG-5B~\citep{cheng2026physrag}
        & 36.46 & 22.07 & 20.22 & 23.41 & 25.54 \\
        \midrule
        \textbf{\method (Ours)-5B}
        & \fst{43.43} & \snd{26.37} & \fst{24.74} & \fst{29.24} & \fst{30.95} \\
        \bottomrule
    \end{tabular}%
    }
\end{table}

% Source: outputs/physics_iq_verified_ti2v/, same runs as the rows of
% Table~\ref{tab:physics_iq_verified} for WMReward-5B, PhysRAG-5B and \method.
\begin{figure}[!htbp]
    \centering
    \includegraphics[width=\textwidth]{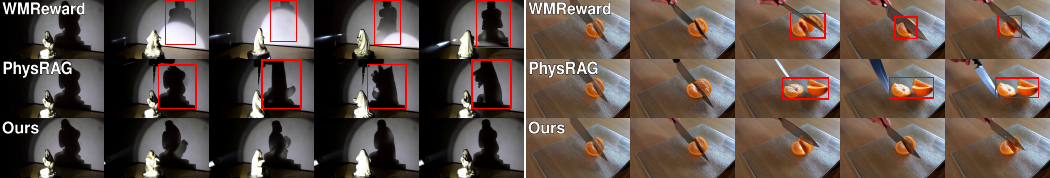}
    \caption{
        \textbf{Qualitative comparison on the Physics-IQ Verified image-to-video track.}
        Each scenario compares WMReward-5B, PhysRAG-5B and \method (Ours), which share the Wan2.2-TI2V-5B base model and receive the same settings.
        Temporal order is from left to right.
        The left scenario shows a statuette's shadow under a moving light,
        and the right scenario shows a knife slicing a mandarin.
    }
    \label{fig:i2v_qualitative_comparison}
\end{figure}

\heading{Results}
\Cref{tab:physics_iq_verified} reports the Physics-IQ Verified results.
\method achieves the highest Verified Score of $30.95$, and leads on Spatial, Weighted Spatial and MSE Score.
These results indicate closer agreement with the ground-truth videos in motion coverage and frame appearance.
\Cref{fig:i2v_qualitative_comparison} illustrates shadow and object consistency, with the statuette's shadow tracking the moving light and the blade separating the mandarin into halves.
\Cref{sec:appendix_i2v_results} provides the metric breakdown and qualitative analysis.

\subsection{Ablation Study}
\label{sec:ablation}

We conduct a number of ablations to nanalyze the contribution of each component.
All variants are built on WAN-2.1-T2V-1.3B~\citep{wan2025wan} and follow the same training protocol.
Specifically, each model is trained for five epochs on the same 1,000 samples from WISA-80K~\citep{wang2025wisa}. We evaluate on a disjoint set of 50 WISA-80K samples selected to provide approximately uniform coverage over the physical categories.
We report visual quality and consistency metrics from VBench, together with its aggregate Quality Score.
Results are reported in Table~\ref{tab:ablation}.

% Values are percentages rounded to three significant digits.
% Ranking colors use the original scores in mistakeflow_ablation_full.tex.
% Keep the caption and every ablation row together across page-layout changes.
\begin{table}[!htbp]
    \centering
    \caption{Component ablations (\%, higher is better). Colors rank the four-decimal scores in Table~\ref{tab:ablation_full}, which also expands the metric names.}
    \label{tab:ablation}
    \footnotesize
    \setlength{\tabcolsep}{2pt}
    \begin{tabular*}{\linewidth}{@{\extracolsep{\fill}}lcccccc@{}}
        \toprule
        \textbf{Method} & \textbf{AQ} & \textbf{BC} & \textbf{IQ} & \textbf{OC} & \textbf{SC} & \textbf{QS} \\
        \midrule
        w/o time-step
        & \trd{51.5} & 92.4 & \snd{63.7} & 58.5 & 91.2 & 79.7 \\

        w/o self-attention
        & 48.9 & 92.1 & 60.4 & 58.1 & 90.0 & 78.5 \\

        w/o cross-attention
        & 48.5 & 91.4 & 60.7 & 58.2 & 90.2 & 78.5 \\

        w/o hidden-state MLP
        & 49.3 & 92.4 & 63.6 & 58.2 & 90.9 & 79.3 \\

        w/o refiner
        & 51.3 & \trd{92.5} & 62.0 & \trd{59.1} & \trd{91.3} & 78.6 \\

        $\bm{v}$ input
        & 50.4 & 92.4 & \trd{63.7} & \snd{60.2} & \snd{91.7} & \snd{79.8} \\

        w/o condition
        & \snd{51.5} & \snd{92.7} & 61.2 & 58.1 & 91.0 & \trd{79.8} \\

        \textbf{Ours}
        & \fst{52.6}
        & \fst{93.6}
        & \fst{64.3}
        & \fst{61.8}
        & \fst{92.6}
        & \fst{80.3} \\
        \bottomrule
    \end{tabular*}
\end{table}

The full model achieves the highest Quality Score and leads on all visual quality and consistency metrics.
Removing cross-attention causes the largest score drop, closely followed by self-attention, highlighting the value of context and spatiotemporal interactions in the base model's prediction.
Removing either the refiner or hidden-state MLP reduces Quality Score and Overall Consistency, supporting refinement and fusion of hidden states across depths.
The remaining ablations produce smaller but consistent score decreases.
Removing noise-level conditioning or replacing the clean prediction $\hat{\mathbf{x}}_0$ with the velocity prediction $\bm{v}_{\bm{\theta}}$ lowers aggregate performance, supporting the value of noise-level information and explicit clean-sample estimates.
Removing text conditioning lowers Quality Score and notably weakens Overall Consistency, suggesting its role in semantic alignment.

\section{Conclusion}
\label{sec:conclusion}

We introduced \method, a self-correcting flow-matching framework that learns velocity residuals from the base model's clean prediction, conditioning and hidden states.
A lightweight refiner corrects the velocity within the sampling loop.
Our theory identifies conditions under which feature reuse improves residual prediction, refiner training outperforms continued base-model training at matched compute, and the deployed correction lowers own-law velocity
error.
We demonstrate that \method achieves state-of-the-art performance across multiple benchmarks including image generation, text-to-video generation, and image-to-video generation.
We hope \method will be a useful tool for the community to advance research in flow-matching-based generative modeling.

\clearpage

%\section*{AI Use Statement}
%\label{sec:ai_assistance}

%We discussed ideas with large language models while developing the theoretical derivations. The derivations themselves were completed and verified by the authors.Beyond this, we did not use generative AI tools for any other tasks for which the ICLR policy requires disclosure, including data generation, hypothesis formulation, experimental design, method implementation, translation, data processing, and result interpretation.We used large language models mainly for tasks with recommended disclosure: editing the manuscript for readability, searching for relevant prior work, and drafting parts of the paper. The authors revised every AI-assisted passage against our own method, code, and results, and verified every AI-suggested reference against the original publication. We take full responsibility for the final content of this work, including all AI-assisted text.

\bibliography{iclr2027_conference,chuanxia_general,chuanxia_specific}
\bibliographystyle{iclr2027_conference}

\newpage
\appendix

\startcontents[appendix]
\section*{Appendix Contents}
\printcontents[appendix]{}{1}{\setcounter{tocdepth}{2}}
\newpage

% ---------------------------------------------------------------------------
% Perspective 1: restricted-class residual-risk advantage from frozen base
% features. Transcribed from mistake_flow_theory/results/perspective_1/idea_1.
% Main-text summary: Section~\ref{sec:theory_p1}.
% ---------------------------------------------------------------------------

\section{Frozen-Feature Analysis}
\label{p1:sec:setup}

This appendix establishes an exact population-risk separation for a linear
readout of a frozen nonlinear feature relative to predictors that are affine
in the raw inputs.

\subsection{Setting and assumptions}

We work in one active coordinate of the sample space. Samples, velocities,
residuals and hidden readouts in this construction are scalar and use
ordinary symbols. The conditioning vector is bold. Let
$\bm{c}=(c_1,c_2)$, write $p:=\psi(\bm{c}):=c_1c_2$ and fix $\delta>0$.
For $\sigma\in[0,1]$, define
\[
s_\sigma^2:=(1-\sigma)^2+\sigma^2,
\qquad
\ell_\sigma:=\frac{2\sigma-1}{s_\sigma^2},
\qquad
\tau_\sigma:=\frac{\sigma}{s_\sigma^2}.
\]

\begin{assumption}[Conditional-Gaussian XOR witness]
\label{p1:assump:conditional-gaussian-law}
The variables $c_1,c_2$ are independent and uniform on $\{-1,1\}$.
The variables $\xi,\varepsilon\sim\mathcal N(0,1)$ are independent of
each other and of $(\bm{c},\sigma)$, and
\[
x_0:=\delta p+\xi,
\qquad
x_\sigma:=(1-\sigma)x_0+\sigma\varepsilon,
\qquad
u:=\varepsilon-x_0.
\]
\end{assumption}

\begin{assumption}[Independent Noise Level]
\label{p1:assump:full-noise-law}
The variable $\sigma\sim q$ is independent of $(\bm{c},\xi,\varepsilon)$,
where $q$ is a Borel probability measure with topological support exactly
$[0,1]$.
\end{assumption}

Let $(\Omega,\mathscr F,\mathbb P)$ be the underlying probability space
carrying these variables. Write $W:=(x_\sigma,\sigma,\bm{c})$ and let
$P_W:=\mathbb P\circ W^{-1}$ be its law. The frozen base velocity, its
one-step clean prediction from \Eqref{eq:zhat} and the realized residual
label are
\[
v_{\bm\theta}(W):=\ell_\sigma x_\sigma,
\qquad
\widehat x_0(W):=x_\sigma-\sigma v_{\bm\theta}(W),
\qquad
r:=u-v_{\bm\theta}(W).
\]

\begin{assumption}[Frozen base and product feature]
\label{p1:assump:frozen-base-feature}
The base is frozen with velocity
$v_{\bm\theta}(W)=\ell_\sigma x_\sigma$. Its captured representation is
the deterministic map
$\Phi(W):=(\widehat x_0(W),h_L(W),\sigma,\bm{c})$ with hidden coordinate
$h_L(W)=c_1c_2$. The base velocity readout omits this coordinate, while
the scalar AdaLN-modulated class $\mathcal G$ may read it.
\end{assumption}

Define
\[
u^\star(W):=\mathbb E[u\mid W],
\qquad
\bar r(W):=\mathbb E[r\mid W]=u^\star(W)-v_{\bm\theta}(W).
\]
All expectations below are with respect to $\mathbb P$.
For every square-integrable $W$-measurable predictor $f$,
\[
\mathcal R(f):=\mathbb E[(f(W)-r)^2],
\qquad
\mathcal R_{\mathrm{irr}}
:=\mathbb E[(r-\bar r(W))^2].
\]
In the notation of Section~\ref{sec:theory},
$P=P_W$ and
$\widetilde P=\mathbb P\circ(W,r)^{-1}$.
In this scalar specialization, the frozen checkpoint
$\bm{v}_0$ corresponds to $v_{\bm\theta}$ and
$\mathcal R_P=\mathcal R$.\\
The one-parameter hidden-feature class, viewed as induced predictors on
$W$, is
\[
\mathcal G:=\{g_b:b\in\mathbb R\},
\qquad
g_b(W):=-b\tau_\sigma h_L(W).
\]
Each $g_b$ uses only $\Phi(W)$.
As an oracle enlargement of the intended raw linear readouts, let
\[
\mathcal A:=\left\{
a_0(\sigma)+a_x(\sigma)x_\sigma+a_1(\sigma)c_1+a_2(\sigma)c_2:
\begin{array}{l}
a_0,a_x,a_1,a_2\text{ are Borel measurable},\\[-1mm]
\text{and the induced predictor lies in }L^2(P_W)
\end{array}
\right\}.
\]
The intended one-scalar raw-input comparator families at the same
trainable-readout budget are subclasses of $\mathcal A$.
For nested-class comparisons we use the convention
$\inf\varnothing=+\infty$.

For scalars $a,b$, define the zero-stabilized cosine loss by
\[
\operatorname{cos}_0(a,b):=
\begin{cases}
ab/(|a||b|),&ab\ne0,\\
0,&ab=0,
\end{cases}
\qquad
L_{\cos}(a,b):=1-\operatorname{cos}_0(a,b).
\]
For $\lambda=0.1$, set
\[
\mathcal J_\lambda(b)
:=\mathbb E\!\left[(g_b(W)-r)^2
+\lambda L_{\cos}(g_b(W),r)\right].
\]

\begin{assumption}[Cosine-regularized population fit]
\label{p1:assump:population-optimization}
For a stated tolerance $\epsilon_{\mathrm{opt}}\ge0$, the trained
coefficient $\widehat b\in\mathbb R$ satisfies
\[
\mathcal J_\lambda(\widehat b)
\le \inf_{b\in\mathbb R}\mathcal J_\lambda(b)
+\epsilon_{\mathrm{opt}}.
\]
\end{assumption}

Define the structural constants
\[
S:=\mathbb E_{\sigma\sim q}[\tau_\sigma^2],
\qquad
\Delta_{\mathrm{aff}}
:=\delta^2\mathbb E_{\sigma\sim q}\!\left[
\tau_\sigma^2
\frac{s_\sigma^2}
{s_\sigma^2+(1-\sigma)^2\delta^2}
\right].
\]

\subsection{Main result}
\label{p1:sec:main-theorem}

\begin{theorem}[Detailed frozen-feature advantage]
\label{p1:thm:main}
Under Assumptions~\ref{p1:assump:conditional-gaussian-law}--
\ref{p1:assump:population-optimization}, the following statements hold.
\begin{enumerate}
\item For every $\sigma<1$,
\[
x_\sigma=\frac{s_\sigma^2}{1-\sigma}\widehat x_0,
\]
whereas $\widehat x_0=0$ at $\sigma=1$. On the full interval,
\[
\widehat x_0-x_0=\sigma r.
\]
Thus $r=(\widehat x_0-x_0)/\sigma$ for $\sigma>0$, with the direct
continuous endpoint values $r=\varepsilon$ at $\sigma=0$ and
$r=-x_0$ at $\sigma=1$. Moreover,
\[
u^\star(W)=\ell_\sigma x_\sigma-\tau_\sigma\delta p,
\qquad
\bar r(W)=-\tau_\sigma\delta p.
\]

\item The irreducible risk and structural constants satisfy
\[
\mathcal R_{\mathrm{irr}}
=\mathbb E_{\sigma\sim q}[s_\sigma^{-2}]
\in[1,2],
\qquad
0<S<\infty,
\qquad
0<\Delta_{\mathrm{aff}}<\infty.
\]
For every $b\in\mathbb R$,
\[
\mathcal R(g_b)
=\mathcal R_{\mathrm{irr}}+S(b-\delta)^2,
\qquad
\inf_{a\in\mathcal A}\mathcal R(a)
=\mathcal R_{\mathrm{irr}}+\Delta_{\mathrm{aff}}.
\]
In particular, $g_\delta=\bar r$ and
$\mathcal R(g_\delta)=\mathcal R_{\mathrm{irr}}$.

\item The fitted coefficient satisfies
\[
S(\widehat b-\delta)^2
\le2\lambda+\epsilon_{\mathrm{opt}}.
\]
If the static scalar condition
\[
2\lambda+\epsilon_{\mathrm{opt}}<\Delta_{\mathrm{aff}}
\]
holds, then
\[
\mathcal R(g_{\widehat b})
<\inf_{a\in\mathcal A}\mathcal R(a).
\]
For every intended equal-budget raw comparator class
$\mathcal B\subseteq\mathcal A$, the same condition also gives
\[
\mathcal R(g_{\widehat b})
<\inf_{f\in\mathcal B}\mathcal R(f).
\]

\item For every finite $d\ge1$, the embedding
$\iota_d(z)=(z,0,\ldots,0)$ preserves the feature-risk curve, the
raw-affine infimum, the cosine-regularized objective and the strict
trained-risk margin.
\end{enumerate}
These identities give the hidden-feature refiner's population
$L^2(\mathbb P)$ risk and its margin relative to the specified raw-affine
classes. The parameters $\delta,q,\lambda$ and $\epsilon_{\mathrm{opt}}$
determine the feature curvature $S$, the raw-affine gap
$\Delta_{\mathrm{aff}}$ and the additive excess-risk bound.
\end{theorem}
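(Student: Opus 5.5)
The argument is a direct computation in the scalar Gaussian model, carried out item by item. The only nontrivial step is the raw-affine infimum in item 2.

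\emph{Item 1.} Substitute $v_{\bm\theta}=\ell_\sigma x_\sigma$ into \Eqref{eq:zhat} to get $\widehat x_0=(1-\sigma\ell_\sigma)x_\sigma$. Since $s_\sigma^2-\sigma(2\sigma-1)=1-\sigma$, we have $1-\sigma\ell_\sigma=(1-\sigma)/s_\sigma^2$. This gives the stated proportionality for $\sigma<1$ and $\widehat x_0=0$ at $\sigma=1$. The identity $\widehat x_0-x_0=\sigma r$ is exactly \Eqref{eq:identity}, and the endpoint values of $r$ follow by direct substitution.

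For $u^\star$, condition on $(\sigma,\bm c)$. Then $(x_\sigma,u)$ is jointly Gaussian:
\begin{itemize}
\item $x_\sigma$ has mean $(1-\sigma)\delta p$ and variance $s_\sigma^2$;
\item $u$ has mean $-\delta p$;
\item $\operatorname{Cov}(u,x_\sigma)=\sigma-(1-\sigma)=2\sigma-1$.
\end{itemize}
Gaussian regression gives $u^\star=-\delta p+\ell_\sigma(x_\sigma-(1-\sigma)\delta p)$. The $p$-coefficient is $-\delta(1+(1-\sigma)\ell_\sigma)=-\delta\sigma/s_\sigma^2$, because $s_\sigma^2+(1-\sigma)(2\sigma-1)=\sigma$. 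Hence $u^\star=\ell_\sigma x_\sigma-\tau_\sigma\delta p$ and $\bar r=-\tau_\sigma\delta p$.

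\emph{Item 2.} The conditional residual variance is $\operatorname{Var}(u\mid x_\sigma,\sigma,\bm c)=2-(2\sigma-1)^2/s_\sigma^2$. Using $2s_\sigma^2-(2\sigma-1)^2=1$, this equals $1/s_\sigma^2$. Since $s_\sigma^2\in[1/2,1]$, we get $\mathcal R_{\mathrm{irr}}\in[1,2]$.

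Next, $\tau_\sigma\le2$, so $S<\infty$. Also $\tau_\sigma>0$ on $(0,1]$, and $q$ has full support, so $S>0$; the same argument gives $0<\Delta_{\mathrm{aff}}<\infty$.

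For the feature class, \Cref{prop:pythagoras} gives $\mathcal R(g_b)=\mathcal R_{\mathrm{irr}}+\E[\tau_\sigma^2(b-\delta)^2p^2]$. Since $p^2=1$, this is $\mathcal R_{\mathrm{irr}}+S(b-\delta)^2$.

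For the affine class, again apply \Cref{prop:pythagoras}. Write $a-\bar r=(a-\ell_\sigma x_\sigma)+\dots$, or more directly note that it suffices to minimize $\E[(a(W)+\tau_\sigma\delta p)^2]$. Because the coefficients may depend on $\sigma$ arbitrarily, the minimization decouples pointwise in $\sigma$: for each $\sigma$ we take the $L^2$ projection of $-\tau_\sigma\delta p$ onto $\operatorname{span}\{1,x_\sigma,c_1,c_2\}$ under the conditional law.

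Two facts make this projection explicit. First, $p$ is uncorrelated with $1$, $c_1$, $c_2$. Second, $x_\sigma=(1-\sigma)\delta p+\eta$ with $\eta$ independent of $\bm c$, so $x_\sigma$ is orthogonal to $c_1,c_2$. The projection of $p$ is therefore onto $x_\sigma$ alone, with coefficient $(1-\sigma)\delta/s_\sigma'^2$. Here $s_\sigma'^2=\E x_\sigma^2=(1-\sigma)^2\delta^2+s_\sigma^2$, since $\operatorname{Var}\eta=s_\sigma^2$.

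The projection residual is $1-(1-\sigma)^2\delta^2/s_\sigma'^2=s_\sigma^2/(s_\sigma^2+(1-\sigma)^2\delta^2)$. Multiplying by $\tau_\sigma^2\delta^2$ and integrating over $q$ gives $\Delta_{\mathrm{aff}}$.

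The optimal coefficients are bounded measurable functions of $\sigma$, so the optimizer lies in $L^2$ and the infimum is attained. This pointwise-in-$\sigma$ projection, including the measurability and integrability bookkeeping, is the step I expect to need the most care.

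\emph{Item 3.} Since $0\le L_{\cos}\le2$, we have $\mathcal R(g_b)\le\mathcal J_\lambda(b)\le\mathcal R(g_b)+2\lambda$. Then
\[
\mathcal R(g_{\widehat b})\le \mathcal J_\lambda(\widehat b)\le \inf_b\mathcal J_\lambda+\epsilon_{\mathrm{opt}}\le \mathcal J_\lambda(\delta)+\epsilon_{\mathrm{opt}}\le\mathcal R_{\mathrm{irr}}+2\lambda+\epsilon_{\mathrm{opt}}.
\]
This gives $S(\widehat b-\delta)^2\le2\lambda+\epsilon_{\mathrm{opt}}$. Under the stated condition it is strictly below $\mathcal R_{\mathrm{irr}}+\Delta_{\mathrm{aff}}$. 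For any subclass $\mathcal B\subseteq\mathcal A$ the infimum can only increase, with $\inf\varnothing=+\infty$.

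\emph{Item 4.} Embed via $\iota_d$ with zero-valued extra coordinates for data, noise and predictors. Norms and inner products are preserved, so all risks and cosines are unchanged. For the affine class, the extra coordinates carry zero target, so their optimal contribution is zero and the infimum is unchanged.
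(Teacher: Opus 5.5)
Your proposal is correct and follows the paper's proof essentially step for step. The paper rotates $(\xi,\varepsilon)$ into $(Y_\sigma,N_\sigma)$ to obtain $u^\star$ and $r-\bar r=N_\sigma/s_\sigma^2$, where you use bivariate Gaussian regression and a Schur complement; its completion of squares with the diagonal Gram matrix of $(1,c_1,c_2,x_t)$ is your projection of $p$ onto $x_\sigma$, and it records $d_t=\operatorname{Var}_t(\bar r)(1-\rho_t^2)$. The two details you leave implicit are routine, and the paper treats them as you anticipate: $\mathbb E_t[x_t]=0$, which makes $x_t$ orthogonal to the constant regressor as well as to $c_1,c_2$, and the fixed-noise disintegration $\mathbb E[H]=\int\mathbb E_t[H]\,q(dt)$, which holds because $\sigma$ is independent.
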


The product feature represents the conditional Bayes residual exactly.
The noise law $q$ weights the risk across noise levels, with full support
ensuring positive feature curvature $S$ and raw-affine gap
$\Delta_{\mathrm{aff}}$. The cosine weight $\lambda$ and optimization
tolerance $\epsilon_{\mathrm{opt}}$ bound the fitted predictor's excess
risk. The strict scalar condition places this bound below the raw-affine
gap, certifying the trained refiner's advantage.

Theorem~\ref{p1:thm:main} is the detailed version of the main-text
Theorem~\ref{thm:frozen_feature}.

\subsection{Proof}
\label{p1:sec:proof-details}

\begin{proof}
We prove the scalar conclusions in four steps. The finite-dimensional
preservation statement is verified in the embedding paragraph below.

\paragraph{Step 1: Residual interface and conditional Bayes residual.}
Since
\[
s_\sigma^2
=2\left(\sigma-\frac12\right)^2+\frac12\ge\frac12,
\]
all denominators below are nonzero.  Moreover,
\[
1-\sigma\ell_\sigma
=\frac{s_\sigma^2-\sigma(2\sigma-1)}{s_\sigma^2}
=\frac{1-\sigma}{s_\sigma^2},
\]
which proves the displayed clean-prediction formula and its inverse for
$\sigma<1$:
\[
\widehat x_0=\frac{1-\sigma}{s_\sigma^2}x_\sigma,
\qquad
x_\sigma=\frac{s_\sigma^2}{1-\sigma}\widehat x_0.
\]
Direct substitution gives
$\widehat x_0=x_0$ at $\sigma=0$ and $\widehat x_0=0$ at $\sigma=1$.
The interpolation identity also gives, without division,
\[
\widehat x_0-x_0
=x_\sigma-\sigma v_{\bm\theta}-x_0
=\sigma(\varepsilon-x_0-v_{\bm\theta})
=\sigma r.
\]
Consequently, $r=(\widehat x_0-x_0)/\sigma$ for every $\sigma>0$.
The direct residual definition gives $r=\varepsilon$ at zero and
$r=-x_0$ at one, which agree with the continuous endpoint extension.

To compute the conditional mean, center the observation as
\[
Y_\sigma:=x_\sigma-(1-\sigma)\delta p
=(1-\sigma)\xi+\sigma\varepsilon
\]
and introduce its orthogonal complement
\[
N_\sigma:=(1-\sigma)\varepsilon-\sigma\xi.
\]
Since $p$ is measurable with respect to $\bm{c}$ and
\[
Y_\sigma=x_\sigma-(1-\sigma)\delta p,
\qquad
x_\sigma=Y_\sigma+(1-\sigma)\delta p,
\]
the tuples $W=(x_\sigma,\sigma,\bm{c})$ and
$(Y_\sigma,\sigma,\bm{c})$ generate the same sigma-field.
Conditional on $(\sigma,\bm{c})$, the vector
$(Y_\sigma/s_\sigma,N_\sigma/s_\sigma)$ is standard Gaussian with
independent coordinates, because it is obtained from
$(\xi,\varepsilon)$ by the orthogonal transformation
\[
\begin{pmatrix}
Y_\sigma/s_\sigma\\
N_\sigma/s_\sigma
\end{pmatrix}
=\frac1{s_\sigma}
\begin{pmatrix}
1-\sigma&\sigma\\
-\sigma&1-\sigma
\end{pmatrix}
\begin{pmatrix}
\xi\\
\varepsilon
\end{pmatrix}.
\]
Because $(\xi,\varepsilon)$ is independent of $(\sigma,\bm{c})$, the
fixed-$t$ Gaussian regression above defines a conditional kernel for every
value of $(\sigma,\bm{c})$. The displayed coefficient functions are Borel
and integrable because $s_\sigma^2\ge1/2$. Since $W$ and
$(Y_\sigma,\sigma,\bm{c})$ generate the same sigma-field, these kernels give
the Borel conditional versions
\[
\mathbb E[\xi\mid W]
=\frac{1-\sigma}{s_\sigma^2}Y_\sigma,
\qquad
\mathbb E[\varepsilon\mid W]
=\frac{\sigma}{s_\sigma^2}Y_\sigma.
\]
Using $u=\varepsilon-\xi-\delta p$ and
$1+(1-\sigma)\ell_\sigma=\tau_\sigma$ gives
\[
u^\star(W)=\ell_\sigma x_\sigma-\tau_\sigma\delta p,
\qquad
\bar r(W)=u^\star(W)-v_{\bm\theta}(W)=-\tau_\sigma\delta p.
\]
Moreover, conditional on $(\sigma,\bm{c})$, the normalized coordinate
$N_\sigma/s_\sigma$ is independent of $Y_\sigma/s_\sigma$. Conditioning
additionally on $W$, equivalently on $(Y_\sigma,\sigma,\bm{c})$, therefore
leaves its Gaussian law unchanged. The same calculation yields
\[
r-\bar r(W)=\frac{N_\sigma}{s_\sigma^2},
\qquad
(r-\bar r(W))\mid W
\sim\mathcal N\!\left(0,s_\sigma^{-2}\right).
\]
Therefore
\begin{equation}
\mathcal R_{\mathrm{irr}}
=\mathbb E_{\sigma\sim q}[s_\sigma^{-2}].
\label{eq:p1:exact-irreducible-risk}
\end{equation}
Since $s_\sigma^2\in[1/2,1]$, this quantity belongs to $[1,2]$.

\paragraph{Step 2: Same-target decomposition and feature oracle.}
The following decomposition holds on any probability space with
$r\in L^2(\mathbb P)$ and $f(W)\in L^2(\mathbb P)$. Set
$\bar r(W):=\mathbb E[r\mid W]$ and $\eta:=r-\bar r(W)$.
Conditional Jensen's inequality gives
\[
\mathbb E[\bar r(W)^2]\le\mathbb E[r^2].
\]
Thus $\eta$ and $f(W)-\bar r(W)$ belong to $L^2$, and their product is
integrable by Cauchy--Schwarz. The defining orthogonality property of
conditional expectation gives
\[
\mathbb E[(f(W)-\bar r(W))\eta]
=\mathbb E[(f(W)-\bar r(W))\mathbb E[\eta\mid W]]=0.
\]
Expanding the square therefore yields
\[
\mathcal R(f)
=\mathcal R_{\mathrm{irr}}
+\mathbb E[(f(W)-\bar r(W))^2].
\]
This also proves the scalar case of \eqref{eq:risk_decomposition}. For a
finite-dimensional $\mathcal X$, choose an orthonormal basis and apply the
same identity to each coordinate of $\bm{r}$ and $\bm{f}(W)$. Conditional
expectation acts coordinatewise, and summing the coordinate identities
gives the squared-norm identity in \eqref{eq:risk_decomposition}.
Returning to the present Gaussian witness, since $p^2=1$ and
$g_b-\bar r=(\delta-b)\tau_\sigma p$, we obtain
\[
\mathcal R(g_b)
=\mathcal R_{\mathrm{irr}}+S(b-\delta)^2.
\]
In particular, $g_\delta=\bar r$ pointwise for the chosen Borel version.
The bound $0\le\tau_\sigma\le2$ gives $S\le4$.  Since
$\tau_\sigma^2$ is continuous and strictly positive on $(0,1]$, the full
support of $q$ gives $S>0$. More explicitly, for
$K=[1/2,3/4]$, one has $q(K)>0$, $\tau_\sigma^2\ge1/4$ on $K$ and hence
\[
S\ge\frac14q(K)>0.
\]

\paragraph{Step 3: Exact raw-affine projection.}
For each $t\in[0,1]$, let $\mathbb E_t$ denote expectation over
$(\bm{c},\xi,\varepsilon)$ with the noise level fixed at $\sigma=t$.
By independence of $\sigma$, every nonnegative or integrable Borel
function $H$ satisfies
\[
\mathbb E[H]
=\int_{[0,1]}\mathbb E_t[H]\,q(dt).
\]
Fix $t\in[0,1]$, put $m_t:=1-t$ and define
\[
V_t:=s_t^2+m_t^2\delta^2,
\qquad
d_t:=\delta^2\tau_t^2\frac{s_t^2}{V_t}.
\]
Under the fixed-noise law, the Gram matrix of
$(1,c_1,c_2,x_t)$ is diagonal with diagonal $(1,1,1,V_t)$, because
\[
\mathbb E_t[x_t]=0,
\quad
\mathbb E_t[x_t^2]=V_t,
\quad
\mathbb E_t[x_tc_1]=\mathbb E_t[x_tc_2]
=\mathbb E_t[c_1c_2]=0.
\]
The target-design moments are
\[
\mathbb E_t[\bar r]
=\mathbb E_t[\bar r c_1]
=\mathbb E_t[\bar r c_2]=0,
\qquad
\mathbb E_t[\bar r x_t]=-m_t\tau_t\delta^2,
\qquad
\mathbb E_t[\bar r^2]=\tau_t^2\delta^2.
\]
Consequently, for
$A_\alpha:=\alpha_0+\alpha_xx_t+\alpha_1c_1+\alpha_2c_2$,
completion of the square gives
\[
\mathbb E_t[(A_\alpha-\bar r)^2]
=\alpha_0^2+\alpha_1^2+\alpha_2^2
+V_t\left(\alpha_x+\frac{m_t\tau_t\delta^2}{V_t}\right)^2
+d_t.
\]
This identity also holds at $t=0,1$, where $d_0=0$ and $d_1=\delta^2$.
The pointwise minimizer is therefore
\[
a_0^\star=a_1^\star=a_2^\star=0,
\qquad
a_x^\star(t)=-\frac{m_t\tau_t\delta^2}{V_t}.
\]
Equivalently, let
\[
a_t^\star:=-\frac{m_t\tau_t\delta^2}{V_t}x_t,
\qquad
e_t^\star:=\bar r-a_t^\star.
\]
The displayed moment identities give
\[
\mathbb E_t[e_t^\star]
=\mathbb E_t[e_t^\star c_1]
=\mathbb E_t[e_t^\star c_2]
=\mathbb E_t[e_t^\star x_t]
=0.
\]
Thus $a_t^\star$ is the conditional $L^2$ projection of $\bar r$ onto
the raw-affine design, and
\[
\mathbb E_t[(e_t^\star)^2]=d_t.
\]
Because $V_t\ge s_t^2\ge1/2$, these coefficients are continuous and
bounded on $[0,1]$.  Moreover,
\[
\sup_{t\in[0,1]}\mathbb E_t[x_t^2]
=\sup_{t\in[0,1]}V_t
\le1+\delta^2.
\]
Thus the induced predictor
$a^\star(W)=a_x^\star(\sigma)x_\sigma$ is square-integrable and belongs
to $\mathcal A$.
For every $a\in\mathcal A$ with coefficient functions
$a_0,a_x,a_1,a_2$, the same diagonal Gram calculation applied to
$a(W)^2$ gives
\[
\int_{[0,1]}
\left(a_0(t)^2+a_1(t)^2+a_2(t)^2+V_ta_x(t)^2\right)q(dt)
=\mathbb E[a(W)^2]
<\infty.
\]
Hence the conditional square-completion identity may be integrated with
respect to $q(dt)$, giving
\[
\begin{aligned}
\mathbb E[(a(W)-\bar r(W))^2]
=\int_{[0,1]}\Bigg[&a_0(t)^2+a_1(t)^2+a_2(t)^2\\
&+V_t\left(
a_x(t)+\frac{(1-t)\tau_t\delta^2}{V_t}
\right)^2+d_t\Bigg]q(dt).
\end{aligned}
\]
Consequently,
\[
\mathbb E[(a(W)-\bar r(W))^2]
\ge\int_{[0,1]}d_t\,q(dt).
\]
The coefficient choice $a^\star$ makes every squared term vanish for all
$t\in[0,1]$, including both endpoints, so equality is attained. Therefore,
\[
\inf_{a\in\mathcal A}\mathbb E[(a(W)-\bar r(W))^2]
=\mathbb E_{t\sim q}[d_t]
=\Delta_{\mathrm{aff}}.
\]
Applying the same-target decomposition gives
\[
\inf_{a\in\mathcal A}\mathcal R(a)
=\mathcal R_{\mathrm{irr}}+\Delta_{\mathrm{aff}}.
\]
The bounds $0\le d_t\le\delta^2\tau_t^2\le4\delta^2$ prove finiteness.
Moreover, $d_t$ is continuous and strictly positive for every $t>0$.
Full support of $q$ therefore gives $\Delta_{\mathrm{aff}}>0$.
Explicitly, on $K=[1/2,3/4]$,
\[
d_t\ge\frac{\delta^2}{2(4+\delta^2)},
\qquad
\Delta_{\mathrm{aff}}
\ge\frac{\delta^2}{2(4+\delta^2)}q(K)>0.
\]
Equivalently, define the squared conditional correlation
\[
\rho_t^2:=\operatorname{Corr}_t(p,x_t)^2
=\frac{m_t^2\delta^2}{V_t}.
\]
The conditional gap satisfies
\[
d_t=\operatorname{Var}_t(\bar r)(1-\rho_t^2),
\]
so it is exactly the product-feature energy unexplained by the raw-affine
projection. The raw-affine projection explains the fraction $\rho_t^2$ of
the conditional Bayes-residual energy, while the hidden product feature
represents the target exactly.

\paragraph{Step 4: Transfer the fitted objective to strict risk separation.}
The zero-stabilized cosine loss is measurable and lies in $[0,2]$.
Therefore, for every $b$,
\[
\mathcal R(g_b)
\le\mathcal J_\lambda(b)
\le\mathcal R(g_b)+2\lambda.
\]
Using Assumption~\ref{p1:assump:population-optimization} and comparing
with the oracle coefficient $b=\delta$ gives
\[
\begin{aligned}
\mathcal R(g_{\widehat b})
&\le\mathcal J_\lambda(\widehat b)\\
&\le\inf_b\mathcal J_\lambda(b)+\epsilon_{\mathrm{opt}}\\
&\le\mathcal J_\lambda(\delta)+\epsilon_{\mathrm{opt}}\\
&\le\mathcal R_{\mathrm{irr}}+2\lambda+\epsilon_{\mathrm{opt}}.
\end{aligned}
\]
Combining this bound with the exact feature-risk curve yields
\begin{equation}
S(\widehat b-\delta)^2
\le2\lambda+\epsilon_{\mathrm{opt}},
\qquad
|\widehat b-\delta|
\le\sqrt{\frac{2\lambda+\epsilon_{\mathrm{opt}}}{S}}.
\label{eq:p1:trained-coefficient-radius}
\end{equation}
For every $b$ the exact class margin is
\begin{equation}
\inf_{a\in\mathcal A}\mathcal R(a)-\mathcal R(g_b)
=\Delta_{\mathrm{aff}}-S(b-\delta)^2.
\label{eq:p1:exact-feature-affine-margin}
\end{equation}
Since $S>0$ and $\Delta_{\mathrm{aff}}>0$, this identity gives the exact
coefficient interval for strict advantage:
\begin{equation}
\mathcal R(g_b)<\inf_{a\in\mathcal A}\mathcal R(a)
\quad\Longleftrightarrow\quad
|b-\delta|<\sqrt{\frac{\Delta_{\mathrm{aff}}}{S}}.
\label{eq:p1:positive-coefficient-region}
\end{equation}
The condition $2\lambda+\epsilon_{\mathrm{opt}}<\Delta_{\mathrm{aff}}$
places the fitted coefficient in this interval by
\eqref{eq:p1:trained-coefficient-radius}.
Evaluating at $\widehat b$ proves
\begin{equation}
\inf_{a\in\mathcal A}\mathcal R(a)-\mathcal R(g_{\widehat b})
\ge\Delta_{\mathrm{aff}}-(2\lambda+\epsilon_{\mathrm{opt}}).
\label{eq:p1:trained-risk-margin}
\end{equation}
Under the stated scalar condition this margin is strictly positive.  If
$\mathcal B\subseteq\mathcal A$, then
$\inf_{a\in\mathcal A}\mathcal R(a)
\le\inf_{f\in\mathcal B}\mathcal R(f)$, which proves the final claim.
The fourth conclusion follows from the finite-dimensional embedding
argument below.
\end{proof}

\subsection{Useful specializations}

\paragraph{Uniform noise law.}
If $q$ is uniform on $[0,1]$, use
\[
z:=2\sigma-1,
\qquad
s_\sigma^2=\frac{1+z^2}{2},
\qquad
d\sigma=\frac12dz.
\]
Then
\[
\mathcal R_{\mathrm{irr}}
=\int_{-1}^1\frac{1}{1+z^2}\,dz
=\frac\pi2,
\]
while the odd part of $(z+1)^2$ integrates to zero and
\[
S
=\int_{-1}^1\frac{(z+1)^2}{2(1+z^2)^2}\,dz
=\frac12\int_{-1}^1\frac{1}{1+z^2}\,dz
=\frac\pi4.
\]
Thus
\begin{equation}
\mathcal R_{\mathrm{irr}}=\frac{\pi}{2},
\qquad
S=\frac{\pi}{4},
\qquad
\mathcal R(g_b)=\frac{\pi}{2}+\frac{\pi}{4}(b-\delta)^2.
\label{eq:p1:uniform-noise-closed-forms}
\end{equation}
Consequently,
\[
|\widehat b-\delta|
\le\frac{2}{\sqrt\pi}
\sqrt{2\lambda+\epsilon_{\mathrm{opt}}}.
\]

\paragraph{Finite-dimensional embedding.}
\label{p1:app:step-008}
The witness uses one active sample coordinate so that every source of the
risk gap is visible. Fix a finite $d\ge1$ and define
\[
\iota_d:\mathbb R\to\mathbb R^d,
\qquad
\iota_d(z):=(z,0,\ldots,0),
\]
\[
\bm{r}^{(d)}:=\iota_d(r),
\qquad
\bar{\bm{r}}^{(d)}:=\iota_d(\bar r).
\]
For a scalar predictor $f$, write $\bm{f}^{(d)}:=\iota_d\circ f$ and set
\[
\mathcal R_d(\bm{f}^{(d)})
:=\mathbb E\!\left[
\|\bm{f}^{(d)}(W)-\bm{r}^{(d)}\|_2^2
\right].
\]
The embedded classes are
\[
\mathcal G_d:=\{\bm{g}_b^{(d)}:b\in\mathbb R\},
\qquad
\mathcal A_d:=\{\bm{a}^{(d)}:a\in\mathcal A\}.
\]
Since $\iota_d$ preserves inner products, norms and pointwise distances,
\[
\mathbb E[\bm{r}^{(d)}\mid W]=\bar{\bm{r}}^{(d)},
\qquad
\mathcal R_d(\bm{g}_b^{(d)})=\mathcal R(g_b),
\qquad
\mathcal R_d(\bm{a}^{(d)})=\mathcal R(a).
\]
The map $a\mapsto\bm{a}^{(d)}$ is a bijection from $\mathcal A$ to
$\mathcal A_d$. Therefore
\[
\inf_{\bm{a}^{(d)}\in\mathcal A_d}\mathcal R_d(\bm{a}^{(d)})
=\mathcal R_{\mathrm{irr}}+\Delta_{\mathrm{aff}},
\qquad
\bm{g}_\delta^{(d)}=\bar{\bm{r}}^{(d)}.
\]
For $\bm{y},\bm{z}\in\mathbb R^d$, define
\[
\operatorname{cos}_{0,d}(\bm{y},\bm{z}):=
\begin{cases}
\langle\bm{y},\bm{z}\rangle/(\|\bm{y}\|_2\|\bm{z}\|_2),
&\|\bm{y}\|_2\|\bm{z}\|_2>0,\\
0,&\|\bm{y}\|_2\|\bm{z}\|_2=0,
\end{cases}
\]
and $L_{\cos,d}:=1-\operatorname{cos}_{0,d}$. Define
\[
\mathcal J_{\lambda,d}(b)
:=\mathbb E\!\left[
\|\bm{g}_b^{(d)}(W)-\bm{r}^{(d)}\|_2^2
+\lambda L_{\cos,d}(\bm{g}_b^{(d)}(W),\bm{r}^{(d)})
\right].
\]
The same isometry gives
$\mathcal J_{\lambda,d}(b)=\mathcal J_\lambda(b)$ for every $b$.
Consequently, if
$2\lambda+\epsilon_{\mathrm{opt}}<\Delta_{\mathrm{aff}}$, then
\[
\mathcal R_d(\bm{g}_{\widehat b}^{(d)})
<\inf_{\bm{a}^{(d)}\in\mathcal A_d}
\mathcal R_d(\bm{a}^{(d)}),
\]
with margin at least
$\Delta_{\mathrm{aff}}-(2\lambda+\epsilon_{\mathrm{opt}})$,
independently of $d$. This is an isometric embedding of the scalar witness,
not a claim about a general multidimensional data model.

% ---------------------------------------------------------------------------
% Perspective 2: finite-budget matched-resource comparison. This section
% retains the positive chain in perspective_2/idea_3, Steps 1--7.
% ---------------------------------------------------------------------------

\section{Matched-Resource Comparison: Proof}
\label{p2:app:matched-resource-proof}

This section proves that a refiner's tangent risk-reduction advantage transfers
to its trained output when the advantage exceeds the two perturbation bounds,
as stated in Theorem~\ref{thm:p2-matched-resource}. Unsubscripted norms
of random correction maps are those of $L^2(P;\mathcal X)$. Norms inside an
expectation are sample-space norms, while parameter and operator norms have
their usual finite-dimensional meanings. Linear operators retain ordinary
uppercase symbols, while parameter vectors and vector-valued fields are bold.

\subsection{Full setting and local certificate}
\label{p2:app:full-setting}

Let $\mathcal X$ be a finite-dimensional real Hilbert space. Write
$W=(\mathbf{X}_\Sigma,\Sigma,\bm{C})$ for the current noisy sample,
noise level and conditioning vector, and let $P$ be its law. Let
$\bm{U}$ denote the realized interpolation velocity. At a fixed base
checkpoint $\bm{v}_0$, define
\begin{equation}
  \bm{r}=\bm{U}-\bm{v}_0(W),
  \qquad
  \widetilde P=\operatorname{Law}(W,\bm{r}),
  \qquad
  \bar{\bm{r}}(W)=\E_{\widetilde P}[\bm{r}\mid W],
  \qquad
  s=\|\bm{r}\|_{L^2(\widetilde P;\mathcal X)}<\infty.
  \label{eq:p2-residual-target}
\end{equation}
For $\bm{f}\in L^2(P;\mathcal X)$, set
\begin{equation}
  \mathcal R_P(\bm{f})=\E_{\widetilde P}\|\bm{r}-\bm{f}(W)\|^2.
  \label{eq:p2-risk}
\end{equation}
The refiner and continued-base correction families are
\begin{equation}
  F_{\mathrm r}(\bm{\alpha})(W)=\bm{g}_{\bm{\phi}_0+\bm{\alpha}}(\mathcal{Z}(W)),
  \qquad
  F_{\mathrm b}(\bm{\alpha})(W)
  =\bm{v}_{\bm{\theta}_0+\bm{\alpha}}(W)-\bm{v}_0(W),
  \label{eq:p2-correction-families}
\end{equation}
where $F_j(0)=0$ and $\mathcal{Z}(W)$ is computed only from the current state and frozen
base computation, as in \Secref{sec:method_inputs}.

A componentwise ledger
$\mathcal C=(C_{\mathrm{train}},C_{\mathrm{state}},C_{\mathrm{deploy}})$ is
fixed before either run. If one update of arm $j\in\{\mathrm r,\mathrm b\}$
costs $c_j>0$, its charged horizon is
\begin{equation}
  T_j=\left\lfloor\frac{C_{\mathrm{train}}}{c_j}\right\rfloor.
  \label{eq:p2-charged-horizon}
\end{equation}
Both runs must fit the declared optimizer-state and inference budgets. Their
population objectives are
\begin{equation}
  L_{\mathrm b}(\bm{\alpha})=\tfrac12\mathcal R_P(F_{\mathrm b}(\bm{\alpha})),
  \qquad
  L_{\mathrm r}(\bm{\alpha})=\tfrac12\mathcal R_P(F_{\mathrm r}(\bm{\alpha}))
  +\lambda_{\angle}\mathcal A(\bm{\alpha}),
  \quad \lambda_{\angle}=0.1,
  \label{eq:p2-objectives}
\end{equation}
where $\mathcal A$ is the population angular penalty induced by the
implemented thresholded cosine loss. For a candidate step $\eta>0$, define
\begin{equation}
  \bm{\alpha}_{j,0}^{\eta}=0,
  \qquad
  \bm{\alpha}_{j,t+1}^{\eta}
  =\bm{\alpha}_{j,t}^{\eta}-\eta\nabla L_j(\bm{\alpha}_{j,t}^{\eta}),
  \qquad
  \bm{f}_{j,\eta}=F_j(\bm{\alpha}_{j,T_j}^{\eta}).
  \label{eq:p2-actual-recursion}
\end{equation}

At the checkpoint, let
\begin{equation}
  \Phi_j=J_j(0),
  \qquad
  H_j=\Phi_j^*\Phi_j,
  \qquad
  \bm{b}_j=\Phi_j^*\bar{\bm{r}},
  \label{eq:p2-tangent-quantities}
\end{equation}
where $J_j(\bm{\alpha})=D F_j(\bm{\alpha})$. The zero-initialized tangent recursion is
\begin{equation}
  \bm{\beta}_{j,t+1}^{\eta}
  =(I-\eta H_j)\bm{\beta}_{j,t}^{\eta}+\eta \bm{b}_j,
  \qquad \bm{\beta}_{j,0}^{\eta}=0.
  \label{eq:p2-tangent-recursion}
\end{equation}

\begin{assumption}[Finite-budget local certificate]
\label{p2:assump:finite-budget-certificate}
For each arm there are finite $R_j,K_j,\Lambda_j\ge0$ such that $F_j$ is
continuously differentiable on $\|\bm{\alpha}\|\le R_j$ and
\begin{equation}
  \|J_j(\bm{\alpha})\|_{\mathrm{op}}\le K_j,
  \qquad
  \|J_j(\bm{\alpha})-J_j(\bm{\alpha}')\|_{\mathrm{op}}
  \le\Lambda_j\|\bm{\alpha}-\bm{\alpha}'\|.
  \label{eq:p2-local-jacobian}
\end{equation}
On the refiner ball, $\mathcal A$ is continuously differentiable and
$\|\nabla\mathcal A(\bm{\alpha})\|\le G_{\angle}<\infty$. Define
\begin{align}
  q_{\mathrm b}&=0,
  &q_{\mathrm r}&=\lambda_{\angle}G_{\angle},
  \nonumber\\
  \Gamma_j&=K_j(s+K_jR_j)+q_j,
  &\omega_j&=\Lambda_jR_j(s+K_jR_j)
  +\tfrac12K_j\Lambda_jR_j^2+q_j.
  \label{eq:p2-local-envelopes}
\end{align}
Each $\mathcal E_j(\mathcal C)$ is a nonempty compact subset of a common
compact range $\mathcal E_0(\mathcal C)\subset(0,\infty)$ fixed before the
runs. For a fixed $\zeta_{\mathrm{stab}}\in(0,2)$, every
$\eta\in\mathcal E_j(\mathcal C)$ satisfies
\begin{equation}
  \eta\|H_j\|_{\mathrm{op}}\le2-\zeta_{\mathrm{stab}},
  \qquad
  \eta T_j\Gamma_j\le R_j.
  \label{eq:p2-admissible-steps}
\end{equation}
\end{assumption}

The Jacobian bound $K_j$ controls output sensitivity, and $\Lambda_j$
controls the linearization remainder. The angular-gradient bound contributes
$q_j$ to the refiner's update and perturbation bounds. The two step-size
conditions keep the tangent recursion stable and both parameter sequences
inside the radius $R_j$ through the charged horizon $T_j$. Compact candidate
sets ensure that each optimized risk reduction is attained.

Define the actual and tangent risk reductions by
\begin{align}
  \Delta_j(\mathcal C,\eta)
  &=\mathcal R_P(0)-\mathcal R_P(\bm{f}_{j,\eta}),
  &\Delta_j(\mathcal C)
  &=\sup_{\eta\in\mathcal E_j(\mathcal C)}
    \Delta_j(\mathcal C,\eta),
  \nonumber\\
  \Delta_j^{\mathrm{lin}}(\mathcal C,\eta)
  &=\bm{b}_j^*H_j^\dagger
    [I-(I-\eta H_j)^{2T_j}]\bm{b}_j,
  &\Delta_j^{\mathrm{lin}}(\mathcal C)
  &=\sup_{\eta\in\mathcal E_j(\mathcal C)}
    \Delta_j^{\mathrm{lin}}(\mathcal C,\eta).
  \label{eq:p2-risk-reductions}
\end{align}
Finally, put
\begin{equation}
  \varepsilon_j(\mathcal C,\eta)
  =\tfrac12\Lambda_jR_j^2+K_j\eta T_j\omega_j,
  \qquad
  e_j(\mathcal C)
  =\sup_{\eta\in\mathcal E_j(\mathcal C)}
  2(s+K_jR_j)\varepsilon_j(\mathcal C,\eta).
  \label{eq:p2-error-envelopes}
\end{equation}

\subsection{Finite-budget perturbation and separation}

\begin{proposition}[Finite-budget tangent certificate]
\label{p2:prop:finite-budget-certificate}
Under Assumption~\ref{p2:assump:finite-budget-certificate}, for every
$j\in\{\mathrm r,\mathrm b\}$ and
$\eta\in\mathcal E_j(\mathcal C)$,
\begin{equation}
  \|\bm{\alpha}_{j,t}^{\eta}\|,
  \|\bm{\beta}_{j,t}^{\eta}\|
  \le t\eta\Gamma_j\le R_j
  \quad (0\le t\le T_j),
  \label{eq:p2-app-ball}
\end{equation}
and
\begin{equation}
  \|F_j(\bm{\alpha}_{j,T_j}^{\eta})
  -\Phi_j\bm{\beta}_{j,T_j}^{\eta}\|
  \le\varepsilon_j(\mathcal C,\eta).
  \label{eq:p2-app-output-error}
\end{equation}
The tangent reduction is exactly
\begin{equation}
  \mathcal R_P(0)-
  \mathcal R_P(\Phi_j\bm{\beta}_{j,T_j}^{\eta})
  =\bm{b}_j^*H_j^\dagger
  [I-(I-\eta H_j)^{2T_j}]\bm{b}_j.
  \label{eq:p2-app-exact-tangent-reduction}
\end{equation}
Consequently,
\begin{equation}
  |\Delta_j(\mathcal C,\eta)
  -\Delta_j^{\mathrm{lin}}(\mathcal C,\eta)|
  \le2(s+K_jR_j)\varepsilon_j(\mathcal C,\eta),
  \label{eq:p2-app-candidate-risk-error}
\end{equation}
and
\begin{equation}
  |\Delta_j(\mathcal C)-\Delta_j^{\mathrm{lin}}(\mathcal C)|
  \le e_j(\mathcal C).
  \label{eq:p2-app-optimized-risk-error}
\end{equation}
The actual and tangent optimized reductions are attained on their respective
candidate sets.
\end{proposition}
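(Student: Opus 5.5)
The plan is to compare the actual gradient recursion \eqref{eq:p2-actual-recursion} with the tangent recursion \eqref{eq:p2-tangent-recursion} step by step. Both are controlled by one gradient formula that comes from Proposition~\ref{prop:pythagoras}. I treat $F_j$ as a $C^1$ map from the parameter ball into $L^2(P;\mathcal X)$, with Jacobian $J_j(\bm\alpha)$. Proposition~\ref{prop:pythagoras} gives
\[
\tfrac12\mathcal R_P(F_j(\bm\alpha))=\tfrac12\mathcal R_{\mathrm{irr}}+\tfrac12\|F_j(\bm\alpha)-\bar{\bm r}\|^2,
\]
so
\[
\nabla L_j(\bm\alpha)=J_j(\bm\alpha)^*\bigl(F_j(\bm\alpha)-\bar{\bm r}\bigr)+\lambda_\angle\nabla\mathcal A(\bm\alpha)\,\1_{j=\mathrm r}.
\]
The tangent gradient is $H_j\bm\beta-\bm b_j=\Phi_j^*(\Phi_j\bm\beta-\bar{\bm r})$. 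Two elementary bounds are used throughout. First, $\|\bar{\bm r}\|\le s$ by conditional Jensen. Second, since $F_j(0)=0$, the mean value inequality gives $\|F_j(\bm\alpha)\|\le K_j\|\bm\alpha\|$ on the ball.

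\textbf{Step 1: ball bounds \eqref{eq:p2-app-ball}.} I argue by induction on $t$. If $\|\bm\alpha_{j,t}^\eta\|\le t\eta\Gamma_j\le R_j$, then
\[
\|\nabla L_j(\bm\alpha_{j,t}^\eta)\|\le K_j(s+K_jR_j)+q_j=\Gamma_j,
\]
so $\|\bm\alpha_{j,t+1}^\eta\|\le(t+1)\eta\Gamma_j$. The condition $\eta T_j\Gamma_j\le R_j$ keeps every iterate inside the ball up to $T_j$. The same argument, with $\Phi_j$ in place of $J_j$, bounds $\bm\beta_{j,t}^\eta$.

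\textbf{Step 2: output error \eqref{eq:p2-app-output-error}.} I decompose the gradient mismatch as
\[
\nabla L_j(\bm\alpha)-\Phi_j^*(\Phi_j\bm\alpha-\bar{\bm r})=(J_j(\bm\alpha)-\Phi_j)^*(F_j(\bm\alpha)-\bar{\bm r})+\Phi_j^*\bigl(F_j(\bm\alpha)-\Phi_j\bm\alpha\bigr)+\lambda_\angle\nabla\mathcal A\,\1_{j=\mathrm r}.
\]
The Lipschitz Jacobian bound controls the first term and the Taylor remainder $\|F_j(\bm\alpha)-\Phi_j\bm\alpha\|\le\tfrac12\Lambda_j\|\bm\alpha\|^2$ controls the second. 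Together with the penalty term, the mismatch is at most $\omega_j$ on the ball.

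Let $\bm d_t=\bm\alpha_t-\bm\beta_t$. Subtracting the two recursions gives $\bm d_{t+1}=(I-\eta H_j)\bm d_t-\eta\bm e_t$ with $\|\bm e_t\|\le\omega_j$. The spectral condition $\eta\|H_j\|_{\mathrm{op}}\le2-\zeta_{\mathrm{stab}}$ places the spectrum of $I-\eta H_j$ in $[-1+\zeta_{\mathrm{stab}},1]$, so $\|I-\eta H_j\|_{\mathrm{op}}\le1$. Hence $\|\bm d_{T_j}\|\le\eta T_j\omega_j$. Then
\[
\|F_j(\bm\alpha_{T_j})-\Phi_j\bm\beta_{T_j}\|\le\tfrac12\Lambda_jR_j^2+K_j\eta T_j\omega_j=\varepsilon_j.
\]

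\textbf{Step 3: exact tangent reduction \eqref{eq:p2-app-exact-tangent-reduction} and transfer.} By Proposition~\ref{prop:pythagoras},
\[
\mathcal R_P(0)-\mathcal R_P(\Phi_j\bm\beta)=2\bm b_j^*\bm\beta-\bm\beta^*H_j\bm\beta.
\]
Because $\bm b_j\in\operatorname{ran}\Phi_j^*=\operatorname{ran}H_j$, I diagonalize $H_j$ and solve the linear recursion as $\bm\beta_{T_j}=H_j^\dagger[I-(I-\eta H_j)^{T_j}]\bm b_j$. In an eigendirection with eigenvalue $\mu>0$ and coefficient $b$, this gives
\[
\frac{b^2}{\mu}\bigl(1-(1-\eta\mu)^{T_j}\bigr)\bigl(1+(1-\eta\mu)^{T_j}\bigr)=\frac{b^2}{\mu}\bigl(1-(1-\eta\mu)^{2T_j}\bigr),
\]
which is the claimed formula.

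For \eqref{eq:p2-app-candidate-risk-error}, write $\mathcal R_P(\bm f)-\mathcal R_P(\bm g)=\|\bm f-\bar{\bm r}\|^2-\|\bm g-\bar{\bm r}\|^2$. Its absolute value is at most $\|\bm f-\bm g\|\,(\|\bm f-\bar{\bm r}\|+\|\bm g-\bar{\bm r}\|)$, and each of the last two norms is at most $s+K_jR_j$ by Step 1. Taking suprema over $\mathcal E_j$ yields \eqref{eq:p2-app-optimized-risk-error}.

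\textbf{Attainment.} For fixed $T_j$, both $\eta\mapsto\bm\alpha_{T_j}^\eta$ and $\eta\mapsto\bm\beta_{T_j}^\eta$ are finite compositions of continuous maps. This uses continuity of $\nabla L_j$, which follows from $F_j$ and $\mathcal A$ being $C^1$. The risk is continuous on $L^2(P;\mathcal X)$, so the suprema over the compact sets $\mathcal E_j$ are attained.

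The main obstacle is the first step: justifying that $\bm\alpha\mapsto\mathcal R_P(F_j(\bm\alpha))$ is differentiable with the stated gradient. I would handle this by assuming, as the certificate implicitly does, that $F_j$ is Fr\'echet $C^1$ into $L^2(P;\mathcal X)$. The squared $L^2$ norm is smooth, so the chain rule then applies without any interchange of derivative and expectation.
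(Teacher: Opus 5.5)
Your proposal is correct and follows essentially the same route as the paper's proof: the gradient formula from the same-target risk decomposition, induction keeping both iterates in the radius-$R_j$ ball, the three-term gradient-defect split propagated through the nonexpansive operator $I-\eta H_j$, the spectral solution of the tangent recursion with the factorization $\bigl(1-(1-\eta\mu)^{T_j}\bigr)\bigl(1+(1-\eta\mu)^{T_j}\bigr)$, the difference-of-squares transfer with factor $2(s+K_jR_j)$, and continuity on the compact candidate sets for attainment. The differences are only in presentation: you derive the output error before the exact tangent formula, and you state explicitly the Fr\'echet $C^1$ reading of $F_j$ into $L^2(P;\mathcal X)$ that the paper uses implicitly.
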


\begin{proof}
We derive the tangent reduction and control the perturbation caused by
the actual updates through the full charged horizon.

\paragraph{Step 1: Establish the common risk geometry.}
We first record the common risk geometry. With
$\bm{\eta}_r=\bm{r}-\bar{\bm{r}}(W)$, conditional expectation gives
$\E[\bm{\eta}_r\mid W]=0$. Hence every state-measurable
$\bm{h}\in L^2(P;\mathcal X)$ satisfies
\begin{equation}
  \E\langle \bm{h}(W),\bm{\eta}_r\rangle
  =\E\langle \bm{h}(W),\E[\bm{\eta}_r\mid W]\rangle=0.
  \label{eq:p2-app-orthogonality}
\end{equation}
Expanding the square yields
\begin{equation}
  \mathcal R_P(\bm{h})
  =\E\|\bm{r}-\bar{\bm{r}}\|^2+\|\bar{\bm{r}}-\bm{h}\|^2,
  \qquad
  \mathcal R_P(0)-\mathcal R_P(\bm{h})
  =2\langle\bar{\bm{r}},\bm{h}\rangle-\|\bm{h}\|^2.
  \label{eq:p2-app-risk-geometry}
\end{equation}
Jensen's inequality also gives $\|\bar{\bm{r}}\|\le s$.

\paragraph{Step 2: Keep the actual and tangent iterates in the certified ball.}
For an in-ball parameter, the line-segment formula and $F_j(0)=0$ give
\begin{equation}
  F_j(\bm{\alpha})=\int_0^1J_j(u\bm{\alpha})[\bm{\alpha}],du,
  \qquad
  \|F_j(\bm{\alpha})\|\le K_j\|\bm{\alpha}\|.
  \label{eq:p2-app-output-envelope}
\end{equation}
Let
\begin{equation}
  \bm{\vartheta}_j(\bm{\alpha})=
  \begin{cases}
    \lambda_{\angle}\nabla\mathcal A(\bm{\alpha}),&j=\mathrm r,\\
    0,&j=\mathrm b.
  \end{cases}
  \label{eq:p2-app-angular-force}
\end{equation}
Differentiating \eqref{eq:p2-app-risk-geometry} through $F_j$ gives
\begin{equation}
  \nabla L_j(\bm{\alpha})
  =J_j(\bm{\alpha})^*(F_j(\bm{\alpha})-\bar{\bm{r}})+\bm{\vartheta}_j(\bm{\alpha}).
  \label{eq:p2-app-actual-gradient}
\end{equation}
Thus $\|\nabla L_j(\bm{\alpha})\|\le\Gamma_j$ whenever
$\|\bm{\alpha}\|\le R_j$. Since $\|\Phi_j\|_{\mathrm{op}}\le K_j$, the tangent
gradient satisfies
\begin{equation}
  \|H_j\bm{\beta}-\bm{b}_j\|
  =\|\Phi_j^*(\Phi_j\bm{\beta}-\bar{\bm{r}})\|
  \le K_j(K_jR_j+s)\le\Gamma_j
  \label{eq:p2-app-tangent-gradient-envelope}
\end{equation}
whenever $\|\bm{\beta}\|\le R_j$. Starting from zero, simultaneous induction in
the two recursions now gives
\begin{equation}
  \|\bm{\alpha}_{j,t+1}^{\eta}\|,
  \|\bm{\beta}_{j,t+1}^{\eta}\|
  \le(t+1)\eta\Gamma_j.
  \label{eq:p2-app-invariant-induction}
\end{equation}
The pre-run radius condition makes the right side at most $R_j$ through
$T_j$, completing the induction and proving \eqref{eq:p2-app-ball}.

\paragraph{Step 3: Solve the tangent recursion and compute its risk reduction.}
We next solve the tangent recursion. The operator $H_j=\Phi_j^*\Phi_j$ is
positive semidefinite. Moreover,
$\bm{b}_j\perp\ker H_j$ because $\ker H_j=\ker\Phi_j$. If
$H_j\bm{u}_k=\lambda_k\bm{u}_k$, the stability certificate gives
\begin{equation}
  |1-\eta\lambda_k|\le1,
  \qquad
  \|(I-\eta H_j)^t\|_{\mathrm{op}}\le1.
  \label{eq:p2-app-stable-propagator}
\end{equation}
The null eigenspaces have zero loading. A finite geometric sum on every
positive eigenspace therefore yields
\begin{equation}
  \bm{\beta}_{j,t}^{\eta}
  =H_j^\dagger[I-(I-\eta H_j)^t]\bm{b}_j.
  \label{eq:p2-app-tangent-solution}
\end{equation}
Substitution into \eqref{eq:p2-app-risk-geometry} gives
\begin{equation}
  \sum_{k:\lambda_k>0}
  \frac{|\langle \bm{u}_k,\bm{b}_j\rangle|^2}{\lambda_k}
  [1-(1-\eta\lambda_k)^{2T_j}],
  \label{eq:p2-app-spectral-sum}
\end{equation}
which is exactly \eqref{eq:p2-app-exact-tangent-reduction}.

\paragraph{Step 4: Bound the parameter and output perturbations.}
It remains to transfer the tangent formula to the actual run. The Jacobian
Lipschitz condition and the integral formula give
\begin{equation}
  \|F_j(\bm{\alpha})-\Phi_j\bm{\alpha}\|
  \le\frac12\Lambda_j\|\bm{\alpha}\|^2.
  \label{eq:p2-app-taylor-remainder}
\end{equation}
At the same parameter define
$\bm{\xi}_j(\bm{\alpha})=\nabla L_j(\bm{\alpha})-(H_j\bm{\alpha}-\bm{b}_j)$. Direct subtraction of
the two gradients gives
\begin{align}
  \bm{\xi}_j(\bm{\alpha})
  ={}&[J_j(\bm{\alpha})-\Phi_j]^*(F_j(\bm{\alpha})-\bar{\bm{r}})
  \nonumber\\
  &+\Phi_j^*(F_j(\bm{\alpha})-\Phi_j\bm{\alpha})
  +\bm{\vartheta}_j(\bm{\alpha}).
  \label{eq:p2-app-gradient-defect}
\end{align}
For $\|\bm{\alpha}\|\le R_j$, the three displayed terms are bounded by
$\Lambda_jR_j(s+K_jR_j)$,
$\frac12K_j\Lambda_jR_j^2$ and $q_j$. Hence
\begin{equation}
  \|\bm{\xi}_j(\bm{\alpha})\|\le\omega_j.
  \label{eq:p2-app-gradient-defect-bound}
\end{equation}
Writing $\bm{d}_{j,t}=\bm{\alpha}_{j,t}^{\eta}-\bm{\beta}_{j,t}^{\eta}$ and subtracting the
two recursions gives
\begin{equation}
  \bm{d}_{j,t+1}=(I-\eta H_j)\bm{d}_{j,t}
  -\eta\bm{\xi}_j(\bm{\alpha}_{j,t}^{\eta}),
  \qquad \bm{d}_{j,0}=0.
  \label{eq:p2-app-parameter-defect-recursion}
\end{equation}
Equations~\ref{eq:p2-app-ball}, \ref{eq:p2-app-stable-propagator}
and~\ref{eq:p2-app-gradient-defect-bound} imply
\begin{equation}
  \|\bm{d}_{j,t}\|
  \le\eta\sum_{\ell=0}^{t-1}
  \|\bm{\xi}_j(\bm{\alpha}_{j,\ell}^{\eta})\|
  \le\eta t\omega_j.
  \label{eq:p2-app-parameter-defect-bound}
\end{equation}
Adding and subtracting $\Phi_j\bm{\alpha}_{j,T_j}^{\eta}$, then using
\eqref{eq:p2-app-taylor-remainder}, proves
\eqref{eq:p2-app-output-error}.

\paragraph{Step 5: Transfer the output bound to risk reductions.}
Let $\bm{f}=\bm{f}_{j,\eta}$ and
$\widetilde{\bm{f}}=\Phi_j\bm{\beta}_{j,T_j}^{\eta}$. Both have norm at most $K_jR_j$.
The polarization identity and \eqref{eq:p2-app-risk-geometry} give
\begin{align}
  |\mathcal R_P(\bm{f})-\mathcal R_P(\widetilde{\bm{f}})|
  &\le\|\bm{f}-\widetilde{\bm{f}}\|
  (2\|\bar{\bm{r}}\|+\|\bm{f}\|+\|\widetilde{\bm{f}}\|)
  \nonumber\\
  &\le2(s+K_jR_j)\varepsilon_j(\mathcal C,\eta),
  \label{eq:p2-app-risk-polarization}
\end{align}
which proves \eqref{eq:p2-app-candidate-risk-error}. Applying
$|\sup u-\sup \bm{v}|\le\sup|u-\bm{v}|$ on the same armwise candidate set proves
\eqref{eq:p2-app-optimized-risk-error}.

\paragraph{Step 6: Establish attainment on the candidate sets.}
Finally, the actual gradient is continuous on the certified ball. Finite
induction in \eqref{eq:p2-actual-recursion} makes the terminal actual output
continuous in $\eta$. The tangent terminal output is a finite matrix
polynomial in $\eta$. Both reductions are therefore continuous on the
nonempty compact set $\mathcal E_j(\mathcal C)$, so their optima are attained.
\end{proof}

\begin{proof}[Proof of Theorem~\ref{thm:p2-matched-resource}]
We transfer the certified tangent margin to the actual runs, then construct
an instance satisfying all the stated conditions.

\paragraph{Step 1: Transfer the tangent advantage to trained risk.}
Proposition~\ref{p2:prop:finite-budget-certificate} proves the finite-horizon
iterate and output bounds, the exact tangent formula and the optimized error
certificate. Its two-sided bounds and the strict margin give the positive chain
\begin{equation}
  \Delta_{\mathrm r}
  \ge\Delta_{\mathrm r}^{\mathrm{lin}}-e_{\mathrm r}
  >\Delta_{\mathrm b}^{\mathrm{lin}}+e_{\mathrm b}
  \ge\Delta_{\mathrm b}.
  \label{eq:p2-app-strict-chain}
\end{equation}
Both reductions use the common baseline $\mathcal R_P(0)$. Subtracting
\eqref{eq:p2-app-strict-chain} from that baseline and using attainment proves
\eqref{eq:p2-actual-order}.

\paragraph{Step 2: Construct and bound the angular objective.}
We finish by constructing the stated nonempty instance. Let
$(\zeta_1,\zeta_2)\sim\mathcal N(0,I_2)$ and let $\bm{e}\in\mathcal X$ be a
unit vector. For $\bm{\alpha}=(\alpha_1,\alpha_2)$, put
\begin{equation}
  s_{\bm{\alpha}}=\alpha_1\zeta_1+\alpha_2\zeta_2.
  \label{eq:p2-app-witness-scalar}
\end{equation}
With prediction $s_{\bm{\alpha}}\bm{e}$, target
$\beta\zeta_2\bm{e}$ and the implemented strict threshold $\tau$, the
population angular objective is exactly
\begin{equation}
  \mathcal A_\beta(\bm{\alpha})
  =1-\E\!\left[
  \operatorname{sgn}(s_{\bm{\alpha}}\zeta_2)
  \mathbf 1\{|s_{\bm{\alpha}}|>\tau,
  |\zeta_2|>\tau/\beta\}\right].
  \label{eq:p2-app-angular-objective}
\end{equation}
We verify the gradient envelope consumed below. Let
$\varphi(t)=(2\pi)^{-1/2}e^{-t^2/2}$ and
$\overline\Phi(t)=\int_t^\infty\varphi(s)\,\mathrm ds$ denote the standard
Gaussian density and upper-tail function. Write
$\bm{\alpha}=\rho(\cos\vartheta,\sin\vartheta)$ and
$t=\tau/\rho$. Conditioning the signed integrand on the standard normal
$Z_\vartheta=\cos\vartheta\,\zeta_1+\sin\vartheta\,\zeta_2$, define
$m_\vartheta(z)$ as its conditional expectation given
$Z_\vartheta=z$. Since $|m_\vartheta|\le1$, differentiation of
\begin{equation}
  \int_{|z|>t}m_\vartheta(z)\varphi(z)\,dz
  \label{eq:p2-app-angular-radial-integral}
\end{equation}
at the moving cutoff gives
\begin{equation}
  |\partial_\rho\mathcal A_\beta|
  \le\frac{2\tau}{\rho^2}\varphi(\tau/\rho)
  \le\frac{4}{e\sqrt{2\pi}\,\tau}.
  \label{eq:p2-app-angular-radial}
\end{equation}
For the angular derivative, use polar Gaussian coordinates
\begin{equation}
  (\zeta_1,\zeta_2)=R(\cos\phi,\sin\phi).
  \label{eq:p2-app-polar-gaussian}
\end{equation}
At each $R>\tau/\rho$, put
$\gamma_R=\arccos(\tau/(\rho R))$ and let $q_R(\phi)$ be the signed target
mask, which is bounded by one and satisfies
$q_R(\phi+\pi)=-q_R(\phi)$. The conditional angular average is
\begin{equation}
  \frac1\pi
  \int_{\vartheta-\gamma_R}^{\vartheta+\gamma_R}q_R(\phi)\,d\phi.
  \label{eq:p2-app-angular-arc}
\end{equation}
Translating this interval changes the integral at rate at most $2/\pi$.
Integration against the polar Gaussian density therefore gives
\begin{equation}
  \frac1\rho|\partial_\vartheta\mathcal A_\beta|
  \le\frac{2}{\pi\rho}e^{-\tau^2/(2\rho^2)}
  \le\frac{2}{\pi\sqrt e\,\tau}.
  \label{eq:p2-app-angular-tangential}
\end{equation}
For fixed $(\rho,\vartheta)$ with $\rho<\beta$, cutoff intersections occur at
at most four radii and have zero polar measure. The uniform translation bound
above therefore permits dominated differentiation in $R$. We also have
\begin{equation}
  |\mathcal A_\beta(\bm{\alpha})-1|
  \le2\overline\Phi(\tau/\rho)=o(\rho)
  \quad\text{as }\rho\downarrow0,
  \label{eq:p2-app-angular-origin}
\end{equation}
and both displayed derivative bounds converge uniformly to zero. Away from
the two cutoff collisions $\bm{\alpha}=\pm\beta(0,1)$, the same cutoff integrals
vary continuously. Thus on
every ball $\|\bm{\alpha}\|\le R<\beta$, the objective is continuously
differentiable, $\nabla\mathcal A_\beta(0)=0$ and
\begin{equation}
  \|\nabla\mathcal A_\beta(\bm{\alpha})\|
  \le\left(\frac4{e\sqrt{2\pi}}+
  \frac2{\pi\sqrt e}\right)\tau^{-1}<\tau^{-1}.
  \label{eq:p2-app-angular-envelope}
\end{equation}

\paragraph{Step 3: Realize the correction heads on a sampler state.}
We now realize these coordinates through a state of the
linear-interpolant flow. Draw
$\bm{\xi}\sim\mathcal N(0,I_d)$ for $d\ge2$, take $\mathbf{X}_0=0$ and deterministic $\bm{C}$,
let $\Sigma$ be uniform on $[0,1]$ and set
\begin{equation}
  \mathbf{X}_\sigma=\sigma\bm{\xi},
  \qquad \bm{U}=\bm{\xi}.
  \label{eq:p2-app-wan-law}
\end{equation}
Let $\zeta_1,\zeta_2$ be two orthonormal coordinates of $\bm{\xi}$. For
$\sigma>0$, define
\begin{equation}
  \bm{v}_0(\mathbf{X}_\sigma,\sigma,\bm{C})=\bm{\xi}-\beta\zeta_2\bm{e}.
  \label{eq:p2-app-base-witness}
\end{equation}
The time law assigns zero mass to $\sigma=0$, where the checkpoint can be
defined arbitrarily. The current state recovers $\bm{\xi}=\mathbf{X}_\sigma/\sigma$, so the
stored feature $(\zeta_1,\zeta_2)$ is a deterministic function of the current
base input. Moreover,
\begin{equation}
  \bm{r}=\bar{\bm{r}}=\beta\zeta_2\bm{e},
  \qquad s=\beta.
  \label{eq:p2-app-deterministic-residual}
\end{equation}
Use the actual two-coordinate heads
\begin{align}
  F_{\mathrm r}(\bm{\alpha})
  &=(\alpha_1\zeta_1+\alpha_2\zeta_2)\bm{e},
  \nonumber\\
  F_{\mathrm b}(\bm{\alpha})
  &=(\alpha_1\zeta_1+\sqrt\kappa\,\alpha_2\zeta_2)\bm{e},
  \qquad 0<\kappa<1.
  \label{eq:p2-app-linear-heads}
\end{align}
Their Jacobians are constant, so $K_j=1$ and $\Lambda_j=0$. Independence and
unit variances give
\begin{equation}
  H_{\mathrm r}=I_2,
  \quad \bm{b}_{\mathrm r}=(0,\beta)^\top,
  \qquad
  H_{\mathrm b}=\operatorname{diag}(1,\kappa),
  \quad \bm{b}_{\mathrm b}=(0,\sqrt\kappa\,\beta)^\top.
  \label{eq:p2-app-witness-gram}
\end{equation}

\paragraph{Step 4: Choose admissible parameters and equal resource charges.}
Set $\tau=10^{-8}$, $G_{\angle}=\tau^{-1}$,
$Q_{\angle}=\lambda_{\angle}G_{\angle}$, choose
$\beta\ge4Q_{\angle}$ and then choose
\begin{equation}
  \kappa=\frac14,
  \qquad \eta=\frac14,
  \qquad \zeta_{\mathrm{stab}}=1,
  \qquad R_{\mathrm b}=\frac\beta3,
  \qquad R_{\mathrm r}=\frac{\beta+Q_{\angle}}3.
  \label{eq:p2-app-witness-parameters}
\end{equation}
\Eqref{eq:p2-app-angular-envelope} applies on the refiner ball
because $R_{\mathrm r}\le5\beta/12<\beta$. Choose symmetric two-parameter
linear-head schedules with common measured update and state charges $c_*$ and
$m_*$. Let $d_*$ be the added refiner inference charge and declare
$\mathcal C=(c_*,m_*,d_*)$. Both arms fit this ledger and have horizon one.
Fixing the common candidate set to $\{1/4\}$ before either run gives
\begin{equation}
  \eta\|H_j\|_{\mathrm{op}}=\frac14\le1,
  \qquad
  \eta\Gamma_{\mathrm b}=R_{\mathrm b},
  \qquad
  \eta\Gamma_{\mathrm r}=R_{\mathrm r}.
  \label{eq:p2-app-witness-admissibility}
\end{equation}
Thus every local and resource condition is satisfied.

\paragraph{Step 5: Verify the strict risk-reduction margin.}
Finally, \eqref{eq:p2-app-exact-tangent-reduction} on the loaded eigendirection
gives
\begin{equation}
  \Delta_{\mathrm r}^{\mathrm{lin}}
  =\beta^2[1-(1-1/4)^2]=\frac7{16}\beta^2,
  \qquad
  \Delta_{\mathrm b}^{\mathrm{lin}}
  =\beta^2[1-(1-1/16)^2]=\frac{31}{256}\beta^2.
  \label{eq:p2-app-witness-reductions}
\end{equation}
Since $\Lambda_j=0$, we have
$\omega_{\mathrm b}=0$, $\omega_{\mathrm r}=Q_{\angle}$,
$\varepsilon_{\mathrm b}=0$ and
$\varepsilon_{\mathrm r}=Q_{\angle}/4$. Therefore
\begin{equation}
  e_{\mathrm b}=0,
  \qquad
  e_{\mathrm r}
  =2(\beta+R_{\mathrm r})\frac{Q_{\angle}}4
  =\frac{Q_{\angle}(4\beta+Q_{\angle})}{6}.
  \label{eq:p2-app-witness-errors}
\end{equation}
The tangent gap is $81\beta^2/256$, while
\begin{equation}
  \frac{e_{\mathrm r}}{\beta^2}
  =\frac16\frac{Q_{\angle}}\beta
  \left(4+\frac{Q_{\angle}}\beta\right)
  \le\frac{17}{96}<\frac{81}{256}.
  \label{eq:p2-app-witness-margin}
\end{equation}
This proves the strict margin and completes the theorem.
\end{proof}

% ---------------------------------------------------------------------------
% Perspective 2, deployment exports: gain algebra, coupled sampler laws and
% the endpoint bridge. G.3 uses a direct comparison under each sampler's law.
% Theorem~\ref{thm:own-law-improvement} summarizes the own-law certificate
% in Section~\ref{sec:theory}. The endpoint analysis remains appendix-only.
% ---------------------------------------------------------------------------

\section{Deployment Analysis: Gain, Sampler Law and Endpoint}
\label{p2:app:deployment}

This section establishes how the correction gain controls deployed velocity
error and endpoint distance. We first derive an exact gain quadratic at a
fixed state law. We then couple the base and corrected trajectories to
certify velocity improvement under each sampler's own law. Finally, we
analyze how velocity corrections accumulate along the trajectory and derive
an exact endpoint gain identity for affine--Gaussian flows.

\subsection{Deployment setting}
\label{p2:app:deployment-setting}

Fix the trained pair and let $\mathcal D_{\mathrm{dep}}$ denote the full
deployment domain. Write $\bm{g}_{\bm{\phi}}$ for the composite deployed correction
$W\mapsto \bm{g}_{\bm{\phi}}(\mathcal{Z}(W))$.
Fix a measurable reference velocity field
\[
\bm{u}^\star:\mathcal D_{\mathrm{dep}}\to\mathcal X
\]
such that
\[
\bm{u}^\star(W)=\E[\bm{U}\mid W]
\qquad P\text{-almost surely}.
\]
Set
\[
\bar{\bm{r}}=\bm{u}^\star-\bm{v}_0.
\]
For a state law $Q$ and a gain $\kappa\ge0$, define the deployed velocity
error relative to this reference field, together with its alignment and
energy coefficients,
\begin{equation}
  \mathcal V_Q(\kappa)=\E_Q\|\bar{\bm{r}}-\kappa \bm{g}_{\bm{\phi}}\|^2,
  \qquad
  A_Q=\E_Q\langle\bar{\bm{r}},\bm{g}_{\bm{\phi}}\rangle,
  \qquad
  B_Q=\E_Q\|\bm{g}_{\bm{\phi}}\|^2.
  \label{eq:p2-dep-coefficients}
\end{equation}
On the training law these connect to the risk used above. Taking
$\bm{h}=\kappa \bm{g}_{\bm{\phi}}$ in \eqref{eq:p2-app-risk-geometry} gives
\begin{equation}
  \mathcal R_P(\kappa \bm{g}_{\bm{\phi}})
  =\E_{\widetilde P}\|\bm{r}-\bar{\bm{r}}\|^2+\mathcal V_P(\kappa),
  \label{eq:p2-dep-risk-link}
\end{equation}
so $\mathcal V_P$ is the entire part of the training risk that the gain moves.

The deployed sampler determines its state law through its trajectory. Running
\Eqref{eq:corrected} from the prescribed Gaussian source with the composite
field $\bm{v}_0+\kappa \bm{g}_{\bm{\phi}}$, let $\mathbf{X}^\kappa(\sigma)$ denote the exact trajectory in
the sampler orientation $\sigma:1\to0$ and put
\begin{equation}
  Q_\kappa=\operatorname{Law}\bigl(\mathbf{X}^\kappa(\Sigma),\Sigma,\bm{C}\bigr),
  \qquad \Sigma\sim q.
  \label{eq:p2-dep-sampler-law}
\end{equation}
Here $q$ is a probability law on $[0,1]$, and $\Sigma$ is independent of the
source and condition. The law $Q_0$ is visited by the uncorrected sampler.
We compare these two deployment laws directly in
Appendix~\ref{p2:app:transfer}.

\begin{assumption}[Training-law moments]
\label{p2:assump:deployment-moments}
The realized residual satisfies
$\bm{r}\in L^2(\widetilde P;\mathcal X)$, and the deployed correction satisfies
$\bm{g}_{\bm{\phi}}\in L^2(P;\mathcal X)$.
The specified field $\bar{\bm{r}}$ agrees with
$\mathbb E_{\widetilde P}[\bm{r}\mid W]$ $P$-almost surely. Conditional Jensen's
inequality therefore gives $\bar{\bm{r}}\in L^2(P;\mathcal X)$.
\end{assumption}

\begin{assumption}[Deployment ODE well-posedness]
\label{p2:assump:deployment-ode}
For a fixed gain radius $\bar\kappa>0$, the family of deployed velocity fields
\[
\left\{
\bm{v}_0+\kappa \bm{g}_{\bm{\phi}}:\kappa\in[0,\bar\kappa]
\right\}
\]
is measurable in $\sigma$ and globally Lipschitz in the state, with a common
integrable Lipschitz envelope. Their values at one reference state admit a
common integrable envelope over the same gain range. Consequently, their exact
ODEs admit unique global solutions on $[0,1]$, and the laws $Q_\kappa$ of
\eqref{eq:p2-dep-sampler-law} are well-defined for every
$\kappa\in[0,\bar\kappa]$.
\end{assumption}

\subsection{Exact gain quadratic and improvement interval}
\label{p2:app:gain}

At a fixed state law, the deployed velocity error is an exact quadratic in
the gain. Residual--correction alignment $A_Q$ determines the initial
slope, and correction energy $B_Q$ determines the curvature. Together,
they specify the improvement interval and its optimal gain.

\begin{proposition}[Exact gain quadratic]
\label{p2:prop:gain-quadratic}
Let $Q$ be a probability law with
$\E_Q\|\bar{\bm{r}}\|^2<\infty$ and $\E_Q\|\bm{g}_{\bm{\phi}}\|^2<\infty$.
Then, for every $\kappa\in[0,\infty)$,
\begin{equation}
  \mathcal V_Q(\kappa)-\mathcal V_Q(0)
  =-2\kappa A_Q+\kappa^2B_Q.
  \label{eq:p2-dep-gain-quadratic}
\end{equation}
\end{proposition}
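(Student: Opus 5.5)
The plan is to expand the square pointwise and then take expectations under $Q$, with integrability as the only point needing care. For each state $W$ in the support of $Q$, the Hilbert-space polarization identity gives
\[
\|\bar{\bm r}(W)-\kappa\bm g_{\bm\phi}(W)\|^2
=\|\bar{\bm r}(W)\|^2-2\kappa\langle\bar{\bm r}(W),\bm g_{\bm\phi}(W)\rangle+\kappa^2\|\bm g_{\bm\phi}(W)\|^2 .
\]
This is an algebraic identity in the finite-dimensional real Hilbert space $\mathcal X$. It holds for every $\kappa\in[0,\infty)$, and in fact for every real $\kappa$.

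Next I will check that each of the three terms is $Q$-integrable, so that linearity of expectation applies and no $\infty-\infty$ ambiguity arises.
\begin{itemize}
\item The first and third terms are integrable by the two stated hypotheses, $\E_Q\|\bar{\bm r}\|^2<\infty$ and $\E_Q\|\bm g_{\bm\phi}\|^2<\infty$.
\item For the cross term, apply the pointwise Cauchy--Schwarz inequality followed by Young's inequality, $|\langle \bar{\bm r},\bm g_{\bm\phi}\rangle|\le \tfrac12\|\bar{\bm r}\|^2+\tfrac12\|\bm g_{\bm\phi}\|^2$. This shows $A_Q$ is finite and well defined.
\item Measurability of the composite maps is inherited from the measurable choice of $\bm u^\star$ and from $\bm g_{\bm\phi}$ being a function of $W$.
\end{itemize}
Consequently $\mathcal V_Q(\kappa)$ is finite for every $\kappa$.

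Taking $\E_Q$ of the identity then gives
\[
\mathcal V_Q(\kappa)=\E_Q\|\bar{\bm r}\|^2-2\kappa A_Q+\kappa^2B_Q,
\]
in the notation of \eqref{eq:p2-dep-coefficients}. Setting $\kappa=0$ identifies the constant term as $\mathcal V_Q(0)=\E_Q\|\bar{\bm r}\|^2$. Subtracting this constant yields \eqref{eq:p2-dep-gain-quadratic}.

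The only step with any content is the integrability check on the cross term, and Cauchy--Schwarz handles it directly. No conditional-expectation structure is needed, because this identity holds for an arbitrary law $Q$. That includes the sampler laws $Q_\kappa$, which are not the training law $P$, so unlike \eqref{eq:p2-app-risk-geometry} we cannot rely on the orthogonality \eqref{eq:p2-app-orthogonality}.
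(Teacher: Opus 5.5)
Your proof is correct and matches the paper's argument: the paper also checks integrability first, using Cauchy--Schwarz in $L^2(Q;\mathcal X)$ for the cross term and the bound $\|\bar{\bm r}-\kappa\bm g_{\bm\phi}\|^2\le 2\|\bar{\bm r}\|^2+2\kappa^2\|\bm g_{\bm\phi}\|^2$, and then expands the square pointwise and takes $Q$-expectations. The only difference is that the paper's $L^2$ Cauchy--Schwarz step also yields $|A_Q|\le\mathcal V_Q(0)^{1/2}B_Q^{1/2}$, which your pointwise Young bound does not give; that bound is used later for the optimal-gain corollary, not for this identity.
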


\begin{proof}
We verify finiteness of every term before expanding the squared velocity error.

\phantomsection
\label{p2:lem:gain-domain}
\paragraph{Step 1: Establish integrability on the deployment law.}
First, $A_Q$ is a finite
real number, $B_Q\in[0,\infty)$, $\mathcal V_Q(\kappa)<\infty$ for every finite
$\kappa\ge0$ and
\begin{equation}
  |A_Q|\le\mathcal V_Q(0)^{1/2}B_Q^{1/2}.
  \label{eq:p2-dep-cauchy-schwarz}
\end{equation}

By definition $\mathcal V_Q(0)=\E_Q\|\bar{\bm{r}}\|^2<\infty$ and
$B_Q=\E_Q\|\bm{g}_{\bm{\phi}}\|^2\in[0,\infty)$. Cauchy--Schwarz in
$L^2(Q;\mathcal X)$ gives
\[
  \E_Q|\langle\bar{\bm{r}},\bm{g}_{\bm{\phi}}\rangle|
  \le\bigl(\E_Q\|\bar{\bm{r}}\|^2\bigr)^{1/2}
     \bigl(\E_Q\|\bm{g}_{\bm{\phi}}\|^2\bigr)^{1/2}<\infty,
\]
which makes $A_Q$ finite and proves
\eqref{eq:p2-dep-cauchy-schwarz}. For a finite gain the pointwise bound
$\|\bar{\bm{r}}-\kappa \bm{g}_{\bm{\phi}}\|^2\le2\|\bar{\bm{r}}\|^2+2\kappa^2\|\bm{g}_{\bm{\phi}}\|^2$ has an
integrable right side, so $\mathcal V_Q(\kappa)<\infty$.

\paragraph{Step 2: Expand the gain quadratic at the fixed law.}
Pointwise in the state,
$\|\bar{\bm{r}}-\kappa \bm{g}_{\bm{\phi}}\|^2
 =\|\bar{\bm{r}}\|^2-2\kappa\langle\bar{\bm{r}},\bm{g}_{\bm{\phi}}\rangle+\kappa^2\|\bm{g}_{\bm{\phi}}\|^2$.
\hyperref[p2:lem:gain-domain]{Step~1} makes all three terms integrable and the left
side finite, so taking $Q$-expectations and subtracting
$\mathcal V_Q(0)$ gives
\eqref{eq:p2-dep-gain-quadratic}. Under $Q$, alignment sets the linear
coefficient and correction energy sets the quadratic coefficient.
\end{proof}

\begin{proposition}[Sharp gain boundaries]
\label{p2:prop:gain-boundaries}
Under Proposition~\ref{p2:prop:gain-quadratic}, exactly one of the following
holds.
\begin{enumerate}[label=(\roman*)]
\item If $B_Q=0$, then $\bm{g}_{\bm{\phi}}=0$ $Q$-almost surely, $A_Q=0$ and
  $\mathcal V_Q(\kappa)=\mathcal V_Q(0)$ for every $\kappa\ge0$.
\item If $B_Q>0$ and $A_Q\le0$, then
  $\mathcal V_Q(\kappa)>\mathcal V_Q(0)$ for every $\kappa>0$.
\item If $B_Q>0$ and $A_Q>0$, then
  \begin{equation}
    \mathcal V_Q(\kappa)<\mathcal V_Q(0)
    \quad\Longleftrightarrow\quad
    0<\kappa<\frac{2A_Q}{B_Q},
    \label{eq:p2-dep-gain-window}
  \end{equation}
  with equality exactly at the two endpoints $\kappa=0$ and
  $\kappa=2A_Q/B_Q$, and $\mathcal V_Q(\kappa)>\mathcal V_Q(0)$ for every
  larger gain.
\end{enumerate}
\end{proposition}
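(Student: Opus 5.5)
The plan is to read all three cases directly off the exact gain quadratic of Proposition~\ref{p2:prop:gain-quadratic}. Write $D(\kappa):=\mathcal V_Q(\kappa)-\mathcal V_Q(0)=-2\kappa A_Q+\kappa^2B_Q=\kappa(\kappa B_Q-2A_Q)$ for $\kappa\ge0$. Finiteness of $A_Q$ and $B_Q$ was already established in Step~1 of that proof. The trichotomy is exhaustive and exclusive because $B_Q\ge0$: either $B_Q=0$, or $B_Q>0$ and then $A_Q\le0$ or $A_Q>0$.

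\emph{Case (i).} If $B_Q=\E_Q\|\bm g_{\bm\phi}\|^2=0$, then the nonnegative integrand vanishes $Q$-almost surely, so $\bm g_{\bm\phi}=0$ $Q$-a.s. The Cauchy--Schwarz bound \eqref{eq:p2-dep-cauchy-schwarz} then gives $|A_Q|\le\mathcal V_Q(0)^{1/2}\cdot0$, so $A_Q=0$. Substituting into the quadratic gives $D\equiv0$.

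\emph{Case (ii).} Let $B_Q>0$ and $A_Q\le0$. For $\kappa>0$ the factor $\kappa B_Q-2A_Q$ satisfies $\kappa B_Q-2A_Q\ge\kappa B_Q>0$. Hence $D(\kappa)=\kappa(\kappa B_Q-2A_Q)>0$.

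\emph{Case (iii).} Let $B_Q>0$ and $A_Q>0$. Then $D$ is an upward parabola with roots $0$ and $\kappa^\dagger:=2A_Q/B_Q>0$. For $\kappa>0$, the sign of $D(\kappa)$ equals the sign of $\kappa-\kappa^\dagger$. So $D<0$ exactly on $(0,\kappa^\dagger)$, $D=0$ exactly at $\kappa\in\{0,\kappa^\dagger\}$, and $D>0$ for $\kappa>\kappa^\dagger$. This gives \eqref{eq:p2-dep-gain-window} together with the equality and overshoot claims.

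No step is a real obstacle. The only point needing care is the almost-sure vanishing in case~(i), and it follows from $\E_Q$ of a nonnegative function being zero.
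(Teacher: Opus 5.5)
Your proposal is correct and matches the paper's proof: factor the gain quadratic as $\kappa(\kappa B_Q-2A_Q)$ and read off its sign in each case. The only cosmetic difference is in case (i), where you get $A_Q=0$ from the Cauchy--Schwarz bound \eqref{eq:p2-dep-cauchy-schwarz}, whereas the paper reads it off directly from $\bm{g}_{\bm{\phi}}=0$ $Q$-almost surely; both are immediate.
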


\begin{proof}
If $B_Q=0$, the nonnegative variable $\|\bm{g}_{\bm{\phi}}\|^2$ has zero expectation and
vanishes $Q$-almost surely. Then $A_Q=0$ and
$\bar{\bm{r}}-\kappa \bm{g}_{\bm{\phi}}=\bar{\bm{r}}$ $Q$-almost surely for every $\kappa$, which is
case (i).

Suppose $B_Q>0$. Factoring \eqref{eq:p2-dep-gain-quadratic} gives
\begin{equation}
  \mathcal V_Q(\kappa)-\mathcal V_Q(0)
  =\kappa\bigl(\kappa B_Q-2A_Q\bigr).
  \label{eq:p2-dep-gain-factored}
\end{equation}
If $A_Q\le0$ and $\kappa>0$, then $\kappa B_Q>0$ and $-2A_Q\ge0$, so both
factors are positive, which is case (ii). If $A_Q>0$ and $\kappa>0$, the sign
of \eqref{eq:p2-dep-gain-factored} is the sign of $\kappa B_Q-2A_Q$, which is
negative, zero, or positive according as $\kappa$ is below, equal to, or above
$2A_Q/B_Q$. The separate factor $\kappa$ supplies equality at $\kappa=0$. This
is case (iii).
\end{proof}

\begin{corollary}[Unique optimal gain]
\label{p2:cor:optimal-gain}
If $A_Q>0$ and $B_Q>0$, the unique minimizer of
$\kappa\mapsto\mathcal V_Q(\kappa)$ on $[0,\infty)$ is
\begin{equation}
  \kappa^\star=\frac{A_Q}{B_Q},
  \qquad
  \mathcal V_Q(\kappa^\star)
  =\mathcal V_Q(0)-\frac{A_Q^2}{B_Q}\ \ge0.
  \label{eq:p2-dep-optimal-gain}
\end{equation}
\end{corollary}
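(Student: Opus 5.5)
The plan is to treat this as a one-variable minimization of the exact quadratic from Proposition~\ref{p2:prop:gain-quadratic}. Under the standing hypotheses, $\mathcal V_Q(\kappa)=\mathcal V_Q(0)-2\kappa A_Q+\kappa^2B_Q$ for every $\kappa\ge0$, with $A_Q$ finite and $B_Q\in[0,\infty)$. Because $B_Q>0$, this is a strictly convex parabola in $\kappa$. Completing the square rewrites it as
\[
\mathcal V_Q(\kappa)=\mathcal V_Q(0)-\frac{A_Q^2}{B_Q}+B_Q\Bigl(\kappa-\frac{A_Q}{B_Q}\Bigr)^2 .
\]
This identity is the whole argument.

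From it, $\kappa\mapsto\mathcal V_Q(\kappa)$ is minimized over all of $\mathbb R$ exactly where the square vanishes. Since $B_Q>0$, that happens at the single point $\kappa^\star=A_Q/B_Q$. The assumption $A_Q>0$ places $\kappa^\star$ in $(0,\infty)$, inside the domain $[0,\infty)$. Hence $\kappa^\star$ is also the unique minimizer on the restricted half-line: any other $\kappa\ge0$ leaves a strictly positive square term $B_Q(\kappa-\kappa^\star)^2$. Substituting $\kappa^\star$ yields $\mathcal V_Q(\kappa^\star)=\mathcal V_Q(0)-A_Q^2/B_Q$.

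The remaining claim is that this value is nonnegative. I would get this directly from the definition: $\mathcal V_Q(\kappa^\star)=\E_Q\|\bar{\bm r}-\kappa^\star\bm g_{\bm\phi}\|^2$ is the expectation of a nonnegative quantity. Equivalently, it follows from the Cauchy--Schwarz bound \eqref{eq:p2-dep-cauchy-schwarz}, since $A_Q^2\le\mathcal V_Q(0)B_Q$.

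There is no real obstacle here. The only point needing care is checking that $\kappa^\star$ lies in the admissible half-line, which is exactly where the sign hypothesis $A_Q>0$ is used. As a consistency check, $\kappa^\star$ is the midpoint of the improvement window $(0,2A_Q/B_Q)$ from Proposition~\ref{p2:prop:gain-boundaries}(iii), as a symmetric parabola requires.
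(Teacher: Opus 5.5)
Your proposal is correct and follows essentially the same route as the paper: complete the square in the gain quadratic, use $B_Q>0$ for uniqueness and $A_Q>0$ to place $\kappa^\star$ inside $[0,\infty)$, and get nonnegativity from the Cauchy--Schwarz bound $|A_Q|\le\mathcal V_Q(0)^{1/2}B_Q^{1/2}$. Your alternative of reading nonnegativity directly off $\mathcal V_Q(\kappa^\star)=\E_Q\|\bar{\bm r}-\kappa^\star\bm g_{\bm\phi}\|^2$ is equally valid.
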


\begin{proof}
Completing the square in \eqref{eq:p2-dep-gain-quadratic} gives
\[
  \mathcal V_Q(\kappa)
  =\mathcal V_Q(0)-\frac{A_Q^2}{B_Q}
   +B_Q\Bigl(\kappa-\frac{A_Q}{B_Q}\Bigr)^2 .
\]
The last term is nonnegative and vanishes at exactly one scalar, which is
positive because $A_Q>0$ and therefore lies in the allowed domain.
Substitution gives the displayed value, and
\eqref{eq:p2-dep-cauchy-schwarz} gives $A_Q^2/B_Q\le\mathcal V_Q(0)$, which is
the stated nonnegativity.
\end{proof}

\subsection{Own-law velocity improvement}
\label{p2:app:transfer}

The next theorem establishes a gain interval over which the corrected
sampler achieves lower velocity error than the base sampler, with each field
evaluated under its own state law. Coupling the trajectories through the
same initial noise and condition lets us control their displacement using
spatial regularity. Combining this displacement bound with the fixed-law
gain identity gives the comparison between $Q_\kappa$ and $Q_0$.

\paragraph{Coupled trajectories.}
Let $\mathbf{X}_1$ be the prescribed Gaussian source and $\bm{C}$ the shared condition.
For each $\kappa\in[0,\bar\kappa]$, use the exact descending ODEs
\begin{align}
  \frac{d\mathbf{X}^0(\sigma)}{d\sigma}
    &=\bm{v}_0(\mathbf{X}^0(\sigma),\sigma,\bm{C}),
    & \mathbf{X}^0(1)&=\mathbf{X}_1, \\
  \frac{d\mathbf{X}^\kappa(\sigma)}{d\sigma}
    &=\bm{v}_0(\mathbf{X}^\kappa(\sigma),\sigma,\bm{C})
      +\kappa \bm{g}_{\bm{\phi}}(\mathbf{X}^\kappa(\sigma),\sigma,\bm{C}),
    & \mathbf{X}^\kappa(1)&=\mathbf{X}_1.
  \label{eq:p2-dep-coupled-odes}
\end{align}
Sample the same independent $\Sigma\sim q$ for both trajectories. Their
respective marginals are $Q_0$ and $Q_\kappa$ as defined in
\eqref{eq:p2-dep-sampler-law}. Write $A_0:=A_{Q_0}$ for the alignment along
the baseline trajectory.

\begin{assumption}[Uniform bounds and spatial regularity]
\label{p2:assump:deployment-regularity}
The bounds below control the magnitude and spatial variation of the residual
$\bar{\bm{r}}$. The reference velocity $\bm{u}^\star$ determines this field through
$\bar{\bm{r}}=\bm{u}^\star-\bm{v}_0$.
There is a deterministic common domain that, on an event of probability one,
contains the coupled trajectories simultaneously for every
$\sigma\in[0,1]$ and $\kappa\in[0,\bar\kappa]$.
On this domain, finite nonnegative constants $R_r,G,L_r,L_g,L_0$ satisfy
\begin{equation}
  \|\bar{\bm{r}}(\mathbf{x},\sigma,\bm{c})\|\le R_r,
  \qquad
  \|\bm{g}_{\bm{\phi}}(\mathbf{x},\sigma,\bm{c})\|\le G,
  \label{eq:p2-dep-field-bounds}
\end{equation}
and, for any two states $\mathbf{x},\mathbf{y}$ in the domain at the same $(\sigma,\bm{c})$,
\begin{align}
  \|\bar{\bm{r}}(\mathbf{x},\sigma,\bm{c})-\bar{\bm{r}}(\mathbf{y},\sigma,\bm{c})\|
    &\le L_r\|\mathbf{x}-\mathbf{y}\|, \\
  \|\bm{g}_{\bm{\phi}}(\mathbf{x},\sigma,\bm{c})-\bm{g}_{\bm{\phi}}(\mathbf{y},\sigma,\bm{c})\|
    &\le L_g\|\mathbf{x}-\mathbf{y}\|, \\
  \|\bm{v}_0(\mathbf{x},\sigma,\bm{c})-\bm{v}_0(\mathbf{y},\sigma,\bm{c})\|
    &\le L_0\|\mathbf{x}-\mathbf{y}\|.
  \label{eq:p2-dep-spatial-regularity}
\end{align}
The constants are uniform in the noise level, condition, source realization
and gain throughout the stated range.
\end{assumption}

Assumption~\ref{p2:assump:deployment-ode} supplies existence and uniqueness.
Assumption~\ref{p2:assump:deployment-regularity} quantifies how the correction
moves the trajectory and how this displacement affects residual error and
alignment across the common deployment domain.

The following theorem gives the full quantitative certificate summarized
in the main-text Theorem~\ref{thm:own-law-improvement}.
\begin{theorem}[Detailed own-law velocity certificate]
\label{p2:prop:generated-order}
Under Assumptions~\ref{p2:assump:deployment-ode} and
\ref{p2:assump:deployment-regularity}, define
\begin{equation}
  H_0(\sigma):=
  \begin{cases}
    \dfrac{e^{L_0(1-\sigma)}-1}{L_0},&L_0>0,\\[6pt]
    1-\sigma,&L_0=0,
  \end{cases}
  \qquad
  K_Q:=G\,\E_{\Sigma\sim q}[H_0(\Sigma)].
  \label{eq:p2-dep-displacement-constant}
\end{equation}
Set
\begin{equation}
  L_V:=2R_rL_r,
  \qquad L_A:=L_rG+R_rL_g,
  \qquad M:=2A_0-L_VK_Q,
  \qquad N:=G^2+2L_AK_Q.
  \label{eq:p2-dep-own-law-constants}
\end{equation}
Then for every $\kappa\in[0,\bar\kappa]$,
\begin{equation}
  \boxed{
  \mathcal V_{Q_\kappa}(\kappa)-\mathcal V_{Q_0}(0)
  \le-\kappa M+\kappa^2N.}
  \label{eq:p2-dep-own-law-bound}
\end{equation}
If $M>0$, then $N>0$, and every gain satisfying
\begin{equation}
  0<\kappa\le\bar\kappa,
  \qquad \kappa<\frac{M}{N}
  \label{eq:p2-dep-certificate}
\end{equation}
gives the strict comparison
\begin{equation}
  \boxed{\mathcal V_{Q_\kappa}(\kappa)<\mathcal V_{Q_0}(0).}
  \label{eq:p2-dep-generated-order}
\end{equation}
Each velocity field is therefore evaluated under the state law generated by
its own sampler.
\end{theorem}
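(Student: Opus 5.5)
Couple the two samplers through the same source $\mathbf X_1$, condition $\bm C$ and independent $\Sigma\sim q$, so that $(\mathbf X^0(\Sigma),\Sigma,\bm C)\sim Q_0$ and $(\mathbf X^\kappa(\Sigma),\Sigma,\bm C)\sim Q_\kappa$ on one probability space. Writing $\mathbf X^\kappa=\mathbf X^\kappa(\Sigma)$, $\mathbf X^0=\mathbf X^0(\Sigma)$ and abbreviating $\bar{\bm r}(\mathbf x)=\bar{\bm r}(\mathbf x,\Sigma,\bm C)$ and similarly for $\bm g_{\bm\phi}$, expand pointwise
\[
\|\bar{\bm r}(\mathbf X^\kappa)-\kappa\bm g_{\bm\phi}(\mathbf X^\kappa)\|^2-\|\bar{\bm r}(\mathbf X^0)\|^2
=\bigl[\|\bar{\bm r}(\mathbf X^\kappa)\|^2-\|\bar{\bm r}(\mathbf X^0)\|^2\bigr]
-2\kappa\langle\bar{\bm r},\bm g_{\bm\phi}\rangle(\mathbf X^\kappa)
+\kappa^2\|\bm g_{\bm\phi}(\mathbf X^\kappa)\|^2 .
\]
The last term is at most $\kappa^2G^2$ by \eqref{eq:p2-dep-field-bounds}. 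Rewrite the middle term as $-2\kappa\langle\bar{\bm r},\bm g_{\bm\phi}\rangle(\mathbf X^0)$ plus the alignment shift. After taking expectations, the former gives $-2\kappa A_0$.

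The key step is a displacement bound $\|\mathbf X^\kappa(\sigma)-\mathbf X^0(\sigma)\|\le\kappa G\,H_0(\sigma)$. Both trajectories start at $\mathbf X_1$ at $\sigma=1$, and the assumptions ensure they stay in the common domain almost surely. Subtracting the ODEs in \eqref{eq:p2-dep-coupled-odes} and integrating from $1$ down to $\sigma$ gives $D(\sigma):=\|\mathbf X^\kappa(\sigma)-\mathbf X^0(\sigma)\|\le\int_\sigma^1(L_0D(s)+\kappa G)\,ds$. Grönwall's inequality in the reversed time variable $1-\sigma$ then yields $D(\sigma)\le\kappa G(e^{L_0(1-\sigma)}-1)/L_0$, and in the case $L_0=0$ it yields $\kappa G(1-\sigma)$. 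Taking expectations over the independent $\Sigma$ gives $\E D(\Sigma)\le\kappa K_Q$.

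Next come the two Lipschitz shift estimates. For the energy term,
\[
\bigl|\|\bar{\bm r}(\mathbf x)\|^2-\|\bar{\bm r}(\mathbf y)\|^2\bigr|
\le(\|\bar{\bm r}(\mathbf x)\|+\|\bar{\bm r}(\mathbf y)\|)\,L_r\|\mathbf x-\mathbf y\|
\le 2R_rL_r\|\mathbf x-\mathbf y\|=L_V\|\mathbf x-\mathbf y\|.
\]
Its expectation is therefore at most $L_V\kappa K_Q$, which produces the $-\kappa(-L_VK_Q)$ part of $-\kappa M$. For the alignment term, add and subtract a cross term to get
\[
|\langle\bar{\bm r},\bm g_{\bm\phi}\rangle(\mathbf x)-\langle\bar{\bm r},\bm g_{\bm\phi}\rangle(\mathbf y)|\le(L_rG+R_rL_g)\|\mathbf x-\mathbf y\|=L_A\|\mathbf x-\mathbf y\|.
\]
Multiplied by $2\kappa$, its expectation is at most $2\kappa^2L_AK_Q$. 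Summing the pieces gives
\[
\mathcal V_{Q_\kappa}(\kappa)-\mathcal V_{Q_0}(0)\le-2\kappa A_0+\kappa L_VK_Q+2\kappa^2L_AK_Q+\kappa^2G^2=-\kappa M+\kappa^2N .
\]
All expectations are finite because the fields are bounded on the domain.

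For the strict conclusion, suppose $M>0$. Then $A_0>0$, and since $|A_0|\le\E_{Q_0}\|\bar{\bm r}\|\|\bm g_{\bm\phi}\|\le R_rG$ we get $G>0$, hence $N\ge G^2>0$. For $0<\kappa<M/N$ with $\kappa\le\bar\kappa$, the bound equals $\kappa(\kappa N-M)<0$. The main obstacle is the Grönwall step, where care is needed with the descending time orientation and with keeping both trajectories in the deterministic common domain almost surely. The latter is exactly what Assumption~\ref{p2:assump:deployment-regularity} provides, and Assumption~\ref{p2:assump:deployment-ode} gives measurability of the compositions.
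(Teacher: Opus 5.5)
Your proposal is correct and follows the paper's proof essentially step for step: the common-source coupling, the Gr\"onwall displacement bound $\|\mathbf X^\kappa(\sigma)-\mathbf X^0(\sigma)\|\le\kappa G\,H_0(\sigma)$, the two Lipschitz shift estimates with constants $L_V$ and $L_A$, and the chain $A_0>0\Rightarrow G>0\Rightarrow N\ge G^2>0$ for the strict conclusion. The only cosmetic difference is that you expand the squared error pointwise on the coupled space. The paper instead applies the exact gain quadratic at the single law $Q_\kappa$ and then bounds $A_{Q_\kappa}$ and $B_{Q_\kappa}$ separately, which gives the same decomposition once expectations are taken.
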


\begin{proof}
Fix $\kappa\in[0,\bar\kappa]$ and use the common-source coupling throughout.

\paragraph{Step 1: Bound the trajectory displacement.}
The integral form of \eqref{eq:p2-dep-coupled-odes}, with the common source
cancelled, gives
\begin{align}
  \|\mathbf{X}^\kappa(\sigma)-\mathbf{X}^0(\sigma)\|
  &\le\int_\sigma^1
    \|\bm{v}_0(\mathbf{X}^\kappa(s),s,\bm{C})-\bm{v}_0(\mathbf{X}^0(s),s,\bm{C})\|\,ds\notag\\
  &\quad+\kappa\int_\sigma^1\|\bm{g}_{\bm{\phi}}(\mathbf{X}^\kappa(s),s,\bm{C})\|\,ds\notag\\
  &\le L_0\int_\sigma^1\|\mathbf{X}^\kappa(s)-\mathbf{X}^0(s)\|\,ds
       +\kappa G(1-\sigma).
  \label{eq:p2-dep-displacement-integral}
\end{align}
To handle the descending orientation, put $t=1-\sigma$ and
$\varphi(t)=\|\mathbf{X}^\kappa(1-t)-\mathbf{X}^0(1-t)\|$. Then
\[
  \varphi(t)\le L_0\int_0^t\varphi(s)\,ds+\kappa Gt.
\]
For $L_0>0$, Gr\"onwall's inequality yields
\begin{align*}
  \varphi(t)
  &\le\kappa G\left(t+L_0\int_0^t s e^{L_0(t-s)}\,ds\right)\\
  &=\kappa G\frac{e^{L_0t}-1}{L_0}.
\end{align*}
For $L_0=0$, the preceding integral inequality directly gives
$\varphi(t)\le\kappa Gt$. Thus
\begin{equation}
  \|\mathbf{X}^\kappa(\sigma)-\mathbf{X}^0(\sigma)\|
  \le\kappa GH_0(\sigma),
  \qquad
  \E\|\mathbf{X}^\kappa(\Sigma)-\mathbf{X}^0(\Sigma)\|\le\kappa K_Q.
  \label{eq:p2-dep-coupled-displacement}
\end{equation}
The expectation uses the shared source, condition and sampled noise level.

\paragraph{Step 2: Control the change in baseline error.}
Suppressing the fixed arguments $(\sigma,\bm{c})$, the residual bounds give
\begin{align*}
  \bigl|\|\bar{\bm{r}}(\mathbf{x})\|^2-\|\bar{\bm{r}}(\mathbf{y})\|^2\bigr|
  &\le\bigl(\|\bar{\bm{r}}(\mathbf{x})\|+\|\bar{\bm{r}}(\mathbf{y})\|\bigr)
        \|\bar{\bm{r}}(\mathbf{x})-\bar{\bm{r}}(\mathbf{y})\|\\
  &\le2R_rL_r\|\mathbf{x}-\mathbf{y}\|=L_V\|\mathbf{x}-\mathbf{y}\|.
\end{align*}
Applying this inequality to the coupled states in
\eqref{eq:p2-dep-coupled-displacement} gives
\begin{align}
  \mathcal V_{Q_\kappa}(0)-\mathcal V_{Q_0}(0)
  &=\E\left[
    \|\bar{\bm{r}}(\mathbf{X}^\kappa(\Sigma),\Sigma,\bm{C})\|^2
    -\|\bar{\bm{r}}(\mathbf{X}^0(\Sigma),\Sigma,\bm{C})\|^2\right]\notag\\
  &\le L_V\E\|\mathbf{X}^\kappa(\Sigma)-\mathbf{X}^0(\Sigma)\|
   \le\kappa L_VK_Q.
  \label{eq:p2-dep-baseline-shift}
\end{align}

\paragraph{Step 3: Control the change in alignment.}
At the same fixed $(\sigma,\bm{c})$,
\begin{align*}
  &\bigl|\langle\bar{\bm{r}}(\mathbf{x}),\bm{g}_{\bm{\phi}}(\mathbf{x})\rangle
         -\langle\bar{\bm{r}}(\mathbf{y}),\bm{g}_{\bm{\phi}}(\mathbf{y})\rangle\bigr|\\
  &\quad\le
    \bigl|\langle\bar{\bm{r}}(\mathbf{x})-\bar{\bm{r}}(\mathbf{y}),\bm{g}_{\bm{\phi}}(\mathbf{x})\rangle\bigr|
    +\bigl|\langle\bar{\bm{r}}(\mathbf{y}),\bm{g}_{\bm{\phi}}(\mathbf{x})-\bm{g}_{\bm{\phi}}(\mathbf{y})\rangle\bigr|\\
  &\quad\le(L_rG+R_rL_g)\|\mathbf{x}-\mathbf{y}\|=L_A\|\mathbf{x}-\mathbf{y}\|.
\end{align*}
Taking expectations under the coupling therefore yields
\begin{equation}
  A_{Q_\kappa}
  \ge A_0-L_A\E\|\mathbf{X}^\kappa(\Sigma)-\mathbf{X}^0(\Sigma)\|
  \ge A_0-\kappa L_AK_Q.
  \label{eq:p2-dep-alignment-shift}
\end{equation}

\paragraph{Step 4: Combine the fixed-law gain and the trajectory shift.}
The uniform bounds give $B_{Q_\kappa}\le G^2$ and
$\mathcal V_{Q_\kappa}(0)\le R_r^2$ for every allowed gain, including zero.
\hyperref[p2:lem:gain-domain]{Step~1} of the proof of
Proposition~\ref{p2:prop:gain-quadratic} makes all the risks and coefficients finite.
We may therefore apply Proposition~\ref{p2:prop:gain-quadratic} at the single
law $Q=Q_\kappa$ and then add the change in baseline error:
\begin{align}
  &\mathcal V_{Q_\kappa}(\kappa)-\mathcal V_{Q_0}(0)\notag\\
  &\quad=
    \bigl[\mathcal V_{Q_\kappa}(\kappa)-\mathcal V_{Q_\kappa}(0)\bigr]
    +\bigl[\mathcal V_{Q_\kappa}(0)-\mathcal V_{Q_0}(0)\bigr]\notag\\
  &\quad\le-2\kappa A_{Q_\kappa}+\kappa^2B_{Q_\kappa}
            +\kappa L_VK_Q\notag\\
  &\quad\le-2\kappa(A_0-\kappa L_AK_Q)+\kappa^2G^2
            +\kappa L_VK_Q\notag\\
  &\quad=-\kappa(2A_0-L_VK_Q)+\kappa^2(G^2+2L_AK_Q)
         =-\kappa M+\kappa^2N.
  \label{eq:p2-dep-own-law-combination}
\end{align}
This proves \eqref{eq:p2-dep-own-law-bound}.
If $M>0$, then $A_0>0$. Were $G=0$, the correction would vanish along the
baseline trajectory and force $A_0=0$. Hence $G>0$ and $N\ge G^2>0$.
For the gains in \eqref{eq:p2-dep-certificate},
$-\kappa M+\kappa^2N=\kappa(-M+\kappa N)<0$, proving
\eqref{eq:p2-dep-generated-order}.
\end{proof}

\paragraph{Meaning of the certificate.}
The constant $K_Q$ converts the correction gain into a bound on expected
trajectory displacement. The constants $L_V$ and $L_A$ translate this
displacement into bounds on the change in squared baseline residual and
residual--correction alignment, respectively.
The first-order margin $M=2A_0-L_VK_Q$ balances the alignment benefit along
the baseline trajectory against the bound on baseline-error change caused by
trajectory displacement. The condition $M>0$ ensures that the alignment
benefit exceeds this displacement cost. The second-order coefficient
$N=G^2+2L_AK_Q$ accounts for correction magnitude and the bound on alignment
change along the shifted trajectory. Choosing $\kappa<M/N$ makes the
certified first-order benefit dominate these second-order costs.
The resulting bound accounts for both the correction and its induced
trajectory shift, establishing lower velocity error under the corrected
sampler's own state law. Appendix~\ref{p2:app:endpoint} next analyzes how
velocity corrections accumulate into endpoint displacement.

\subsection{From velocity error to endpoint distance}
\label{p2:app:endpoint}

The own-law velocity certificate of Theorem~\ref{p2:prop:generated-order}
does not by itself imply endpoint improvement. Endpoint displacement depends on how velocity errors accumulate over the
trajectory. The following proposition identifies the role of temporal
cancellation, which motivates the endpoint gain analysis below.

\begin{proposition}[Temporal cancellation and endpoint displacement]
\label{p2:prop:unbridged-reversal}
There is a one-dimensional instance with state-independent ideal, base and
corrected fields, satisfying
Assumption~\ref{p2:assump:deployment-ode} on one sampler interval, in which the
corrected field has strictly smaller integrated squared velocity error and a
strictly smaller one-sided path-length bound than the base field, while its
terminal $W_2$ distance from the ideal law is strictly larger.
\end{proposition}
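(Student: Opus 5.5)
The plan is to build the instance so that every field depends only on $\sigma$. Each deployed trajectory is then a deterministic translate of the source, and every quantity in the statement reduces to an integral of a scalar error profile in $\sigma$. Take $\mathcal X=\mathbb R$, the source $\mathbf X_1\sim\mathcal N(0,1)$, a trivial condition, and $q$ uniform on $[0,1]$, so the whole range is one sampler interval.

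For a state-independent field $v(\sigma)$ the descending ODE gives $\mathbf X(0)=\mathbf X_1-\int_0^1 v(\sigma)\,d\sigma$. Let $u^\star\equiv0$ be the ideal field, so the ideal terminal law is $\mathcal N(0,1)$. Two laws that differ by a translation $m$ are at $W_2$ distance exactly $|m|$. Hence, for any field with error profile $e(\sigma)=v(\sigma)-u^\star(\sigma)$:
\begin{itemize}
\item the terminal $W_2$ distance equals $\bigl|\int_0^1 e\,d\sigma\bigr|$;
\item the integrated squared velocity error is $\int_0^1 e^2\,d\sigma$, which is $\mathcal V_{Q}$ since there is no state dependence;
\item the one-sided path-length bound on endpoint displacement is $\int_0^1|e|\,d\sigma$.
\end{itemize}
The claim therefore reduces to choosing two profiles. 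The base profile should have larger $L^2$ and $L^1$ norms but cancelling signed integral. The corrected profile should be small in norm but one-signed.

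For the concrete choice, take the base error $e_0(\sigma)=+1$ on $[0,\tfrac12]$ and $-1$ on $(\tfrac12,1]$, that is $v_0=e_0$. Take $\kappa=1$ and $\bm g_{\bm\phi}(\sigma)=\tfrac12-e_0(\sigma)$, so the corrected error is $e_1\equiv\tfrac12$. Then
\[
\int_0^1 e_0^2=1>\tfrac14=\int_0^1 e_1^2,\qquad \int_0^1|e_0|=1>\tfrac12=\int_0^1|e_1|,
\]
while the terminal $W_2$ distances are $\bigl|\int e_0\bigr|=0<\tfrac12=\bigl|\int e_1\bigr|$. This is the claimed reversal. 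The base sampler lands exactly on the ideal law because its errors cancel in time, while the corrected sampler accumulates a one-signed drift.

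It remains to check Assumption~\ref{p2:assump:deployment-ode} for the family $v_0+\kappa\bm g_{\bm\phi}$ with $\kappa\in[0,\bar\kappa]$ and, say, $\bar\kappa=1$:
\begin{itemize}
\item each field is a bounded step function of $\sigma$, hence Borel measurable in $\sigma$;
\item it is constant in the state, so it is globally Lipschitz with envelope $0$;
\item its values are uniformly bounded by $1+\tfrac32\bar\kappa$, which gives the integrable envelope at a reference state.
\end{itemize}
Uniqueness and global existence then follow trivially from the explicit integral solution.

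I expect no serious obstacle. The only points needing care are the sign convention of the descending orientation in the endpoint formula and the identification of the translate $W_2$ distance. The latter follows from the monotone (quantile) coupling being optimal in one dimension, or directly from the variance decomposition $W_2^2=|m|^2+W_2^2(\text{centered laws})$.
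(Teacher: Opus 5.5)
Your proposal is correct and is essentially the paper's own construction. It uses zero ideal drift, a base drift equal to $\pm1$ on the two halves of the interval so that its signed integral cancels, and a state-independent correction that makes the corrected drift constant with magnitude $\tfrac12$. This gives squared velocity errors $1$ versus $\tfrac14$, path-length bounds $1$ versus $\tfrac12$, and terminal $W_2$ distances $0$ versus $\tfrac12$.

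The differences are cosmetic. The paper works in $t=1-\sigma$ with an arbitrary square-integrable source and any gain $\kappa_{\mathrm{ce}}\in(0,\bar\kappa]$, scaling the correction by $1/\kappa_{\mathrm{ce}}$; this also covers the case where $\bar\kappa<1$ is prescribed in advance rather than chosen as $1$. It proves the $W_2$ identity for translates via the translation coupling together with Jensen's inequality on the mean.
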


\begin{proof}
Reparameterize the sampler interval by $t=1-\sigma\in[0,1]$, so that $t=0$ is
the prescribed source and $t=1$ is the generated endpoint. This substitution
preserves squared velocity error. Let $\mu$ be any square-integrable law on
$\mathbb R$, draw $Z\sim\mu$ and use the same $Z$ as the source of all three
flows. Take the ideal drift to be zero, take
\begin{equation}
  b(t)=
  \begin{cases}
    +1,&0\le t<\tfrac12,\\
    -1,&\tfrac12\le t\le1,
  \end{cases}
  \label{eq:p2-dep-ce-drift}
\end{equation}
as the base drift, fix any gain $\kappa_{\mathrm{ce}}\in(0,\bar\kappa]$ and
define the correction by $g(t)=(\tfrac12-b(t))/\kappa_{\mathrm{ce}}$, so that
the corrected drift is the constant $\tfrac12$. All three drifts are
measurable, bounded and state independent, hence Lipschitz in the state with
constant zero, so they meet Assumption~\ref{p2:assump:deployment-ode}.

Direct integration of $\dot X^\star=0$, $\dot X^0=b$ and
$\dot X^{\mathrm{ce}}=\tfrac12$ from the common source gives
\begin{equation}
  X_t^\star=Z,
  \qquad
  X_t^0=
  \begin{cases}
    Z+t,&0\le t\le\tfrac12,\\
    Z+1-t,&\tfrac12\le t\le1,
  \end{cases}
  \qquad
  X_t^{\mathrm{ce}}=Z+\frac t2,
  \label{eq:p2-dep-ce-flows}
\end{equation}
so that $X_1^\star=X_1^0=Z$ and $X_1^{\mathrm{ce}}=Z+\tfrac12$. The base
drift integrates to zero because its two half-interval contributions cancel.

Because both velocity errors are state independent, evaluating them under any
common state law gives
\begin{equation}
  \int_0^1\E|0-b(t)|^2\,dt=1,
  \qquad
  \int_0^1\E\Bigl|0-\tfrac12\Bigr|^2dt=\frac14,
  \label{eq:p2-dep-ce-velocity}
\end{equation}
and the one-sided path-length bounds from the common-source coupling are
$\int_0^1|b|\,dt=1$ and $\int_0^1\tfrac12\,dt=\tfrac12$. For the endpoints,
\eqref{eq:p2-dep-ce-flows} gives $\pi_1^\star=\pi_1^0=\mu$ while
$\pi_1^{\mathrm{ce}}$ is $\mu$ translated by $\tfrac12$. The translation
coupling gives $W_2(\pi_1^{\mathrm{ce}},\pi_1^\star)\le\tfrac12$, and for any
finite-cost coupling $(Y,Y')$ of the two laws Jensen's inequality gives
$\E|Y-Y'|^2\ge|\E Y-\E Y'|^2=\tfrac14$. Hence
\begin{equation}
  W_2(\pi_1^0,\pi_1^\star)=0,
  \qquad
  W_2(\pi_1^{\mathrm{ce}},\pi_1^\star)=\frac12 .
  \label{eq:p2-dep-ce-endpoint}
\end{equation}
These values prove the stated strict inequalities. The signed drift integral
determines endpoint displacement in this construction, while squared
velocity error measures the drift's energy over time.
\end{proof}

For affine--Gaussian flows with a common linear part, the correction shifts
the endpoint mean by an explicit displacement. We use this structure to
derive an exact endpoint gain identity. The common linear part propagates
the correction through the fundamental solution, and a state-independent
correction preserves covariance by translating each conditional law. The
square-integrability conditions make the averaged endpoint quadratic finite.

\begin{assumption}[Exact affine--Gaussian endpoint specialization]
\label{p2:assump:affine-endpoint}
Conditionally on $\bm{C}=\bm{c}$, the ideal and base sampler ODEs are affine Gaussian
with a common linear part $M(\sigma)\mathbf{x}$, a common conditional endpoint
covariance and an additive drift difference. For $P_{\bm{C}}$-almost every $\bm{c}$,
the common linear coefficient and the ideal additive drift are integrable
in $\sigma$ on $[0,1]$, so the ideal ODE has a unique global solution from
the prescribed Gaussian source. The deployed correction
$\bm{g}_{\bm{\phi}}(\sigma,\bm{c})$ is state independent. Let $\Psi(\sigma,s)$ be the common
fundamental solution of $\dot{\mathbf{x}}=M(\sigma)\mathbf{x}$, let $\bm{m}_0^\star(\bm{c})$ and $\bm{m}_0^0(\bm{c})$
be the ideal and base conditional endpoint means and put
$\bm{E}_0(\bm{c})=\bm{m}_0^\star(\bm{c})-\bm{m}_0^0(\bm{c})$ and
\begin{equation}
  \bm{G}(\bm{c})=\int_1^0\Psi(0,s)\,\bm{g}_{\bm{\phi}}(s,\bm{c})\,\mathrm ds,
  \label{eq:p2-dep-displacement}
\end{equation}
in the sampler orientation. Both $\bm{E}_0$ and $\bm{G}$ belong to
$L^2(P_{\bm{C}};\mathcal X)$.
\end{assumption}

\begin{proposition}[Endpoint displacement and covariance preservation]
\label{p2:prop:endpoint-displacement}
Under Assumptions~\ref{p2:assump:deployment-ode} and
\ref{p2:assump:affine-endpoint}, for $P_{\bm{C}}$-almost every $\bm{c}$ and every finite
gain $\kappa$ in the exact-flow domain, the conditional endpoint law
$\pi_0^\kappa(\cdot\mid \bm{c})$ of the corrected sampler satisfies
\begin{equation}
  \bm{m}_0^\kappa(\bm{c})=\bm{m}_0^0(\bm{c})+\kappa \bm{G}(\bm{c}),
  \qquad
  \operatorname{Cov}\bigl(\mathbf{X}_0^\star(\bm{c})\bigr)
  =\operatorname{Cov}\bigl(\mathbf{X}_0^0(\bm{c})\bigr)
  =\operatorname{Cov}\bigl(\mathbf{X}_0^\kappa(\bm{c})\bigr)
  =\Xi_0(\bm{c}).
  \label{eq:p2-dep-endpoint-mean}
\end{equation}
\end{proposition}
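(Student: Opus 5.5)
The plan is to work conditionally on $\bm{C}=\bm{c}$ and use variation of constants for the common linear part. Under Assumption~\ref{p2:assump:affine-endpoint}, the corrected sampler field is
\[
\bm{v}_0(\mathbf{x},\sigma,\bm{c})+\kappa\bm{g}_{\bm{\phi}}(\sigma,\bm{c})
=M(\sigma)\mathbf{x}+\bm{a}_0(\sigma,\bm{c})+\kappa\bm{g}_{\bm{\phi}}(\sigma,\bm{c}),
\]
where $\bm{a}_0$ is the base additive drift; the correction is state independent by assumption. Assumption~\ref{p2:assump:deployment-ode} supplies unique global solutions for every admissible gain. The Carath\'eodory variation-of-constants formula with the fundamental solution $\Psi$ then gives, for each source realization,
\[
\mathbf{X}^\kappa(0)=\Psi(0,1)\mathbf{X}_1+\int_1^0\Psi(0,s)\bigl(\bm{a}_0(s,\bm{c})+\kappa\bm{g}_{\bm{\phi}}(s,\bm{c})\bigr)\,\mathrm ds .
\]
Subtracting the $\kappa=0$ solution from the same source yields the pathwise identity $\mathbf{X}^\kappa(0)=\mathbf{X}^0(0)+\kappa\bm{G}(\bm{c})$, with $\bm{G}$ as in \eqref{eq:p2-dep-displacement}. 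I will check that the integral is well defined: $\Psi$ is continuous, and $\bm{g}_{\bm{\phi}}$ is integrable in $s$ by the common integrable envelope.

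Two consequences follow from this identity. Taking conditional expectations gives $\bm{m}_0^\kappa(\bm{c})=\bm{m}_0^0(\bm{c})+\kappa\bm{G}(\bm{c})$. The conditional mean of the source term is finite because the prescribed Gaussian source is square integrable and $\Psi(0,1)$ is a fixed matrix. The endpoint $\mathbf{X}^\kappa(0)$ is a deterministic translate of $\mathbf{X}^0(0)$, so their covariances agree. The base covariance equals $\Psi(0,1)\operatorname{Cov}(\mathbf{X}_1)\Psi(0,1)^*$, because the additive drift contributes only a deterministic shift.

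For the ideal sampler, the same variation-of-constants formula applies with the ideal additive drift. That drift is integrable by assumption, and the ideal ODE is well posed by assumption. So its covariance is also $\Psi(0,1)\operatorname{Cov}(\mathbf{X}_1)\Psi(0,1)^*=:\Xi_0(\bm{c})$, which matches the stated common endpoint covariance.

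The main obstacle is measure-theoretic rather than algebraic. The linear coefficient $M$ is only integrable in $\sigma$, so $\Psi$ must be taken as the absolutely continuous fundamental solution and the formula verified in the Carath\'eodory sense. All statements must also hold for $P_{\bm{C}}$-almost every $\bm{c}$, with measurable dependence on $\bm{c}$. I would handle this by fixing a full-measure set of $\bm{c}$ on which the integrability hypotheses hold, proving the identities there, and noting that $\bm{G}$ is measurable in $\bm{c}$ by Fubini.
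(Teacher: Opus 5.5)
Your proof is correct and follows the paper's argument. Variation of constants with the common fundamental solution $\Psi$, under the common-source coupling, gives the pathwise translation $\mathbf{X}_0^\kappa=\mathbf{X}_0^0+\kappa\bm{G}(\bm{c})$, and the mean shift and the equality of the base and corrected covariances follow from it. The one small difference is that you derive the equality of the ideal and base covariances from the shared linear part and shared source, both being $\Psi(0,1)\operatorname{Cov}(\mathbf{X}_1)\Psi(0,1)^*$, whereas the paper reads it off the common-covariance clause of Assumption~\ref{p2:assump:affine-endpoint}; your Carath\'eodory and integrability checks are details the paper leaves implicit.
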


\begin{proof}
Fix a condition $\bm{c}$ and suppress it. Since the correction is state
independent, the base and corrected conditional fields are
$\mathbf{x}\mapsto M(\sigma)\mathbf{x}+\bm{b}_0(\sigma)$ and
$\mathbf{x}\mapsto M(\sigma)\mathbf{x}+\bm{b}_0(\sigma)+\kappa \bm{g}_{\bm{\phi}}(\sigma,\bm{c})$, respectively.
Let $\mathbf{X}_1$ be the prescribed
common Gaussian source. Variation of constants applied directly in the
descending orientation gives
\[
  \mathbf{X}_0^0=\Psi(0,1)\mathbf{X}_1+\int_1^0\Psi(0,s)\bm{b}_0(s)\,\mathrm ds,
\]
\[
  \mathbf{X}_0^\kappa
  =\Psi(0,1)\mathbf{X}_1+\int_1^0\Psi(0,s)\bigl[\bm{b}_0(s)+\kappa \bm{g}_{\bm{\phi}}(s,\bm{c})\bigr]\mathrm ds
  =\mathbf{X}_0^0+\kappa \bm{G}(\bm{c}).
\]
Taking conditional expectations gives the first display of
\eqref{eq:p2-dep-endpoint-mean}. The same identity is a pathwise translation
under the common-source coupling, and $\kappa \bm{G}(\bm{c})$ is deterministic once $\bm{c}$
is fixed, so
$\mathbf{X}_0^\kappa-\bm{m}_0^\kappa(\bm{c})=\mathbf{X}_0^0-\bm{m}_0^0(\bm{c})$ and the corrected and base
conditional covariances agree. The ideal and base endpoint covariances agree by
Assumption~\ref{p2:assump:affine-endpoint}, which gives the common $\Xi_0(\bm{c})$.

The terminal-value problem fixes the integration direction from one to zero.
For example, if $M\equiv0$ and $\bm{g}_{\bm{\phi}}(s,\bm{c})\equiv \bm{g}$, then
\eqref{eq:p2-dep-displacement} gives $\bm{G}(\bm{c})=-\bm{g}$, and the corrected descending
flow ends at $\mathbf{X}_0^0-\kappa \bm{g}$.
\end{proof}

\begin{proposition}[Exact endpoint gain quadratic]
\label{p2:prop:endpoint-identity}
Under Assumption~\ref{p2:assump:affine-endpoint} and
Proposition~\ref{p2:prop:endpoint-displacement}, put
\begin{equation}
  A_{\mathrm{end}}=\E_{\bm{C}}\langle \bm{E}_0,\bm{G}\rangle,
  \qquad
  B_{\mathrm{end}}=\E_{\bm{C}}\|\bm{G}\|^2\ \ge0 .
  \label{eq:p2-dep-endpoint-coefficients}
\end{equation}
Then for every finite gain $\kappa$ in the exact-flow domain,
\begin{equation}
  \E_{\bm{C}}W_2^2\bigl(\pi_0^\kappa,\pi_0^\star\bigr)
  -\E_{\bm{C}}W_2^2\bigl(\pi_0^0,\pi_0^\star\bigr)
  =-2\kappa A_{\mathrm{end}}+\kappa^2B_{\mathrm{end}} .
  \label{eq:p2-dep-endpoint-quadratic}
\end{equation}
\end{proposition}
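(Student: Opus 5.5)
The proof reduces $W_2$ between the conditional endpoint laws to a squared distance between their means, and then reuses the algebra of Proposition~\ref{p2:prop:gain-quadratic} at the level of the condition law $P_{\bm{C}}$. First, fix a condition $\bm{c}$ in the full-measure set on which Proposition~\ref{p2:prop:endpoint-displacement} holds. By that proposition, the three conditional endpoint laws $\pi_0^\star(\cdot\mid\bm{c})$, $\pi_0^0(\cdot\mid\bm{c})$ and $\pi_0^\kappa(\cdot\mid\bm{c})$ are Gaussian. They share the covariance $\Xi_0(\bm{c})$, and their means are $\bm{m}_0^\star$, $\bm{m}_0^0$ and $\bm{m}_0^0+\kappa\bm{G}$. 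Gaussianity follows because each endpoint is an affine image of the Gaussian source via variation of constants.

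I will then use the fact that two laws differing by a deterministic translation $\bm{a}$ satisfy $W_2^2=\|\bm{a}\|^2$. This holds whether or not they are Gaussian, provided the laws are square-integrable. For the upper bound, the translation coupling gives cost $\|\bm{a}\|^2$. For the lower bound, any coupling $(Y,Y')$ satisfies $\E\|Y-Y'\|^2\ge\|\E Y-\E Y'\|^2$ by Jensen, exactly as in the proof of Proposition~\ref{p2:prop:unbridged-reversal}. Since equal covariances make the laws translates of one another, this gives
\[
W_2^2(\pi_0^\kappa,\pi_0^\star)(\bm{c})=\|\bm{E}_0(\bm{c})-\kappa\bm{G}(\bm{c})\|^2,
\qquad
W_2^2(\pi_0^0,\pi_0^\star)(\bm{c})=\|\bm{E}_0(\bm{c})\|^2 .
\]
Here I use $\bm{m}_0^\star-\bm{m}_0^\kappa=\bm{E}_0-\kappa\bm{G}$.

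Next, I expand pointwise:
\[
\|\bm{E}_0-\kappa\bm{G}\|^2-\|\bm{E}_0\|^2=-2\kappa\langle\bm{E}_0,\bm{G}\rangle+\kappa^2\|\bm{G}\|^2 .
\]
Because $\bm{E}_0,\bm{G}\in L^2(P_{\bm{C}};\mathcal X)$ by Assumption~\ref{p2:assump:affine-endpoint}, each term is integrable, with the cross term handled by Cauchy--Schwarz as in Step~1 of Proposition~\ref{p2:prop:gain-quadratic}. Taking $\E_{\bm{C}}$ then yields \eqref{eq:p2-dep-endpoint-quadratic}, and $B_{\mathrm{end}}\ge0$ is immediate.

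The main obstacle is measurability and finiteness. The map $\bm{c}\mapsto W_2^2(\cdot)$ must be measurable, and the conditional endpoint laws must have finite second moments, so that $W_2$ is finite and the expectations separate. I would handle both by writing the $W_2^2$ values as the explicit functions $\|\bm{E}_0-\kappa\bm{G}\|^2$ and $\|\bm{E}_0\|^2$ of measurable $L^2$ maps. Finiteness of second moments would come from Gaussianity of the source together with the integrability of the coefficients that Assumption~\ref{p2:assump:affine-endpoint} grants for almost every $\bm{c}$.
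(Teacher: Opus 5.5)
Your proposal is correct and follows the paper's proof: the paper proves $W_2^2(\mathcal N(\bm m,\Xi),\mathcal N(\bm n,\Xi))=\|\bm m-\bm n\|^2$ with the same shared-noise translation coupling and Jensen lower bound, inserts the means and common covariance from Proposition~\ref{p2:prop:endpoint-displacement}, and integrates the pointwise quadratic using the $L^2$ and Cauchy--Schwarz bounds. The only difference is cosmetic: you state the key lemma for arbitrary square-integrable translates rather than for Gaussians with a common covariance.
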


\begin{proof}
We first compute the transport distance for a common covariance, then
apply the mean displacement and average over conditions.

\phantomsection
\label{p2:lem:common-covariance-w2}
\paragraph{Step 1: Compute the Wasserstein distance for a common covariance.}
For any $\bm{m},\bm{n}\in\mathcal X$ and any positive-semidefinite covariance operator
$\Xi$ on $\mathcal X$,
\begin{equation}
  W_2^2\bigl(\mathcal N(\bm{m},\Xi),\mathcal N(\bm{n},\Xi)\bigr)=\|\bm{m}-\bm{n}\|^2 .
  \label{eq:p2-dep-w2-identity}
\end{equation}
The identity includes singular $\Xi$ and the Dirac case $\Xi=0$.

Let $\bm{Z}\sim\mathcal N(0,\Xi)$, which is well defined for singular $\Xi$, and set
$\mathbf{X}=\bm{m}+\bm{Z}$ and $\mathbf{Y}=\bm{n}+\bm{Z}$. This is a coupling of the two laws with
$\E\|\mathbf{X}-\mathbf{Y}\|^2=\|\bm{m}-\bm{n}\|^2$, which gives the upper bound. For the matching lower
bound, let $(\widetilde{\mathbf{X}},\widetilde{\mathbf{Y}})$ be any coupling. If its quadratic cost
is infinite the bound is immediate, and otherwise Jensen's inequality for
$\bm{z}\mapsto\|\bm{z}\|^2$ gives
$\E\|\widetilde{\mathbf{X}}-\widetilde{\mathbf{Y}}\|^2\ge\|\E(\widetilde{\mathbf{X}}-\widetilde{\mathbf{Y}})\|^2=\|\bm{m}-\bm{n}\|^2$.
Taking the infimum over couplings gives
\eqref{eq:p2-dep-w2-identity} for every positive-semidefinite $\Xi$.

\paragraph{Step 2: Apply the conditional endpoint mean displacement.}
Proposition~\ref{p2:prop:endpoint-displacement} gives
$\bm{m}_0^\star(\bm{c})-\bm{m}_0^\kappa(\bm{c})=\bm{E}_0(\bm{c})-\kappa \bm{G}(\bm{c})$ and a covariance $\Xi_0(\bm{c})$
shared by all three conditional endpoint laws. \hyperref[p2:lem:common-covariance-w2]{Step~1}
therefore gives, for $P_{\bm{C}}$-almost every $\bm{c}$,
\begin{equation}
  W_2^2\bigl(\pi_0^\kappa(\cdot\mid \bm{c}),\pi_0^\star(\cdot\mid \bm{c})\bigr)
  =\|\bm{E}_0(\bm{c})-\kappa \bm{G}(\bm{c})\|^2,
  \qquad
  W_2^2\bigl(\pi_0^0(\cdot\mid \bm{c}),\pi_0^\star(\cdot\mid \bm{c})\bigr)
  =\|\bm{E}_0(\bm{c})\|^2 .
  \label{eq:p2-dep-conditional-w2}
\end{equation}
These identities express the conditional endpoint distances as squared
norms of mean displacements.

\paragraph{Step 3: Average the exact quadratic over conditions.}
We now average over conditions. Assumption~\ref{p2:assump:affine-endpoint}
makes $\bm{E}_0$ and $\bm{G}$ strongly measurable and square integrable, so both maps in
\eqref{eq:p2-dep-conditional-w2} are measurable, and
$\|\bm{E}_0-\kappa \bm{G}\|^2\le2\|\bm{E}_0\|^2+2\kappa^2\|\bm{G}\|^2$ makes both integrable.
Cauchy--Schwarz gives
$\E_{\bm{C}}|\langle \bm{E}_0,\bm{G}\rangle|
 \le(\E_{\bm{C}}\|\bm{E}_0\|^2)^{1/2}(\E_{\bm{C}}\|\bm{G}\|^2)^{1/2}<\infty$.
Every term is therefore a finite real number, and the pointwise identity
$\|\bm{E}_0-\kappa \bm{G}\|^2-\|\bm{E}_0\|^2=-2\kappa\langle \bm{E}_0,\bm{G}\rangle+\kappa^2\|\bm{G}\|^2$
may be integrated. Substituting
\eqref{eq:p2-dep-conditional-w2} gives
\eqref{eq:p2-dep-endpoint-quadratic}.
\end{proof}

\begin{corollary}[Endpoint gain window]
\label{p2:cor:endpoint-window}
Under Proposition~\ref{p2:prop:endpoint-identity}, the corrected sampler has
strictly smaller averaged squared endpoint distance than the base sampler
exactly when $B_{\mathrm{end}}>0$, $A_{\mathrm{end}}>0$ and
\begin{equation}
  0<\kappa<\frac{2A_{\mathrm{end}}}{B_{\mathrm{end}}},
  \label{eq:p2-dep-endpoint-window}
\end{equation}
with equality at both endpoints of that interval and strict degradation beyond
it. The unique optimal gain is $A_{\mathrm{end}}/B_{\mathrm{end}}$. If
$B_{\mathrm{end}}=0$, then $\bm{G}=0$ almost surely and every gain ties. If
$\bm{E}_0=0$ almost surely, then $A_{\mathrm{end}}=0$ and every positive gain with
$B_{\mathrm{end}}>0$ strictly increases the endpoint distance.
\end{corollary}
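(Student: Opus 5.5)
This corollary follows from the exact endpoint gain quadratic \eqref{eq:p2-dep-endpoint-quadratic} of Proposition~\ref{p2:prop:endpoint-identity}, by the same argument used for Proposition~\ref{p2:prop:gain-boundaries} and Corollary~\ref{p2:cor:optimal-gain} at a fixed state law. Write $D(\kappa)$ for the left side of \eqref{eq:p2-dep-endpoint-quadratic}. Then
\[
D(\kappa)=\kappa\bigl(\kappa B_{\mathrm{end}}-2A_{\mathrm{end}}\bigr),
\]
and every quantity is finite by Step~3 of that proof. I first bound the cross term by Cauchy--Schwarz,
\[
|A_{\mathrm{end}}|\le(\E_{\bm C}\|\bm E_0\|^2)^{1/2}B_{\mathrm{end}}^{1/2},
\]
which is needed for the degenerate cases.

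\emph{Case split.} The first case is $B_{\mathrm{end}}=0$. Then $\|\bm G\|^2$ is nonnegative with zero mean, so $\bm G=0$ $P_{\bm C}$-almost surely. Consequently $A_{\mathrm{end}}=0$, $D\equiv0$, and every gain ties. The second case is $B_{\mathrm{end}}>0$ with $A_{\mathrm{end}}\le0$. For $\kappa>0$ both factors of $D$ are positive, so $D(\kappa)>0$ and no positive gain improves the endpoint. The third case is $B_{\mathrm{end}}>0$ with $A_{\mathrm{end}}>0$. For $\kappa>0$ the sign of $D$ is the sign of $\kappa B_{\mathrm{end}}-2A_{\mathrm{end}}$. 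Hence $D<0$ exactly on $(0,2A_{\mathrm{end}}/B_{\mathrm{end}})$, $D=0$ at both endpoints, and $D>0$ beyond. Together these cases prove the ``exactly when'' characterization: strict improvement for some gain forces the third case, and in that case it occurs precisely on the open window \eqref{eq:p2-dep-endpoint-window}.

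\emph{Optimal gain and the case $\bm E_0=0$.} In the third case I complete the square,
\[
D(\kappa)=B_{\mathrm{end}}\bigl(\kappa-A_{\mathrm{end}}/B_{\mathrm{end}}\bigr)^2-A_{\mathrm{end}}^2/B_{\mathrm{end}}.
\]
This gives the unique minimizer $A_{\mathrm{end}}/B_{\mathrm{end}}$, which is positive and therefore lies inside the window. If $\bm E_0=0$ almost surely, then $A_{\mathrm{end}}=\E_{\bm C}\langle\bm E_0,\bm G\rangle=0$. With $B_{\mathrm{end}}>0$ this falls under the second case, so $D(\kappa)=\kappa^2B_{\mathrm{end}}>0$ for every $\kappa>0$.

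\emph{Main obstacle.} There is no substantive obstacle; the argument is bookkeeping. The one point to watch is that the statement ranges over gains ``in the exact-flow domain'', so the window and the optimal gain should be read intersected with the admissible range $[0,\bar\kappa]$ of Assumption~\ref{p2:assump:deployment-ode}. For the stated claims I take the affine--Gaussian identity to hold for all finite $\kappa\ge0$, as in Proposition~\ref{p2:prop:endpoint-identity}. If the domain is only $[0,\bar\kappa]$, the conclusions hold on that intersection.
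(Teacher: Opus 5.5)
Your proposal is correct and follows the paper's route. The paper observes that \eqref{eq:p2-dep-endpoint-quadratic} has the same form as \eqref{eq:p2-dep-gain-quadratic} and invokes Proposition~\ref{p2:prop:gain-boundaries} and Corollary~\ref{p2:cor:optimal-gain} verbatim, using case (ii) for $\bm{E}_0=0$; you re-derive that case split and the completed square inline, and your caveat about restricting the window and optimal gain to the admissible gain range is a fair refinement the paper leaves implicit.
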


\begin{proof}
\Eqref{eq:p2-dep-endpoint-quadratic} has the same shape as
\eqref{eq:p2-dep-gain-quadratic} with $A_{\mathrm{end}}$ and
$B_{\mathrm{end}}$ in place of $A_Q$ and $B_Q$, so
Proposition~\ref{p2:prop:gain-boundaries} and
Corollary~\ref{p2:cor:optimal-gain} apply verbatim to the scalar function
$\kappa\mapsto\E_{\bm{C}}W_2^2(\pi_0^\kappa,\pi_0^\star)$. For the last statement, if
$\bm{E}_0=0$ almost surely then $\langle \bm{E}_0,\bm{G}\rangle=0$ almost surely, so
$A_{\mathrm{end}}=0$ and case (ii) of
Proposition~\ref{p2:prop:gain-boundaries} applies.
\hyperref[p2:lem:common-covariance-w2]{Step~1} of the proof of
Proposition~\ref{p2:prop:endpoint-identity} covers every positive-semidefinite
conditional covariance $\Xi_0$, including singular cases.
\end{proof}

\section{Implementation Details}
\label{sec:appendix_implementation_details}

We expand the settings in Sections~\ref{sec:image_generation}--\ref{sec:image_to_video_generation}, covering model configurations, optimization, inference and evaluation.

\subsection{Image Generation}
\label{sec:appendix_image_implementation}

% Local sources: RAEv2/ckpts/stage2/in256-k7-ft-copilot/config.yaml,
% RAEv2/src/stage2/copilot/{model,losses,capture}.py and
% RAEv2/scripts/stage2/pbs_imagenet_k7_ft_copilot_8gpu.pbs.
\paragraph{Model configuration.}
We build on the released ImageNet-256 RAEv2 checkpoint~\citep{singh2026improved} trained for 80 epochs.
The representation autoencoder uses the DINOv3-L encoder with seven-layer feature aggregation and a pretrained ViT-XL decoder.
We keep the encoder, decoder and latent normalization statistics fixed throughout adaptation.
Each image is represented by a $16\times16$ grid of 1,024-dimensional latent vectors.
The base transformer processes these vectors with latent patch size 1, using 28 blocks of width 1,440 with 20 attention heads, followed by a two-block diffusion head of width 2,048 with 16 attention heads.
Its internal guidance branch reads the features after block 8.
The refiner uses 10 blocks of width 768, 12 attention heads and an MLP expansion ratio of 4.
It takes the predicted clean latent, noise-level and class embeddings and hidden states from base blocks 3, 11, 19 and 27, with block indices counted from one.
We initialize the refiner from scratch and zero-initialize its output projection, so its initial velocity correction is zero.

\paragraph{Training.}
The base retains its flow-matching objective, including supervision of the internal guidance branch.
Training noise levels follow the base configuration: we draw a logit-normal variable with underlying Gaussian mean 0 and standard deviation 1, then apply the same factor-8 shift used for sampling below.
The refiner minimizes the mean squared error to the realized velocity residual, together with a cosine-direction term of weight $0.1$.
Its overall loss weight is 1.
We detach the residual target and all base-model inputs to the refiner, so the refiner loss updates only the refiner and the base is updated by its own objective.
Both networks use GMuon with momentum $0.95$, Nesterov updates, zero weight decay and gradient-norm clipping at 1.
The base and refiner learning rates warm up from zero over $0.2$ epochs to $2\times10^{-5}$ and $4\times10^{-4}$, respectively, then decay linearly to $2\times10^{-6}$ and $4\times10^{-5}$ at the end of the 80-epoch adaptation.
We train with a global batch size of 1,024 on eight NVIDIA H200 GPUs, using bfloat16 mixed precision, 64 images per GPU per microbatch and two gradient-accumulation steps.
% Runtime source: RAEv2/ckpts/stage2/in256-k7-ft-copilot/pbs-53054.gaas.log.
The full 80-epoch adaptation comprises 100,080 optimizer steps and takes approximately 42 hours on a single node, including periodic evaluation and checkpointing.
Class-conditioning dropout is $0.1$, and exponential moving average (EMA) weights are maintained separately for both networks with decay $0.9995$.

% Sampling sources: RAEv2/configs/stage2/sampling/imagenet-dinov3l-k7-ft-copilot.yaml,
% RAEv2/src/stage2/transport/{sampler,transport}.py and
% RAEv2/src/stage2/copilot/runner.py.
\paragraph{Sampling and deployment.}
At test time, we use the EMA weights and generate 50,000 images, with 50 images for each of the 1,000 classes.
We start from standard Gaussian latent noise and take 100 Euler steps on the base model's shifted noise schedule.
Specifically, a uniform grid $s_i=1-i/100$, for $i=0,\ldots,100$, is mapped to noise levels $\sigma_i=8s_i/(1+7s_i)$.
The shift factor 8 follows the released configuration, $\sqrt{(1024\cdot16\cdot16)/4096}$.
The model predicts clean latents, which the sampler converts to velocities using a denominator $\max(\sigma,0.05)$ for numerical stability.
We evaluate both without guidance and with RAEv2's internal guidance at scale $1.78$ over $\sigma\in[0.1,1]$.
In both settings, the refiner correction uses gain $\kappa=1$ within $\sigma\in[0.15,0.92]$ and is zero outside this interval.
When guidance is enabled, we add the refiner correction to the velocity after the guidance operation.
We decode the final latent with the frozen decoder and compute all metrics from the resulting 8-bit RGB images.

% Published rows: FD-loss (arXiv:2604.28190v1), Tables 4 and F.6,
% and RAEv2 (arXiv:2605.18324v1), Table 7.
% These are published re-evaluations, not new runs of these seven models.
\paragraph{Baselines.}
The published-baseline rows in Table~\ref{tab:imagenet_generation_comparison} cover SiT-XL/2, DDT-XL, SiT-XL/2-REPA, LightningDiT, REG, REPA-E and RAE-XL.
We use their reported evaluations from FD-loss~\citep{yang2026representationfrechetlossvisual}, whose representation-distance results are also reproduced by RAEv2~\citep{singh2026improved}.
That evaluation generates 50,000 images from official checkpoints with the corresponding official sampling code, balancing the sample count across classes and using method-specific inference settings.
Table~\ref{tab:appendix_image_baseline_protocols} summarizes the sampling budgets and guidance mechanisms.
The six VAE-based methods use 250 sampling steps with classifier-free guidance (CFG), while RAE-XL uses 50 Euler steps and AutoGuidance from a separately trained weaker model.
These published results therefore retain their original sampling budgets and guidance configurations.

\begin{table}[!htb]
    \centering
    \small
    \caption{Evaluation configurations for all seven published image-generation baselines in Table~\ref{tab:imagenet_generation_comparison}. Steps count sampler updates. CFG combines conditional and unconditional predictions, while AutoGuidance combines a main model with a weaker model.}
    \label{tab:appendix_image_baseline_protocols}
    \begin{tabular}{llcl}
        \toprule
        Method & Image representation & Steps & Guidance \\
        \midrule
        SiT-XL/2~\citep{ma2024sit} & SD-VAE & 250 & CFG \\
        DDT-XL~\citep{wang2026ddt} & SD-VAE & 250 & CFG \\
        SiT-XL/2-REPA~\citep{yu2025repa} & SD-VAE & 250 & CFG \\
        LightningDiT~\citep{yao2025vavae} & VA-VAE & 250 & CFG \\
        REG~\citep{wu2026representation} & SD-VAE & 250 & CFG \\
        REPA-E~\citep{leng2025repae} & Jointly tuned VAE & 250 & CFG \\
        RAE-XL~\citep{zheng2026rae} & DINOv2-B RAE & 50 & AutoGuidance \\
        \bottomrule
    \end{tabular}
\end{table}

% Local baseline sources: table source-row comments, the joint-training run,
% RAEv2/configs/stage2/training/imagenet-dinov3l-k7-self-flow-ft.yaml,
% RAEv2/ckpts/stage2/in256-k7-self-flow-ft/offline_eval/configs/65790.gaas/,
% and RAEv2/scripts/eval/imagenet_self_flow_cfg_metrics_manifest.json.

The RAEv2 models with different training epochs use the same frozen autoencoder, EMA weights, 100-step sampler, class-balanced sample count and guidance configuration as \method.
This comparison isolates the effect of deploying the refiner at a fixed base checkpoint.
For SelfFlow~\citep{chefer2026selfflow}, we evaluate a local adaptation of its self-supervised flow-matching objective to the same frozen DINOv3-L representation autoencoder.
This baseline uses a 30-block transformer of width 1,280 with 20 attention heads, a dual-noise mixing ratio of $0.25$ and representation alignment of weight $0.8$ between student block 9 and EMA-target block 21.
We use EMA weights, the same 100-step shifted Euler grid and 50,000 class-balanced images.
The unguided runs disable guidance, and the guided runs use CFG at scale $1.5$ over $\sigma\in[0,0.7]$.
The SelfFlow rows report this local RAEv2-latent adaptation rather than the original paper's ImageNet configuration.

% Metric sources: RAEv2/.venv/lib/python3.10/site-packages/fd_evaluator/
% {api.py,catalogue.yaml,extractors.py}, plus
% RAEv2/scripts/eval/compute_imagenet_fdr.py and its saved protocol metadata.
\paragraph{Inception-based metrics.}
We report Inception Score (IS, higher is better)~\citep{salimans2016improvedtechniquestraininggans} and Fr\'echet Inception Distance (FID, lower is better)~\citep{heusel2018ganstrainedtimescaleupdate}.
IS measures the confidence and marginal diversity of Inception-v3 class predictions on generated images.
For a generated image $\mathbf{x}$, let $p(y\mid\mathbf{x})$ be the classifier distribution over class labels $y$, and let $p(y)$ be its marginal over generated samples.
Then $\mathrm{IS}=\exp\!\bigl(\mathbb{E}_{\mathbf{x}}\mathrm{KL}(p(y\mid\mathbf{x})\Vert p(y))\bigr)$, where $\mathrm{KL}$ denotes Kullback--Leibler divergence.
Our evaluator shuffles the generated samples before splitting them into ten equal groups and averages the resulting scores.
FID compares the mean and covariance of generated and real images in the 2,048-dimensional Inception-v3 feature space.
For our local evaluations, both metrics use the Inception-v3-compatible extractor in the RAEv2 evaluation package, with its released ImageNet-256 reference statistics for FID.

\paragraph{Representation-based metrics.}
We also report Representation Fr\'echet Distance ($\mathrm{FD}_r$, lower is better) in ConvNeXt, DINOv2 and CLIP feature spaces.
These metrics compare generated distributions through complementary supervised, self-supervised and image--text representations.
Following the normalized-distance convention of FD-loss~\citep{yang2026representationfrechetlossvisual}, each table entry is a ratio rather than a raw Fr\'echet distance.
For a frozen feature extractor $\bm{\phi}$, let $(\bm{\mu}_G,\Sigma_G)$ and $(\bm{\mu}_R,\Sigma_R)$ denote the feature mean vector and covariance operator of generated images $G$ and the real training reference $R$.
We compute
\begin{equation}
\label{eq:appendix_image_fd}
\mathrm{FD}_{\bm{\phi}}(G,R)
=\lVert\bm{\mu}_G-\bm{\mu}_R\rVert_2^2
+\operatorname{Tr}\!\left(\Sigma_G+\Sigma_R
-2\bigl(\Sigma_R^{1/2}\Sigma_G\Sigma_R^{1/2}\bigr)^{1/2}\right),
\end{equation}
and normalize it by the distance from real ImageNet validation images $V$ to the same training reference,
\begin{equation}
\label{eq:appendix_image_fdr}
\mathrm{FD}_{r,\bm{\phi}}(G)
=\frac{\mathrm{FD}_{\bm{\phi}}(G,R)}{\mathrm{FD}_{\bm{\phi}}(V,R)}.
\end{equation}
The ConvNeXt column uses spatially averaged ConvNeXt V2-Base features~\citep{woo2023convnextv2codesigningscaling} at input resolution $224\times224$.
The DINOv2 and CLIP columns use the class-token features of DINOv2 ViT-L/14~\citep{oquabdinov2} and OpenAI CLIP ViT-L/14~\citep{radford2021learning}, respectively, both evaluated at $256\times256$ resolution.
All three feature vectors have dimension 1,024.
We retain the released extractor-specific normalization and reference statistics, using bicubic resizing with antialiasing when resizing is needed.
The normalization constants in \eqref{eq:appendix_image_fdr} are $56.87$, $14.19$ and $5.60$ for ConvNeXt, DINOv2 and CLIP, respectively.
A value of 1 corresponds to the real validation-set reference in that feature space.
We report the three distances individually to expose differences between representations.

\subsection{Text-to-Video Generation}
\label{sec:appendix_t2v_implementation}

% Model sources: DiffSynth-Studio/examples/wanvideo/model_training/train_joint_copilot.py,
% video_copilot/model_v3.py and paper/backup/t2v_parameter_audit_20260920/parameter_counts.json.
\paragraph{Model configuration.}
We build on Wan2.1-T2V-1.3B~\citep{wan2025wan}, keeping its pretrained video variational autoencoder (VAE) and UMT5-XXL text encoder fixed while updating the full base transformer and the refiner.
The base transformer has 30 blocks of width 1,536 with 12 attention heads.
The VAE maps an 81-frame video at $480\times832$ resolution to a latent tensor with 16 channels, 21 temporal positions and spatial size $60\times104$.
Both networks use latent patches of size $1\times2\times2$, giving 32,760 tokens in the same spatiotemporal order.
The refiner has 10 blocks of width 1,024, 16 attention heads and an MLP expansion ratio of 4.
Its blocks alternate between self-attention with three-dimensional rotary position embeddings and cross-attention to text and base-model features.
The base model's noise-level embedding modulates each refiner block through adaptive layer normalization.

We read hidden states after base blocks 4, 12, 20 and 30, corresponding to zero-based code indices 3, 11, 19 and 29.
At each token position, we concatenate the four 1,536-dimensional states and fuse them with a $6144\rightarrow1024\rightarrow1024$ MLP with a GELU activation.
We project the 4,096-dimensional text embeddings to the same width, add learned segment embeddings to distinguish text from hidden-state features and concatenate the two token sequences before layer normalization.
The refiner processes the one-step clean prediction and maps its output back to a velocity residual of the original latent shape.
It contains 146.58 million parameters, approximately $10.3\%$ of the 1.419-billion-parameter base transformer.
The combined transformer and refiner contain 1.566 billion parameters, excluding the frozen VAE and text encoder.

% Data sources: datasets/WISA-80K/data/wisa-80k-q4.0-motion0.1.jsonl,
% datasets/OpenVidHD-10pct/openvidhd_10pct_curated20k.jsonl and
% DiffSynth-Studio/examples/wanvideo/model_training/wisa_dataset.py.
\paragraph{Data and preprocessing.}
The final training mixture contains 46,572 WISA-80K clips and 20,000 curated OpenVidHD clips, for 66,572 video--text pairs.
The WISA subset retains clips with visual-quality scores of at least 4.0 and motion scores of at least 0.1.
For WISA, we concatenate the video caption with its physical-law annotation, following the dataset's level-B conditioning.
For OpenVidHD~\citep{nan2025openvid}, we use the supplied caption.
We shuffle the concatenated datasets without additional source reweighting.
Each training example contains 81 frames sampled uniformly across the complete clip, preserving the temporal extent of the depicted event.
Short clips reuse sampled frame indices to obtain the same length.
We resize frames while preserving their aspect ratio and center-crop them to $480\times832$ before VAE encoding.

% Checkpoint lineage: outputs/wan21_t2v_1_3b_wisa_joint_copilot_v3/step-5000
% -> outputs/wan21_t2v_1_3b_wisa_joint_copilot_20260719_224914/step-1500
% -> outputs/wan21_t2v_1_3b_wisa_joint_copilot_20260720_143102/step-5700
% -> outputs/wan21_t2v_1_3b_wisa_joint_copilot_20260724_092738/step-20805.
% The last three stages' args and hardware are in their saved wandb-metadata.json.
\paragraph{Training.}
We jointly train the base model and refiner for five epochs on the processed 66,572-clip dataset using AdamW, with learning rates of $5\times10^{-6}$ and $10^{-4}$, respectively, momentum coefficients $(0.9,0.999)$ and weight decay $0.01$.
The learning rates start at one-third of these values for five scheduler steps and then remain at the nominal rates.
Training uses four NVIDIA H200 GPUs with a global batch size of 16, bfloat16 mixed precision and gradient checkpointing, taking approximately 201 hours including checkpointing.
The base uses the scheduler-weighted flow-matching loss, while the refiner minimizes residual mean squared error plus a cosine-direction loss weighted by $0.1$, with unit overall refiner-loss weight.
All three benchmarks use the resulting paired weights.

% Inference sources: run_meta_shard_*.json under outputs/vbench_t2v/joint_copilot_v3_full,
% outputs/videophy2_t2v/joint_copilot_v3_videophy2_paper590_5candidates_cfg5_pos_s02,
% and outputs/finephyeval_t2v/joint_copilot_v3_finephyeval_5candidates.
% Implementation: DiffSynth-Studio/examples/wanvideo/model_inference/
% Wan2.1-T2V-1.3B-OmniWorld-CopilotInference.py and diffsynth/diffusion/flow_match.py.
\paragraph{Sampling and deployment.}
We generate 81-frame videos at $480\times832$ resolution and save them at 16 frames per second.
Sampling starts from standard Gaussian latent noise and uses 50 Euler updates with noise shift 5 and classifier-free guidance (CFG) scale 5.
For $s_i=1-i/50$, with $i=0,\ldots,50$, the noise grid is $\sigma_i=5s_i/(1+4s_i)$.
We use the fixed Wan negative prompt and apply the refiner to the positive text branch with gain $\kappa=0.2$ when $\sigma\in[0.15,0.92]$.
Let $\bm{v}_{+}$ and $\bm{v}_{-}$ denote the base velocities conditioned on the benchmark prompt and negative prompt, respectively, and let $\hat{\bm{r}}_{+}$ denote the positive-branch residual prediction.
At active steps, the guided velocity is
\begin{equation}
\label{eq:appendix_t2v_cfg}
\bm{v}_{\mathrm{sample}}
=\bm{v}_{-}+5\bigl(\bm{v}_{+}+0.2\hat{\bm{r}}_{+}-\bm{v}_{-}\bigr)
=\bm{v}_{-}+5(\bm{v}_{+}-\bm{v}_{-})+\hat{\bm{r}}_{+}.
\end{equation}
Thus the branch-level gain yields a unit-strength residual contribution to the guided velocity.
We recompute the clean prediction and hidden states from the current latent at every active step.
The base runs in bfloat16 and the refiner in float32, and the frozen VAE decodes the final latent to RGB frames.

\paragraph{VBench.}
We use the released VBench prompt suite and evaluators~\citep{huang2024vbench} to measure 16 dimensions of video quality and text alignment.
Seven dimensions measure visual quality and temporal behavior, and nine measure semantic consistency.
Tables~\ref{tab:vbench_part1} and~\ref{tab:vbench_part2} display dimension scores on a 0--100 scale.
Quality Score and Semantic Score use the benchmark's dimension-specific normalization and weights, and Overall Score combines them with weights $0.8$ and $0.2$, respectively.
Temporal Flickering measures resistance to frame-to-frame flicker, so a larger score indicates better temporal stability.
Dynamic Degree measures the presence of motion through optical flow and complements Motion Smoothness, which evaluates temporal regularity.

\paragraph{VideoPhy2.}
We evaluate the 590-prompt test set, including its 180-prompt HARD subset, with the released VideoPhy2 automatic evaluator~\citep{bansal2026videophy}.
Generation uses the benchmark's expanded captions, while scoring uses the original action captions.
The evaluator assigns ratings from 1 to 5 for physical commonsense (PC) and semantic adherence (SA).
The PC and SA columns report the percentage of ratings at least 4, and Joint reports the percentage satisfying both thresholds.
Joint therefore measures whether the requested event is depicted with plausible physics, combining two requirements that the separate scores assess individually.

% FinePhyEval evaluator: pqsg/scripts/evaluate_finephyeval_candidates.py,
% propagated_category_metrics(), and the saved evaluation summary.
\paragraph{FinePhyEval.}
We evaluate all 65 FinePhyEval prompts using their released question graphs~\citep{pothiraj2026physics}.
These graphs decompose each event into object existence, action execution and physical behavior.
Our local evaluation uses GPT-5.5 to answer the fixed questions from sampled video frames in the two-step PQSG question-answering procedure.
A failed prerequisite propagates failure to dependent questions, so a physics judgment also requires the relevant objects and actions to be present.
Object, Action and Physics are the fractions of satisfied questions in each category after this propagation, with all questions retained in their respective denominators.
Overall is the corresponding fraction across all questions, weighting categories by their question counts.
All four scores lie between 0 and 1.

\paragraph{Comparison methods.}
We compare with the released Wan2.1 models and methods that adapt video generators for motion or physical consistency, including VideoJAM~\citep{chefer2025videojam}, DreamWorld~\citep{tan2026dreamworld}, VPT~\citep{zheng2026enhancing}, VideoREPA~\citep{zhang2025videorepa} and PhysRAG~\citep{cheng2026physrag}, as available for each benchmark.
The model-size suffixes identify the base-generator family, while the refiner parameter count is reported above.
The tables combine published benchmark results with local evaluations of released implementations, retaining method-specific generation settings.

\subsection{Image-to-Video Generation}
\label{sec:appendix_i2v_implementation}

% Model and training sources: model/Wan2.2-TI2V-5B/config.json,
% DiffSynth-Studio/examples/wanvideo/model_training/train_joint_copilot.py,
% DiffSynth-Studio/diffsynth/diffusion/loss.py and video_copilot/model_v3.py.
% Evaluated checkpoint: outputs/wan22_ti2v_5b_wisa_joint_copilot_v3_
% 4gpu_bs1_ga4_resume_s4200_20260814_132628/{step-20805.safetensors,copilot_final.pt}.
% Saved wandb-metadata.json and state-20805/train_state.json record the arguments.
\paragraph{Model configuration.}
We build on Wan2.2-TI2V-5B~\citep{wan2025wan}, whose base transformer has 30 blocks of width 3,072 with 24 attention heads.
As in Appendix~\ref{sec:appendix_t2v_implementation}, we freeze the video VAE and text encoder and update the full base transformer together with the refiner.
The Wan2.2 VAE encodes an 81-frame training video at $704\times1248$ resolution into 48 latent channels with 21 temporal positions and spatial size $30\times52$.
The refiner uses 13 blocks of width 2,048 with 32 attention heads.
Its latent patch size, MLP expansion ratio, alternating attention blocks, hidden-state selection and conditioning design are the same as in text-to-video generation.
To match the larger base features, its fusion MLP maps the concatenated hidden states through $12288\rightarrow2048\rightarrow2048$, and its noise-level projection accepts 3,072-dimensional embeddings.
The refiner predicts a 48-channel velocity residual and is initialized from scratch with a zero-initialized output projection.

\paragraph{Training.}
The training mixture, caption construction, frame sampling and spatial preprocessing are the same as in Appendix~\ref{sec:appendix_t2v_implementation}.
We also use the same five-epoch schedule, global batch size, AdamW configuration, learning rates, loss weights, mixed precision and gradient checkpointing.
For each training example, we enable image conditioning with probability $0.4$ by encoding the first sampled frame with the frozen VAE.
For these examples, we replace the first temporal latent with this clean image latent and assign its tokens noise level zero in the base transformer.
Both the base flow-matching loss and the refiner residual loss are computed over the remaining temporal latents.
The other examples use text conditioning alone.
We detach the residual target and base-model inputs to the refiner, so each network is updated by its own objective.
Training begins on 16 NVIDIA H200 GPUs with one clip per GPU and resumes after 4,200 optimizer steps on four H200 GPUs with four gradient-accumulation steps, preserving the global batch size of 16.
We evaluate the paired base and refiner weights at the end of the fifth epoch.

% Inference source: outputs/physics_iq_verified_ti2v/
% wan22_ti2v_5b_copilot_v3_step20805_physiq_c5_704x1248/raw_videos/run_meta_shard_0.json.
% Implementation: DiffSynth-Studio/examples/wanvideo/model_inference/
% Wan2.1-T2V-1.3B-OmniWorld-CopilotInference.py.
\paragraph{Sampling and image conditioning.}
We generate at $704\times1248$ resolution, using the same video length, frame rate, Euler sampler, noise grid, CFG scale, negative prompt, refiner gain and active noise interval as in Appendix~\ref{sec:appendix_t2v_implementation}.
The positive-branch correction follows \eqref{eq:appendix_t2v_cfg}, and the base and refiner use the same inference precisions as in text-to-video generation.
For each benchmark example, we supply the released image at the start of the continuation and its corresponding text prompt, including the benchmark's fixed-camera instructions and action description.
We resize the image to the generation resolution, encode it with the frozen VAE and keep its latent fixed at the first temporal position.
We restore this condition in the refiner's clean prediction and after every sampling update.

% Evaluation sources in code/physics-IQ-benchmark: physiq/ti2v_pipeline.py,
% physiq/calculate_and_write_metrics_to_csv.py and physiq/calculate_iq_score_stable.py.
\paragraph{Physics-IQ Verified.}
The image-to-video evaluation covers 66 physical scenarios from three camera viewpoints, yielding 198 image--text pairs.
We retain the first 80 frames of each generated video for the benchmark's five-second evaluation at 16 frames per second.
The released evaluator compares generated continuations with recorded reference videos at $540\times960$ resolution, using motion masks and RGB frames without an LLM judge.
Spatial measures the intersection-over-union of regions that exhibit motion anywhere in the clip.
Spatio-temporal measures motion-mask overlap at corresponding frames, while Weighted Spatial compares how frequently each pixel belongs to a moving region.
MSE measures the mean squared difference between generated and reference RGB frames.

For each scenario and viewpoint, the evaluator normalizes these measurements by the corresponding comparison between two real recordings of the same event.
The three overlap metrics are divided by this real-to-real reference, while MSE uses the inverse ratio so that higher is better.
Each normalized component is clipped to $[0,1]$, and Verified Score averages the four components and then the scenario--viewpoint pairs.
Table~\ref{tab:physics_iq_verified} reports all scores on a 0--100 scale. All rows in Table~\ref{tab:physics_iq_verified} use the benchmark's setting without LLM support.

\citet{yuan2026inference} describe two inference-time uses of the WMReward signal: gradient guidance, which steers each denoising trajectory with the reward gradient, and Best-of-N selection, which draws multiple guided samples and keeps the one with the highest reward.
We enable only reward guidance and generate a single sample per condition, without Best-of-N selection.
Because \citet{yuan2026inference} attribute most of their Physics-IQ improvement to the selection step, the WMReward row is expected to score below their guidance-plus-selection results.
This choice keeps the comparison at a matched sampling budget. The other baselines are run using their default configurations as released.

% Keep the shared appendix tables on pages with prose.
% Their placement in other sections is unaffected.
\let\appendixtable\table
\renewcommand{\table}[1][]{\appendixtable[!htb]}
\renewenvironment{table*}[1][]{\begin{table}[!htb]}{\end{table}}

% Modest float spacing lets nearby prose fill the remaining page area.
\setlength{\textfloatsep}{10pt plus 2pt minus 2pt}
\setlength{\floatsep}{10pt plus 2pt minus 2pt}
\setlength{\intextsep}{10pt plus 2pt minus 2pt}
\setlength{\abovecaptionskip}{6pt}

\section{More Results}
\label{sec:appendix_more_results}

\subsection{More Results on ImageNet Generation}
\label{sec:appendix_imagenet_results}

\subsubsection{Quantitative Results}
Table~\ref{tab:imagenet_generation_comparison} extends the comparison in
Table~\ref{tab:imagenet_generation_comparison_compact} with additional methods,
training budgets, base-only evaluations and FID scores.

\begin{table*}[htbp]
    \centering
    \caption{
        Comparison on ImageNet-256~\citep{deng2009imagenet} generation.
        For published baselines, the FID column reports guided FID (gFID) as in the original papers.
        Models are evaluated with and without guidance as indicated.
        For our method, $80+40$ and $80+80$ indicate that we start from the 80-epoch RAEv2 checkpoint and train our refiner for an additional 40 and 80 epochs, respectively.
        ``Ours (base only)'' evaluates the corresponding base model without the refiner at inference and reports the total base-model training epochs.
        Representation Fr\'echet Distance ($\mathrm{FD}_r$) is reported in the feature spaces of ConvNeXt~\citep{liu2022convnet}, DINOv2~\citep{oquabdinov2} and CLIP~\citep{radford2021learning}.
    }
    \label{tab:imagenet_generation_comparison}
    \resizebox{\textwidth}{!}{
    \begin{tabular}{lccccccc}
        \toprule
        \multirow{2}{*}{Method}
        & \multirow{2}{*}{Epochs}
        & \multirow{2}{*}{Guidance}
        & \multirow{2}{*}{IS $\uparrow$}
        & \multirow{2}{*}{FID $\downarrow$}
        & \multicolumn{3}{c}{Representation Fr\'echet Distance ($\mathrm{FD}_r$) $\downarrow$} \\
        \cmidrule(lr){6-8}
        & & & &
        & ConvNeXt & DINOv2 & CLIP \\
        \midrule

        \multicolumn{8}{l}{\textit{Published baselines}} \\

        SiT-XL/2~\citep{ma2024sit}
        & 800 & Yes & 256.7 & 2.12
        & 2.02 & 7.89 & 17.69 \\

        DDT-XL~\citep{wang2026ddt}
        & 800 & Yes & \fst{309.3} & 1.26
        & 1.02 & 4.26 & 13.86 \\

        SiT-XL/2-REPA~\citep{yu2025repa}
        & 800 & Yes & \snd{306.1} & 1.42
        & 1.22 & 4.27 & 12.65 \\

        LightningDiT~\citep{yao2025vavae}
        & 800 & Yes & 294.3 & 1.42
        & 1.09 & 3.76 & 10.21 \\

        REG~\citep{wu2026representation}
        & 800 & Yes & \trd{302.9} & 1.54
        & 1.14 & 3.45 & 10.86 \\

        REPA-E~\citep{leng2025repae}
        & 800 & Yes & 298.3 & 1.17
        & 1.28 & 2.44 & 6.28 \\

        RAE-XL~\citep{zheng2026rae}
        & 800 & Yes & 261.0 & 1.16
        & 1.79 & 2.11 & 7.87 \\

        \midrule
        \multicolumn{8}{l}{\textit{Evaluations without guidance}} \\

        % Source row: noguid-base_ema (80ep).
        RAEv2~\citep{singh2026improved}
        & 80 & No & 225.21 & 1.6038
        & 1.4127 & 2.3965 & 7.3809 \\

        % Source row: noguid-ep120_ema.
        RAEv2~\citep{singh2026improved}
        & 120 & No & 230.53 & 1.5317
        & 1.3050 & 2.2879 & 7.2028 \\

        % Source row: self-flow-ft-noguid-ep120_ema.
        SelfFlow~\citep{chefer2026selfflow}
        & 120 & No & 144.58 & 4.8232
        & 4.8635 & 7.1880 & 12.2924 \\

        % Source row: noguid-ep80+40_ema_copilot.
        \textbf{Ours}
        & 80+40 & No & 231.96 & 1.5582
        & 1.2818 & 2.2635 & 7.2213 \\

        % Source row: noguid-ep160_ema.
        RAEv2~\citep{singh2026improved}
        & 160 & No & 233.94 & 1.5070
        & 1.2753 & 2.2226 & 7.1804 \\

        % Source row: self-flow-ft-noguid-ep160_ema.
        SelfFlow~\citep{chefer2026selfflow}
        & 160 & No & 146.37 & 4.7480
        & 4.8110 & 7.0750 & 12.1329 \\

        % Source row: noguid-ep80+80_ema_copilot.
        \textbf{Ours}
        & 80+80 & No & 234.02 & 1.5367
        & 1.2396 & 2.2051 & 7.1445 \\

        \midrule
        \multicolumn{8}{l}{\textit{Evaluations with guidance}} \\

        RAEv2~\citep{singh2026improved}
        & 80 & Yes & 260.44 & 1.0567
        & 0.7745 & 1.1020 & 5.2134 \\

        % Source row: in256-k7-ft-copilot-ep40_ema.
        RAEv2~\citep{singh2026improved}
        & 120 & Yes & 265.54 & \fst{1.0235}
        & \trd{0.7451} & \snd{1.0732} & 4.8667 \\

        % Source: RAEv2/results/stage2/offline_eval/self_flow_ft_cfg1.5_metrics/imagenet_metrics.csv.
        % CFG scale 1.5, interval [0.0, 0.7]. NPZ ep40/ep80 correspond to 120/160 total epochs.
        % Source row: self-flow-ft-cfg1.5-ep120_ema.
        SelfFlow~\citep{chefer2026selfflow}
        & 120 & Yes & 145.03 & 4.7948
        & 4.8228 & 7.0984 & 12.2674 \\

        % Source row: in256-k7-ft-copilot-ep40_ema_copilot.
        \textbf{Ours}
        & 80+40 & Yes & 264.95 & \trd{1.0446}
        & 0.7475 & 1.0869 & \trd{4.8648} \\

        % Source row: in256-k7-ft-copilot-ep80_ema.
        RAEv2~\citep{singh2026improved}
        & 160 & Yes & 264.97 & \snd{1.0337}
        & \snd{0.7363} & \trd{1.0797} & \snd{4.8549} \\

        % Source row: self-flow-ft-cfg1.5-ep160_ema.
        SelfFlow~\citep{chefer2026selfflow}
        & 160 & Yes & 146.79 & 4.6462
        & 4.7155 & 6.9849 & 12.1602 \\

        % Source row: in256-k7-ft-copilot-ep80_ema_copilot.
        \textbf{Ours}
        & 80+80 & Yes & 266.60 & 1.0593
        & \fst{0.7258} & \fst{1.0339} & \fst{4.8415} \\

        \bottomrule
    \end{tabular}
    }
\end{table*}

\subsubsection{Qualitative Results}
\label{sec:appendix_imagenet_qualitative}

Figures~\ref{fig:imagenet_generation_qualitative_1}
and~\ref{fig:imagenet_generation_qualitative_2} compare selected samples from
Self-Flow, RAEv2 and \method on class-conditional ImageNet generation at
$256\times256$ resolution across eight categories.
In these examples, \method produces clearer object
structures and more recognizable category-specific details, including the lattice
of shoji screens, the heads and handles of hatchets and the gills of axolotls.

% Source: RAEv2/results/stage2/offline_eval/
% qualitative_8categories_no_tricycle_20260916_172158/.
% Training budgets confirmed by the user: all base DiTs 160 epochs, our refiner 80.
% Only the two full-height image-to-video panels below use float-only pages.
% Other figures may move with the text to fill the available space.
\begin{figure}[!htb]
    \centering
    \includegraphics[width=\textwidth]{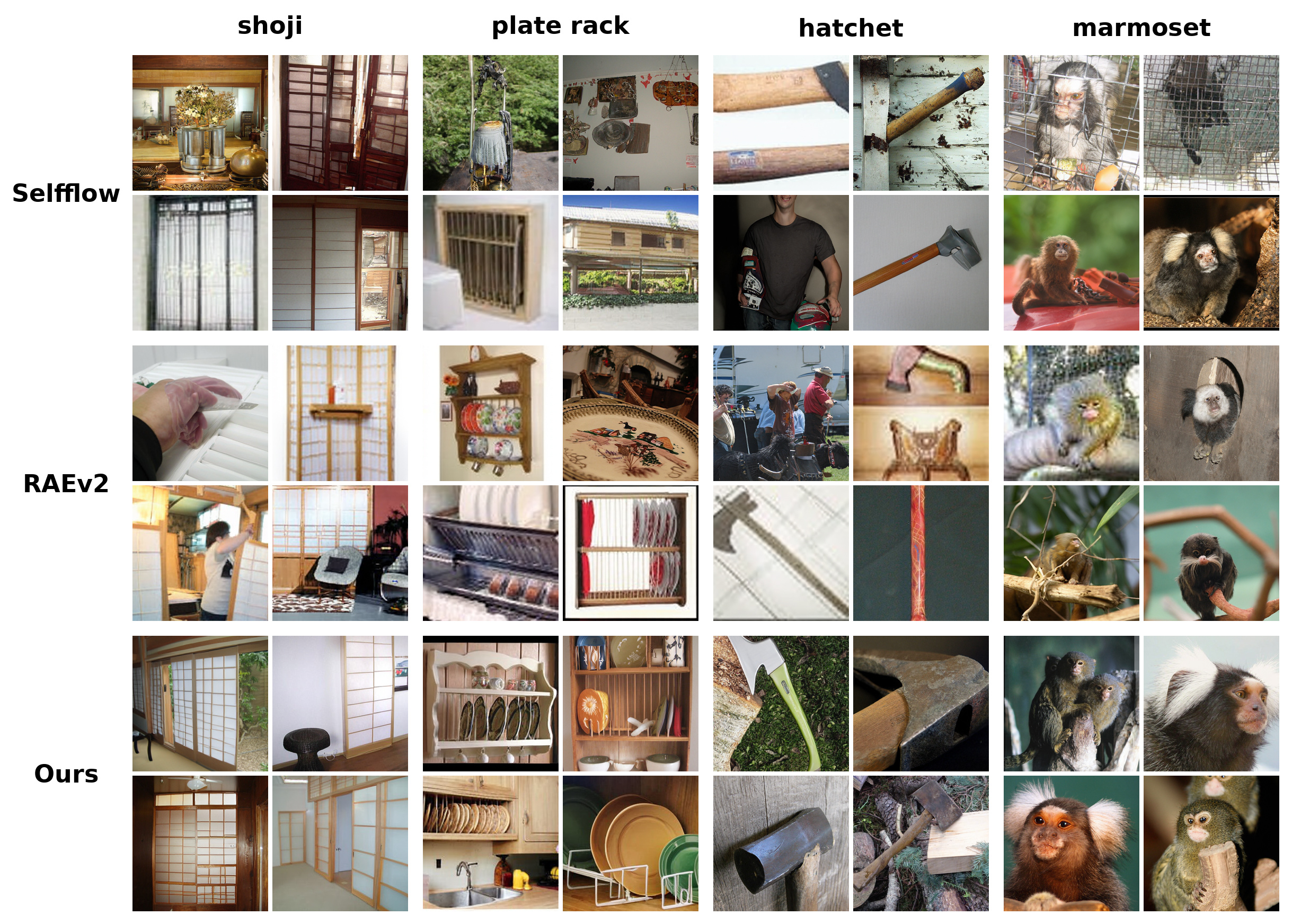}
    \caption{
        Qualitative results on ImageNet at $256\times256$ resolution.
        Columns show shoji, plate rack, hatchet and marmoset.
        Row groups show Self-Flow (Selfflow), RAEv2 and \method (Ours).
        Each method--category block contains four samples in a $2\times2$ grid,
        giving 48 images in total.
        All three base DiTs are trained for 160 epochs.
        The \method refiner is trained for 80 epochs.
    }
    \label{fig:imagenet_generation_qualitative_1}
\end{figure}

\begin{figure}[!htb]
    \centering
    \includegraphics[width=\textwidth]{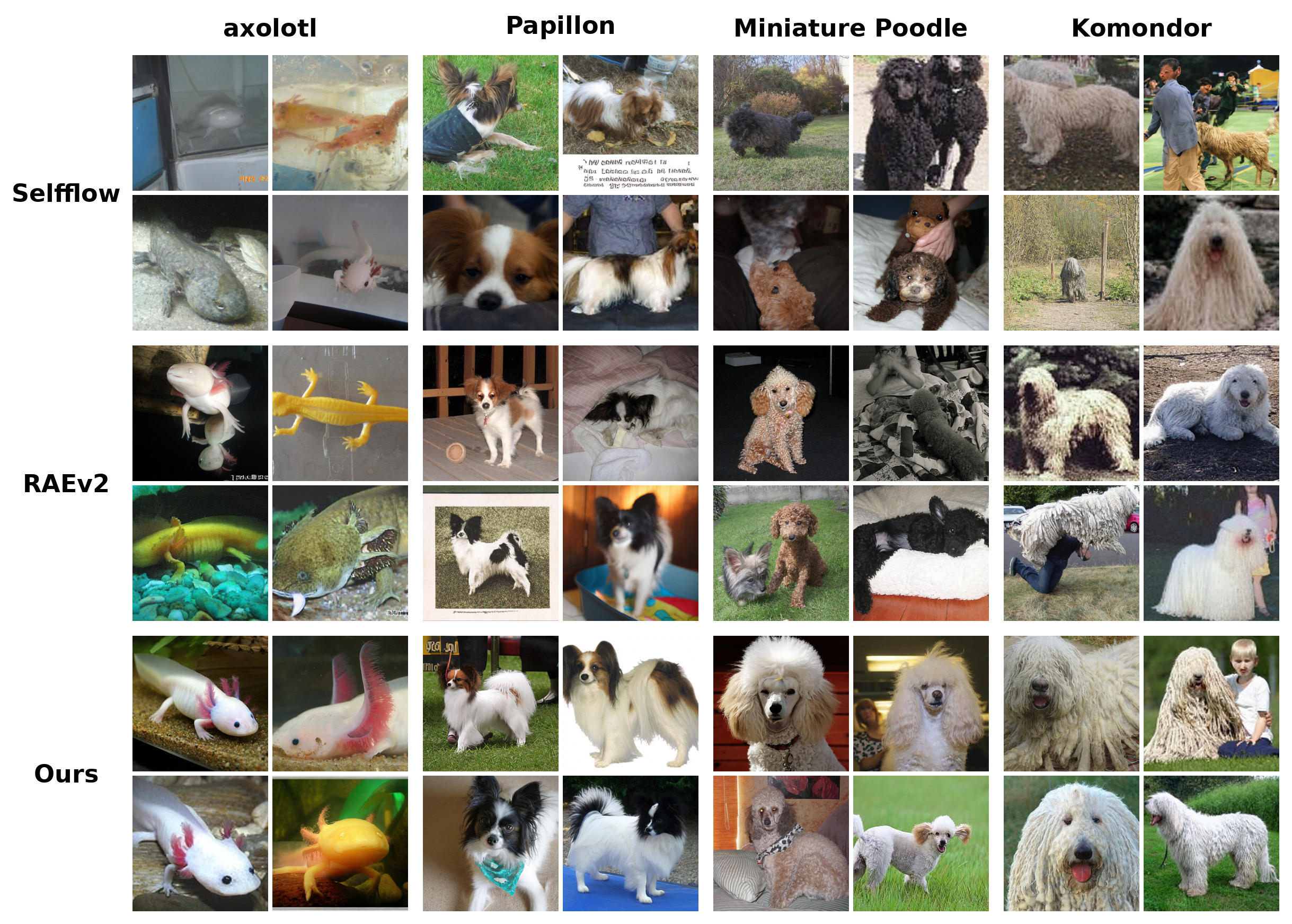}
    \caption{
        Qualitative results on ImageNet at $256\times256$ resolution.
        Columns show axolotl, Papillon, Miniature Poodle and Komondor.
        Row groups show Self-Flow (Selfflow), RAEv2 and \method (Ours).
        Each method--category block contains four samples in a $2\times2$ grid,
        giving 48 images in total.
        All three base DiTs are trained for 160 epochs.
        The \method refiner is trained for 80 epochs.
    }
    \label{fig:imagenet_generation_qualitative_2}
\end{figure}

At the same 160-epoch base-model training budget, \method improves all three $\mathrm{FD}_r$ metrics over RAEv2 both with and without guidance, consistent with the clearer shoji lattices, hatchet geometry and axolotl gills in Figures~\ref{fig:imagenet_generation_qualitative_1} and~\ref{fig:imagenet_generation_qualitative_2}.
With guidance, IS increases from $264.97$ to $266.60$ and DINOv2 $\mathrm{FD}_r$ decreases from $1.0797$ to $1.0339$, although FID worsens from $1.0337$ to $1.0593$ (Table~\ref{tab:imagenet_generation_comparison}).
The base model exhibits a similar discrepancy: continuing RAEv2 fine-tuning from 120 to 160 total epochs raises guided FID from $1.0235$ to $1.0337$, while ConvNeXt and CLIP $\mathrm{FD}_r$ continue to decrease.
These discrepancies suggest practical saturation of FID as a perceptual ranking signal in this regime near 1, where small score differences are an unreliable standalone indicator of generation quality.
FID compares only the means and covariances of Inception-v3 features, so improvements in object structure and fine detail need not yield a lower score.
\citet{Jayasumana_2024_CVPR} document disagreements between FID and human judgments and show that FID can fail to track progressive visual improvements, while \citet{singh2026improved} observe that incremental gFID gains provide little practical signal for strong RAEv2 models.
The consistent improvements across complementary representation spaces, together with the visual examples, support the benefit of learned residual corrections beyond what FID alone captures.

\subsection{More Results on Text-to-Video Generation}
\label{sec:appendix_t2v_results}

\subsubsection{Quantitative Results}

Tables~\ref{tab:vbench_part1} and~\ref{tab:vbench_part2} report the full VBench
results for the methods in Table~\ref{tab:vbench_compact}, covering all 16
evaluation dimensions and the quality, semantic and overall scores.

\begin{table*}[htbp]
    \centering
    \caption{VBench results (Part 1/2): consistency, motion and object-level metrics. All scores are reported on a 0--100 scale.}
    \label{tab:vbench_part1}
    \resizebox{\textwidth}{!}{%
        \begin{tabular}{@{}l*{10}{c}@{}}
            \toprule
            Method & \shortstack{Subject\\Consistency} & \shortstack{Background\\Consistency} & \shortstack{Temporal\\Flickering} & \shortstack{Motion\\Smoothness} & \shortstack{Dynamic\\Degree} & \shortstack{Aesthetic\\Quality} & \shortstack{Imaging\\Quality} & \shortstack{Object\\Class} & \shortstack{Multiple\\Objects} & \shortstack{Human\\Action} \\
            \midrule
            VideoREPA-5B~\citep{zhang2025videorepa} & \trd{95.47} & 95.10 & 98.23 & \snd{98.54} & 47.22 & 57.19 & 59.80 & 67.56 & 40.32 & \fst{83.00} \\
            PhysRAG-5B~\citep{cheng2026physrag} & \snd{96.31} & 95.66 & 98.71 & 96.93 & \fst{81.94} & 56.80 & \trd{65.56} & \trd{80.50} & \fst{75.00} & 63.00 \\
            Wan 2.1-1.3B~\citep{wan2025wan} & 91.83 & 94.71 & \snd{99.17} & 96.51 & 65.00 & 57.24 & 59.14 & 76.09 & 53.66 & 74.60 \\
            VideoJAM-1.3B~\citep{chefer2025videojam} & 91.51 & \trd{96.01} & \trd{99.13} & 96.05 & \trd{73.88} & \trd{58.41} & 62.68 & 79.22 & 59.66 & \trd{79.20} \\
            DreamWorld-1.3B~\citep{tan2026dreamworld} & 93.62 & 94.95 & 98.81 & \trd{98.07} & \snd{79.16} & \snd{59.59} & \snd{66.76} & \snd{81.32} & \trd{65.03} & \snd{81.20} \\
            VPT-1.3B~\citep{zheng2026enhancing} & 92.65 & \fst{97.27} & 98.64 & 97.85 & 70.83 & 58.09 & 63.68 & 73.43 & 51.92 & 74.80 \\
            \midrule
            \textbf{\method (Ours)-1.3B} & \fst{96.68} & \snd{96.95} & \fst{99.81} & \fst{98.85} & 63.54 & \fst{61.65} & \fst{68.66} & \fst{82.81} & \snd{67.72} & 78.00 \\
            \bottomrule
        \end{tabular}%
    }
    \vspace{2pt}
\end{table*}

\begin{table*}[htbp]
    \centering
    \caption{VBench results (Part 2/2): appearance, style and overall scores. All scores are reported on a 0--100 scale.}
    \label{tab:vbench_part2}
    \resizebox{\textwidth}{!}{%
        \begin{tabular}{@{}l*{9}{c}@{}}
            \toprule
            Method & Color & \shortstack{Spatial\\Relationship} & Scene & \shortstack{Appearance\\Style} & \shortstack{Temporal\\Style} & \shortstack{Overall\\Consistency} & \shortstack{Quality\\Score} & \shortstack{Semantic\\Score} & \shortstack{Overall\\Score} \\
            \midrule
            VideoREPA-5B~\citep{zhang2025videorepa} & 86.12 & 51.85 & \fst{32.27} & 21.37 & 22.71 & 24.20 & 79.96 & 63.53 & 76.67 \\
            PhysRAG-5B~\citep{cheng2026physrag} & \fst{93.95} & \snd{77.33} & 15.55 & 22.00 & 23.18 & 23.18 & \trd{83.07} & 68.12 & \trd{80.08} \\
            Wan 2.1-1.3B~\citep{wan2025wan} & 89.93 & 62.37 & 20.03 & \fst{22.91} & 22.23 & 24.25 & 79.81 & 65.43 & 76.93 \\
            VideoJAM-1.3B~\citep{chefer2025videojam} & 88.92 & 66.17 & \trd{28.34} & \snd{22.67} & \snd{24.03} & \fst{25.01} & 81.18 & \trd{69.08} & 78.76 \\
            DreamWorld-1.3B~\citep{tan2026dreamworld} & \snd{92.61} & 70.47 & \snd{29.71} & \trd{22.14} & \trd{23.82} & \snd{24.87} & \snd{83.49} & \fst{70.89} & \snd{80.97} \\
            VPT-1.3B~\citep{zheng2026enhancing} & 86.99 & \trd{73.35} & 20.13 & 20.87 & 22.52 & 23.42 & 82.26 & 64.91 & 78.79 \\
            \midrule
            \textbf{\method (Ours)-1.3B} & \trd{91.71} & \fst{77.88} & 25.15 & 20.98 & \fst{24.05} & \trd{24.31} & \fst{84.69} & \snd{70.55} & \fst{81.86} \\
            \bottomrule
        \end{tabular}%
    }
    \vspace{2pt}
\end{table*}

\paragraph{Visual quality and temporal consistency.}
\method achieves the highest VBench Quality Score ($84.69$) and Overall Score ($81.86$) among the compared methods.
Relative to Wan2.1-T2V-1.3B, these gains are $4.88$ and $4.93$ points, respectively.
The improvements span both individual-frame quality and consistency over time.
Imaging Quality increases from $59.14$ to $68.66$, Aesthetic Quality from $57.24$ to $61.65$ and Subject Consistency from $91.83$ to $96.68$.
Temporal Flickering and Motion Smoothness also improve, reaching $99.81$ and $98.85$.
Together, these metrics indicate that the adapted model improves the appearance of generated frames while maintaining more coherent subjects and smoother temporal evolution.
Compared with DreamWorld, the strongest alternative by Overall Score in this table, \method improves Quality Score by $1.20$ points and Overall Score by $0.89$ points.

\paragraph{Semantic alignment and composition.}
\method improves prompt alignment across object identity, composition and action.
Relative to Wan2.1-T2V-1.3B, Object Class rises from $76.09$ to $82.81$, Multiple Objects from $53.66$ to $67.72$ and Spatial Relationship from $62.37$ to $77.88$.
These improvements support more accurate rendering of the requested entities and their arrangement, and Semantic Score increases from $65.43$ to $70.55$.
Human Action increases from $74.60$ to $78.00$, and Temporal Style from $22.23$ to $24.05$.
The gains therefore extend from the identity and spatial arrangement of objects to the actions and temporal style specified by the prompt.

\paragraph{Physical plausibility with semantic adherence.}
On VideoPhy2 (Table~\ref{tab:physics_generation}), \method improves the ALL-set PC score from $57.8$ to $65.7$ and SA from $30.0$ to $31.3$ relative to Wan2.1-T2V-1.3B.
Joint increases by $2.2$ percentage points to $27.0$, the highest score in the comparison.
This Joint score also exceeds those of Wan2.1-14B ($24.7$) and VideoREPA ($23.6$).
On HARD, \method raises PC from $36.7$ to $51.9$ and Joint from $5.6$ to $6.9$, corresponding to gains of $15.2$ and $1.3$ percentage points.
The improvements on both test splits show that more generated videos depict the requested event while satisfying physical commonsense.

\paragraph{Object, action and physics requirements.}
FinePhyEval provides a complementary decomposition of these gains.
Relative to Wan2.1-T2V-1.3B, \method improves Object from $0.84$ to $0.90$, Action from $0.49$ to $0.55$, Physics from $0.36$ to $0.44$ and Overall from $0.54$ to $0.60$.
It ties DreamWorld on Object and obtains the highest Action and Overall scores among the compared methods.
Its Overall score of $0.60$ also exceeds the $0.59$ achieved by Wan2.1-14B.
The question-graph metric rewards physical behavior together with the objects and actions on which it depends.
The gains across all three categories therefore support more complete realization of the requested physical events, complementing the visual and temporal improvements measured by VBench.

\subsubsection{Qualitative Results}

Figures~\ref{fig:appendix_t2v_objects}, \ref{fig:appendix_t2v_paddling_racing} and~\ref{fig:appendix_t2v_parasailing} provide additional qualitative comparisons of VideoREPA, VPT, PhysRAG and \method (Ours).
The five examples cover domino interactions, multiple-object composition, canoe paddling, jetski racing and parasailing.
For each case, rows follow this method order from top to bottom, and columns show five frames sampled at evenly spaced relative positions across each video, progressing from left to right.
The text below each case gives the corresponding prompt description.

\begin{figure}[!htb]
    \centering
    \includegraphics[width=\textwidth]{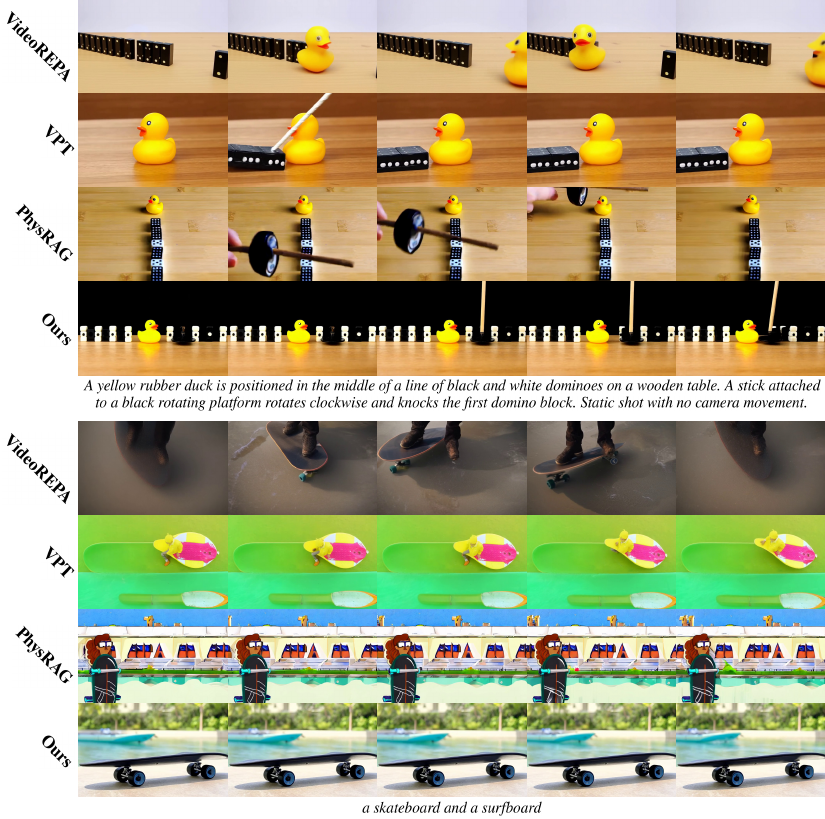}
    \caption{
        \textbf{Qualitative results on text-to-video generation.}
        The upper case shows a rubber duck among dominoes, and the lower case shows a skateboard and a surfboard.
        Within each case, rows show VideoREPA, VPT, PhysRAG and Ours from top to bottom.
        Each row contains five frames sampled at evenly spaced relative positions across the video, ordered from left to right.
    }
    \label{fig:appendix_t2v_objects}
\end{figure}

\begin{figure}[!htb]
    \centering
    \includegraphics[width=\textwidth]{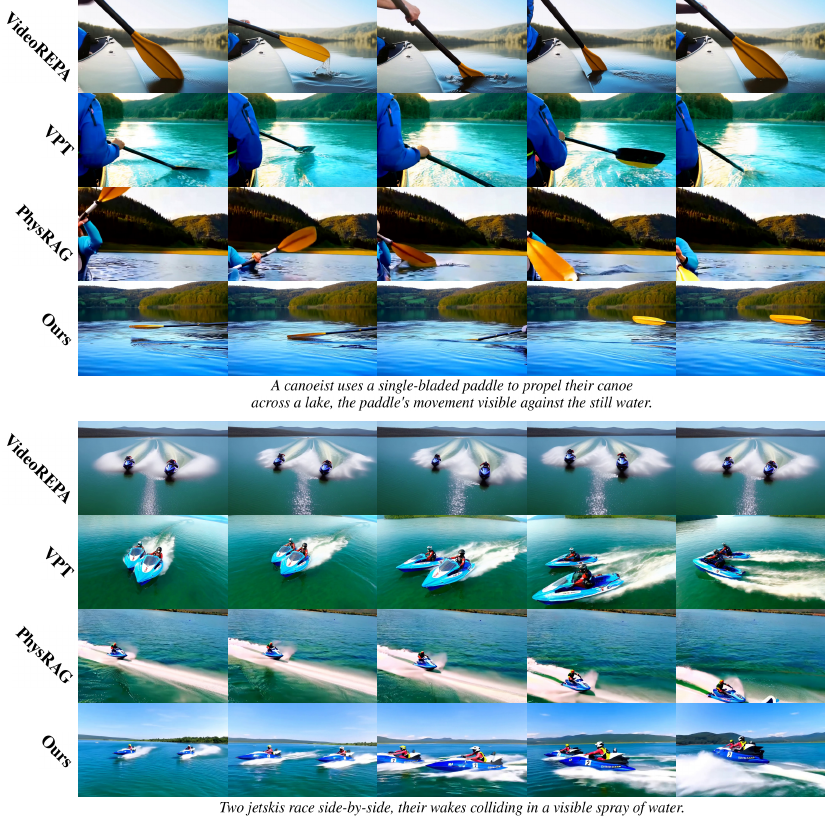}
    \caption{
        \textbf{Qualitative results on text-to-video generation.}
        The upper case shows a canoeist paddling across a lake, and the lower case shows two jetskis racing side by side.
        Within each case, rows show VideoREPA, VPT, PhysRAG and Ours from top to bottom.
        Each row contains five frames sampled at evenly spaced relative positions across the video, ordered from left to right.
    }
    \label{fig:appendix_t2v_paddling_racing}
\end{figure}

\begin{figure}[!htb]
    \centering
    \includegraphics[width=\textwidth]{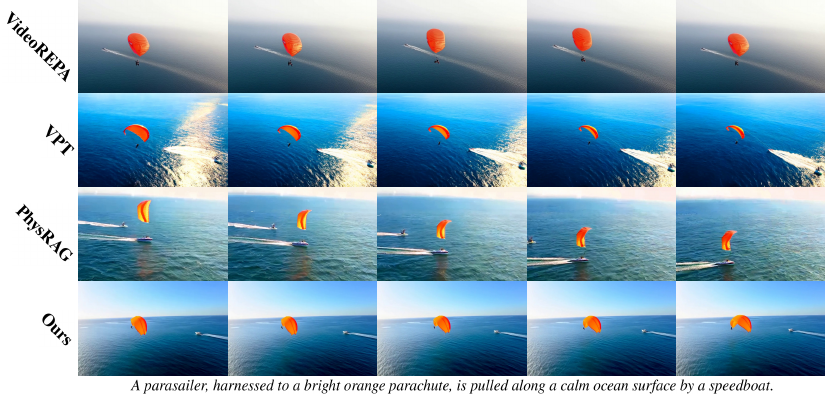}
    \caption{
        \textbf{Qualitative results on text-to-video generation.}
        The prompt describes a parasailer with an orange parachute being pulled across the ocean by a speedboat.
        Rows show VideoREPA, VPT, PhysRAG and Ours from top to bottom.
        Each row contains five frames sampled at evenly spaced relative positions across the video, ordered from left to right.
    }
    \label{fig:appendix_t2v_parasailing}
\end{figure}

\subsection{More Results on Image-to-Video Generation}
\label{sec:appendix_i2v_results}

\subsubsection{Quantitative Results}

\paragraph{Aggregate physical consistency.}
\method achieves the highest Verified Score in Table~\ref{tab:physics_iq_verified}, reaching $30.95$ compared with $27.71$ for Wan2.2-5B.
The gain is $3.24$ points over the base model and $4.47$, $3.52$ and $5.41$ points over WMReward, VLIPP and PhysRAG, respectively.
This score combines motion-region agreement, temporal overlap and frame reconstruction after normalization by variation between real recordings.

\paragraph{Motion coverage and frame appearance.}
Relative to Wan2.2-5B, \method increases Spatial from $35.89$ to $43.43$ and Weighted Spatial from $22.96$ to $24.74$.
These gains indicate closer agreement with the reference in both the regions traversed by moving objects and the fraction of time each region exhibits motion.
MSE Score increases from $25.54$ to $29.24$, indicating improved agreement in RGB frame appearance under the benchmark's normalization.
\method leads the compared methods on all three metrics, supporting the aggregate improvement across complementary aspects of the generated continuation.

\subsubsection{Qualitative Results}

Figure~\ref{fig:i2v_qualitative_comparison} compares \method with WMReward and PhysRAG under the same image and text conditions.
In the moving-light example, \method retains a recognizable statuette silhouette across the displayed frames as its shadow changes position.
The mandarin example shows the knife and fruit remaining identifiable through the slicing motion.
These examples illustrate object and shadow consistency alongside the motion and appearance metrics above.

Figures~\ref{fig:appendix_i2v_water} and~\ref{fig:appendix_i2v_reflections} extend this comparison to three additional scenarios, with ground-truth videos (GT) alongside WAN, VLIPP, WMReward, PhysRAG and \method.
The paper-in-water example shows how each method depicts the paper after release and preserves the surrounding scene, while the mirror examples show the relationship between object motion and its reflection.
Each row presents five video frames in temporal order.

Figure~\ref{fig:appendix_i2v_ball_motion} adds a bouncing tennis ball and a blue ball rolling from a table onto glass, with the same six rows per case.
These examples show how each model continues object motion from the initial frame.
Red boxes mark the visible balls and adjacent reflections in the four baseline methods, while GT and \method remain unmarked.

\begin{figure}[!htb]
    \centering
    \includegraphics[width=\textwidth]{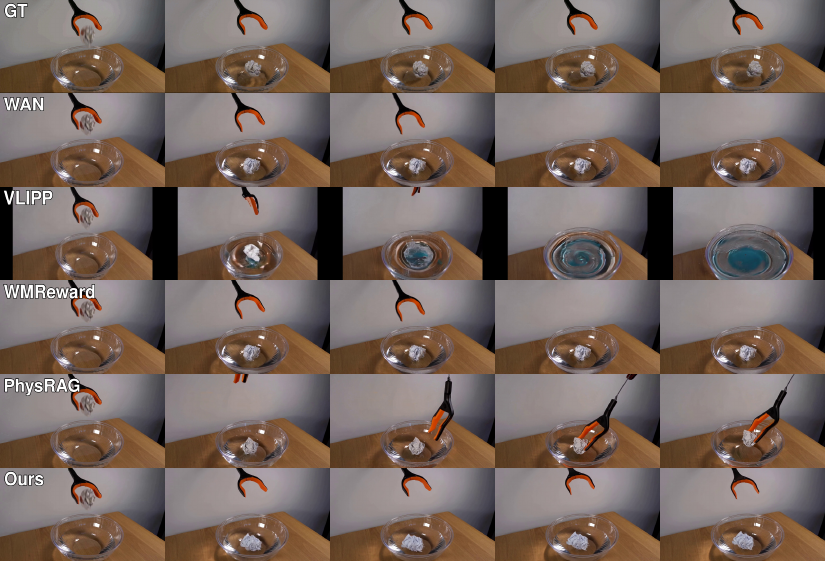}
    \caption{
        \textbf{Additional image-to-video comparison on Physics-IQ Verified: paper falling into water.}
        A grabber releases crumpled paper into a bowl of water.
        Rows show GT, WAN, VLIPP, WMReward, PhysRAG and \method (Ours).
        Each row contains five video frames in temporal order.
    }
    \label{fig:appendix_i2v_water}
\end{figure}

\begin{figure}[p]
    \centering
    \includegraphics[width=\textwidth]{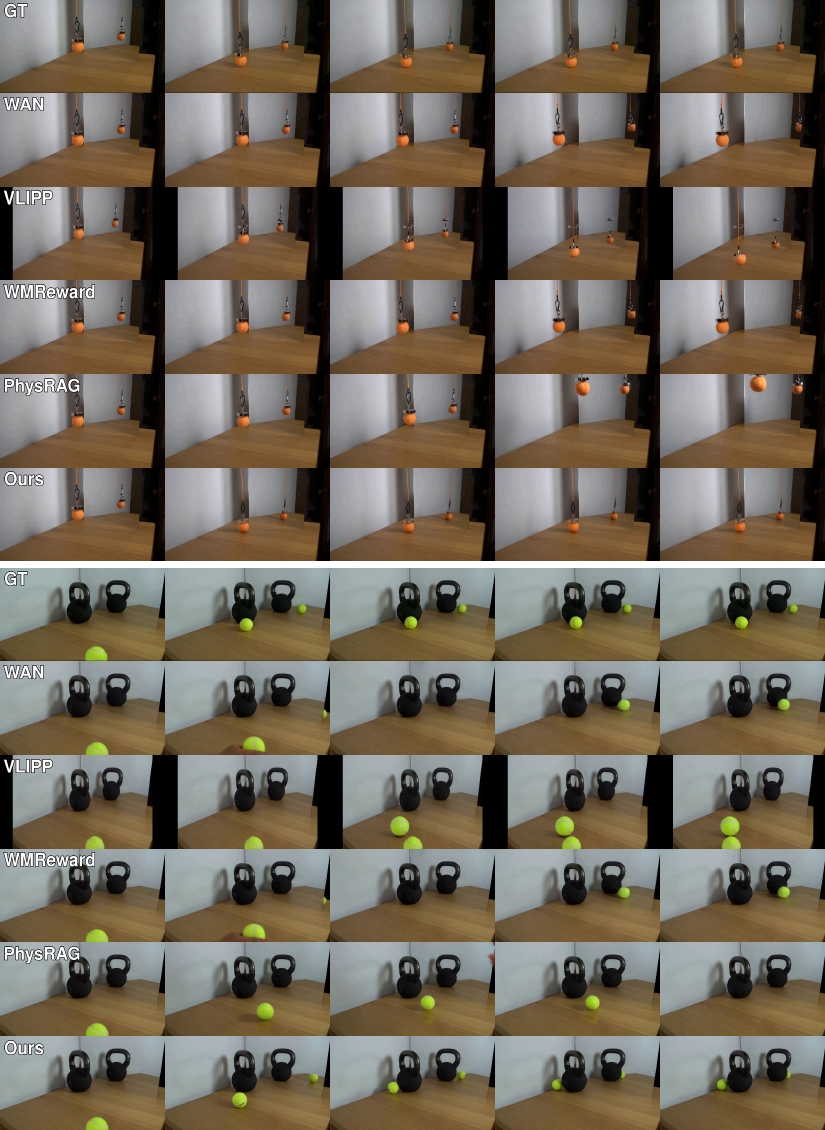}
    \caption{
        \textbf{Additional image-to-video comparisons on Physics-IQ Verified: motion and reflections.}
        The upper case shows a suspended tennis ball being lowered in front of a mirror, and the lower case shows a tennis ball rolling toward a kettlebell beside a mirror.
        Within each case, rows show GT, WAN, VLIPP, WMReward, PhysRAG and \method (Ours).
        Each row contains five video frames in temporal order.
    }
    \label{fig:appendix_i2v_reflections}
\end{figure}

\begin{figure}[p]
    \centering
    \includegraphics[width=\textwidth]{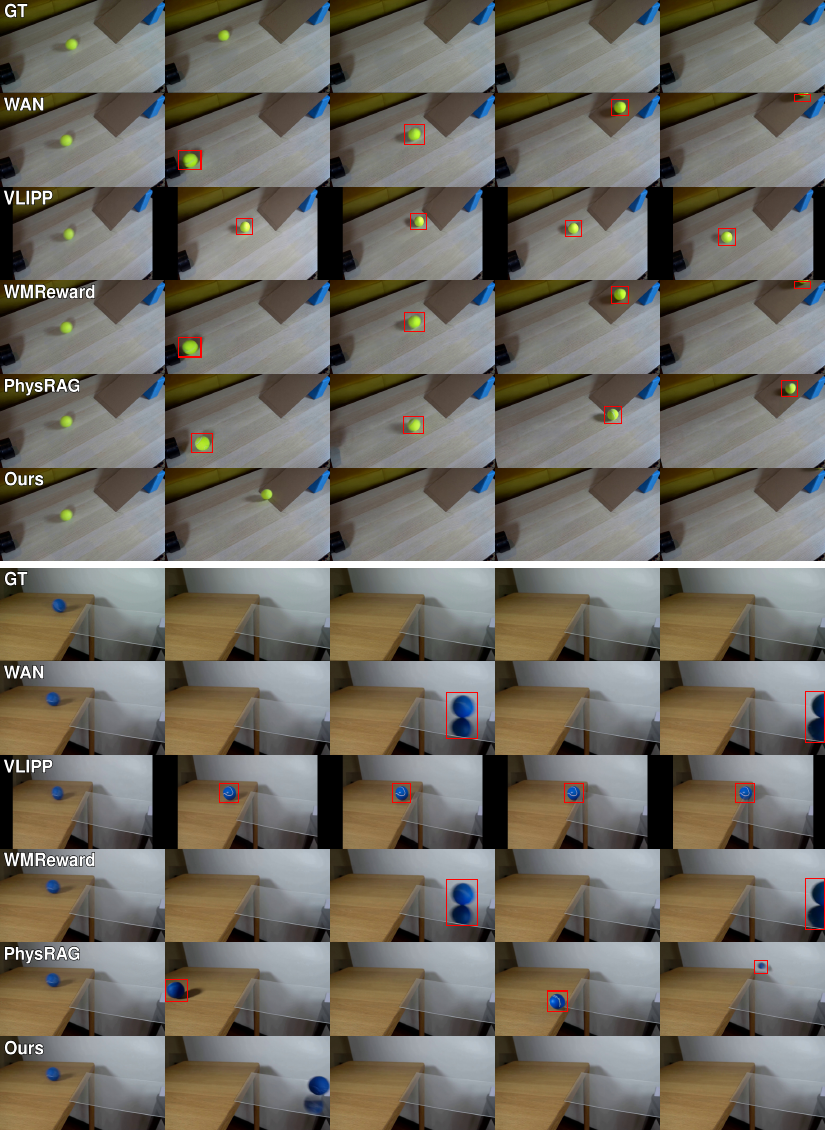}
    \caption{
        \textbf{Additional image-to-video comparisons on Physics-IQ Verified: ball motion.}
        The upper case shows a bouncing tennis ball near a ramp, and the lower case shows a blue ball rolling from a table onto glass.
        Within each case, rows show GT, WAN, VLIPP, WMReward, PhysRAG and \method (Ours).
        Each row contains five video frames in temporal order.
        Red boxes highlight visible balls and adjacent reflections in the four baselines after the initial frame.
    }
    \label{fig:appendix_i2v_ball_motion}
\end{figure}

\subsection{Full-Precision Ablation Results}
\label{sec:appendix_ablation_results}

Table~\ref{tab:ablation_full} reports all ablations from Section~\ref{sec:ablation} on the original 0--1 scale with four decimal places.
Table~\ref{tab:ablation} displays the same scores as percentages with three significant digits, while retaining the ranking colors determined at four-decimal precision.
Rounding can make distinct scores appear tied. For example, the Imaging Quality scores of $0.6369$ and $0.6368$ both display as $63.7\%$, and the Quality Scores of $0.7980$ and $0.7979$ both display as $79.8\%$.

% Original four-decimal scores from tables/mistakeflow_ablation.tex,
% preserved before converting the main-text table to rounded percentages.
\begin{table}[!htbp]
    \centering
    \caption{Component ablations at four-decimal precision (0--1, higher is better). Column abbreviations match Table~\ref{tab:ablation}. Colors indicate the top three scores in each column.}
    \label{tab:ablation_full}
    \resizebox{\textwidth}{!}{%
    \begin{tabular}{lcccccc}
        \toprule
        Method
        & \shortstack{Aesthetic\\Quality (AQ)}
        & \shortstack{Background\\Consistency (BC)}
        & \shortstack{Imaging\\Quality (IQ)}
        & \shortstack{Overall\\Consistency (OC)}
        & \shortstack{Subject\\Consistency (SC)}
        & \shortstack{Quality\\Score (QS)} \\
        \midrule
        w/o time-step
        & \trd{0.5147} & 0.9236 & \snd{0.6369} & 0.5850 & 0.9120 & 0.7970 \\

        w/o self-attention
        & 0.4892 & 0.9213 & 0.6043 & 0.5812 & 0.9000 & 0.7854 \\

        w/o cross-attention
        & 0.4848 & 0.9144 & 0.6066 & 0.5821 & 0.9022 & 0.7846 \\

        w/o hidden-state MLP
        & 0.4928 & 0.9238 & 0.6359 & 0.5817 & 0.9086 & 0.7928 \\

        w/o refiner
        & 0.5129 & \trd{0.9249} & 0.6204 & \trd{0.5914} & \trd{0.9126} & 0.7863 \\

        $\bm{v}$ input
        & 0.5038 & 0.9238 & \trd{0.6368} & \snd{0.6021} & \snd{0.9174} & \snd{0.7980} \\

        w/o condition
        & \snd{0.5154} & \snd{0.9274} & 0.6123 & 0.5812 & 0.9102 & \trd{0.7979} \\

        \textbf{Ours}
        & \fst{0.5260}
        & \fst{0.9359}
        & \fst{0.6425}
        & \fst{0.6179}
        & \fst{0.9260}
        & \fst{0.8026} \\
        \bottomrule
    \end{tabular}%
    }
\end{table}

\section{Reproducibility Statement}
\label{sec:appendix_reproducibility}

To support reproducibility, Section~\ref{sec:methodology} specifies the refiner architecture, residual training objective and inference procedure of \method.
Appendix~\ref{sec:appendix_implementation_details} documents the pretrained models, training data, preprocessing, optimization hyperparameters, hardware and sampling settings for image, text-to-video and image-to-video generation, together with the benchmark protocols and metric definitions.
Section~\ref{sec:ablation} describes the ablation setup, and Appendix~\ref{sec:appendix_ablation_results} provides the corresponding scores at full precision.
The assumptions and complete proofs underlying the theoretical results in Section~\ref{sec:theory} are provided in Appendices~\ref{p1:sec:setup}, \ref{p2:app:matched-resource-proof} and~\ref{p2:app:deployment}.

\end{document}